\documentclass{article}

\usepackage{mlmodern}
\usepackage[T1]{fontenc}

\usepackage{url}
\usepackage{mathtools}
\usepackage{amsmath, amssymb, amsfonts, bm, amsthm, bbm,xfrac}
\usepackage{thmtools,thm-restate}
\usepackage{xcolor}
\usepackage{enumitem}
\usepackage{authblk}
\usepackage[margin=1in]{geometry}
\usepackage{natbib}
\usepackage{placeins}
\setcitestyle{authoryear,round,citesep={;},aysep={,},yysep={;}}
\usepackage{graphicx}
\usepackage{subcaption}

\usepackage{hyperref}
\usepackage{cleveref}

\hypersetup{
  colorlinks=true,
  linkcolor=blue,
  citecolor=blue,
  urlcolor=blue
}

\numberwithin{equation}{section}
\theoremstyle{definition}
\newtheorem{theorem}{Theorem}
\newtheorem{proposition}{Proposition}
\newtheorem{lemma}{Lemma}[section]
\newtheorem{corollary}{Corollary}

\newtheorem{remark}{Remark}

\newcommand{\RR}[0]{\mathbb{R}}

\newcommand{\EE}[0]{\mathbb{E}}

\DeclareMathOperator{\diag}{diag}

\DeclareMathOperator*{\argmin}{arg\,min} 

\newcommand{\C}[0]{\mathbb{C}}

\crefname{theorem}{Theorem}{Theorems}
\crefname{lemma}{Lemma}{Lemmas}
\crefname{proposition}{Proposition}{Propositions}
\crefname{corollary}{Corollary}{Corollaries}
\crefname{definition}{Definition}{Definitions}
\crefname{remark}{Remark}{Remarks}
\crefname{figure}{Figure}{Figures}

\author[1]{Arie Wortsman}
\author[2]{Federica Gerace}
\author[1]{Bruno Loureiro}
\author[3]{Yue M. Lu}

\affil[1]{\small Departement d'Informatique, \'Ecole Normale Sup\'erieure, PSL \& CNRS}
\affil[2]{\small Department of Mathematics, University of Bologna,
Piazza di Porta San Donato 5, 40126, Bologna (BO), Italy}
\affil[3]{\small Applied Mathematics, Harvard John A. Paulson School of Engineering and Applied Sciences}

\title{A Random Matrix Theory of Masked Self-Supervised Regression}

\date{}

\begin{document}

\maketitle

\begin{abstract}
In the era of transformer models, masked self-supervised learning (SSL) has become a foundational training paradigm. A defining feature of masked SSL is that training aggregates predictions across many masking patterns, giving rise to a joint, matrix-valued predictor rather than a single vector-valued estimator. This object encodes how coordinates condition on one another and poses new analytical challenges. We develop a precise high-dimensional analysis of masked modeling objectives in the proportional regime where the number of samples scales with the ambient dimension. Our results provide explicit expressions for the generalization error and characterize the spectral structure of the learned predictor, revealing how masked modeling extracts structure from data. For spiked covariance models, we show that the joint predictor undergoes a Baik–Ben Arous–P\'{e}ch\'{e} (BBP)-type phase transition, identifying when masked SSL begins to recover latent signals. Finally, we identify structured regimes in which masked self-supervised learning provably outperforms PCA, highlighting potential advantages of SSL objectives over classical unsupervised methods.
\end{abstract}

\section{Introduction}
Self-supervised learning (SSL) --- a training paradigm in which models learn useful representations from unlabeled data by exploiting the data itself as a source of supervision --- has emerged as a foundational component of the recent success of transformer architectures. By avoiding the need for manual annotations, SSL retains many of the benefits traditionally associated with supervised learning while avoiding reliance on labeled data. Consequently, SSL is widely adopted as a pretraining paradigm for learning general-purpose representations that substantially accelerate the optimization of downstream tasks, especially in data-scarce settings.

A canonical example of a self-supervised learning task is masked language modeling (MLM), in which a neural network is trained to predict masked tokens in text using the remaining tokens as contextual information \citep{devlin2018bert, howard2018universal, radford2018improving, brown2020language, chatgpt}. For example, given the sentence ``\textit{The capital of France is Paris}'', a typical MLM task would be to teach the model to infer that we are speaking about the capital of a country from the context ``France'' and ``Paris'' from the masked sentence ``\textit{The [MASK] of France is Paris}''. Masked language modeling is the core training framework of LLMs in the BERT family \citep{devlin2019bert, liu2019roberta}, where it is used to train state-of-the-art NLP encoders for which bi-directional context is crucial, as well as in vision transformers \citep{bao2021beit, he2022masked}.   

Recent works \citep{rende2024mapping,rende2024distributional} have numerically shown that, when trained with MLM, transformers learn the conditional distribution of a missing token given the observed ones by sequentially inferring positional and semantic relationships across the entire text. However, several fundamental questions remain open. For instance: \emph{How much data is required to achieve good generalization performance in MLM? How does this performance depend on the underlying structure of the data, and in particular on the correlations among tokens?}

In this work, we investigate masked self-supervised learning in its simplest declination: learning real-valued sequence data with a matrix-valued linear predictor. More precisely, given $n$ real-valued sequences $x_1, \dots, x_n \in \RR^{d}$, where $d$ corresponds to the length of the sequence, we train a family of ridge-regularized linear predictors $X \mapsto X\hat{A} $ under the constraint that no coordinate of the sequence can be used to predict itself (this will be made precise in the next section). In this context, \textbf{our main contributions} are: 
\begin{itemize}
\item \textbf{Asymptotic limit for training and generalization errors:} We derive a sharp asymptotic characterization of the training and generalization performance of the aggregate predictor $\hat{A}$ in the high-dimensional regime where $n,d \to \infty$ with a fixed ratio $\sfrac{n}{d} \to \alpha > 0$. The resulting asymptotic limit depends only on the population covariance of the data, yielding a precise and interpretable description of which structural properties of the data enable successful generalization in masked self-supervised regression (SSR).

\item \textbf{Deterministic equivalent for the predictor:}
Beyond performance metrics, we establish a high-dimensional deterministic equivalent for (the resolvent of) the matrix-valued aggregate predictor $\hat{A}$ itself, which allows us to characterize how self-supervised learning encodes and exploits the underlying data geometry. Technically, this result relies on the analysis of a random matrix defined as an aggregate of a diverging number of correlated predictors. This matrix does not belong to standard random-matrix ensembles, and our analysis may be of independent interest.

\item \textbf{Spiked vs AR(1):}
We further provide an in-depth analysis of two prototypical statistical models. In a \textbf{spiked covariance model}, we show that principal component analysis (PCA) strictly outperforms masked self-supervised learning, and we establish a BBP-type transition in the asymptotic spectrum of $\hat{A}$ occurring at the same threshold as the corresponding transition for the sample covariance matrix. In sharp contrast, for a \textbf{first-order autoregressive model}, we prove that masked self-supervised regression can strictly dominate PCA in performance if the number of directions in PCA is not close to the dimension. We further provide a detailed analysis of the effective inductive bias induced by strong temporal correlations, elucidating the mechanisms by which self-supervision leverages sequential structure.
\end{itemize}
Together, our results elucidate the mechanisms by which masked self-supervised learning exploits correlations in the training data, and provide a principled comparison between SSR and classical spectral approaches, revealing their respective strengths and limitations in the high-dimensional regime.

\paragraph{Further Related Works: } Deterministic equivalents for sample covariance matrices have a long history in the random matrix theory literature, starting with the seminal work of \citep{marvcenko1967distribution} and followed by several generalizations to the anisotropic settings \citep{bai2008large, burda2004signal,knowles2017anisotropic,rubio2011spectral,louart2018concentration,chouard2022quantitative}.

In the context of statistics and machine learning theory, RMT techniques have been used to derive insight into a manifold of high-dimensional models, including linear regression \citep{DW18, wu2020optimal, HMRT22, CM24}, kernels \citep{hu2024asymptotics,xiao2022precise, misiakiewicz2024non, lu2025equivalence}, random features \citep{pennington2017nonlinear,louart2018random,liao2018spectrum, mei2022generalization,d2020double,fan2020spectra,benigni2021eigenvalue,schroder2023deterministic,schroder24asymptotics, DLM24,atanasov2024scaling, vigeant2026dyson}, one-step analysis of feature learning \citep{ba2022high,moniri2023theory,dandi25random} and in-context learning \citep{lu2024context}. In the unsupervised context, RMT has been used to study mainly PCA \citep{BBAP05, RW20} \citep{RW20} and clustering methods \citep{CL22}. More recently, RMT techniques were used to investigate sequential and time series data in \citep{ilbert2024analysing}, and a matrix-valued least-squares problem in \citep{fan2024kronecker}.

\paragraph{Notation:} Given a matrix $X \in \RR^{n \times d}$,  we consistently index by $i,j \in [n]$ the rows of $X$, and by $k, l\in[d]$ its columns, e.g. $X=(x_{ik})_{\substack{1\leq i\leq n\\ 1\leq k\leq d}}$. We denote by $X_{i}\in\mathbb{R}^{d}$ the $i$th row of $X$ and $X_{k}\in\mathbb{R}^{n}$ the $k$th column of $X$. In the same way, for $i\in[n]$, we denote by $X^{(-i)}\in\mathbb{R}^{(n-1)\times d}$ the matrix obtained by removing the $i$th row, and by $X^{(-k)}\in\mathbb{R}^{n\times (d-1)}$ the matrix obtained by removing the $k$th column. 
\section{Setting}
\label{sec:setting}
Let $x_1, \dots, x_n \in \RR^{d}$ denote $n$ data points in $\RR^{d}$, for $n,d \in \mathbb{N}$, and let $X =[x_1, \dots, x_n]^{\top} \in \RR^{n \times d}$ denote the data matrix. Here, each $x_{i}$ can be thought as a sequence of (real valued) tokens or pixels of length $d$, in the case of text or image data, for instance. 

As motivated in the introduction, we are interested in the problem of masked self-supervised learning: given a sequence $x \in \RR^{d}$ and an arbitrary coordinate $k \in [d]$ (the ``\emph{mask}''), the goal is to predict the masked coordinate $x_k$ from the remaining  $[d]\setminus \{k\}
$ coordinates. Here, we focus on the simplest instance of this problem where we perform self-supervised ridge regression with regularization $\lambda>0$:
\begin{align}
    \hat{a}_{k} & =  \underset{\substack{a\in\mathbb{R}^{d}\\ a_{k}=0}}{\argmin}\frac{1}{n}||X_{k}-Xa||_{2}^{2}+\lambda||a||_{2}^{2},
    \label{eq:coordinate_wise_regression}
\end{align}
where the constraint $a_k = 0$ makes sure that we do not use the $k$th coordinate to predict itself.
\begin{remark}
    The problem in \cref{eq:coordinate_wise_regression} corresponds exactly to the regularized maximum likelihood problem for the case in which the sequence $x=(x_{1},\dots,x_{d})$ is jointly Gaussian and the variance is fixed. 
\end{remark}
\Cref{eq:coordinate_wise_regression} is a strictly convex problem for $\lambda>0$ --- denote by $\hat{a}_{k}$ its minimizer for each $k \in [d]$. We can define the following aggregate prediction matrix: 
\begin{equation}
    \hat{A}: = [\hat{a}_1, \hat{a}_2, \dots, \hat{a}_d] \in \RR^{d \times d}. 
    \label{eq:A_d_different_estimators}
\end{equation}
Note that by construction $\mathrm{diag}(\hat{A})=0$. This matrix, which we will refer to as the \emph{self-supervised ridge matrix} (SSR matrix), contains aggregated information about the correlations in the training data $X$. 
\begin{remark}[Relationship with factored attention]
    As discussed in \citep{rende2024mapping}, for this real-valued sequence task the matrix-valued predictor $\hat A\in\mathbb{R}^{d\times d}$ can be interpreted as the attention map of a single layer of \emph{factored self-attention}, with an untrained value matrix fixed to the identity. Despite its simplicity, factored attention has been empirically shown to outperform standard attention mechanisms in some contexts, including protein contact prediction \citep{bhattacharya2020single} and the approximation of ground states of many-body quantum systems \citep{viteritti2022transformer,rende2023simple,viteritti2023transformer}. 
\end{remark}
In the following, we will be mainly interested in two questions: (i) what properties of the data are captured by the SSR matrix $\hat{A}$? (ii) How well does $\hat{A}$ perform reconstruction of the masked entries at a fixed training data budget, as quantified by the total reconstruction population risk:
\begin{equation}
    L(\hat{A}) = \frac{1}{d}\mathbb{E}_x\left[ \|x - \hat{A}^{\top}x \|^2 \right ],
    \label{eq:gen_error_expression}
\end{equation}
where the expectation is taken over a new sequence $x \in \mathbb{R}^d$ that is not in the training set. 
\begin{remark}
    The total reconstruction error is also commonly used as the generalization error in other unsupervised learning tasks, in particular for PCA [see, e.g \citep{RW20} and  \citep{EHEM25}].
\end{remark} 
Complementary, we also define the total reconstruction empirical risk:
\begin{equation}
    \hat{L}_{n}(\hat{A}) = \frac{1}{nd}||X-XA||^{2}_{F}, 
    \label{eq:train_error}
\end{equation}
where $X \in \mathbb{R}^{n \times d}$ is the training data matrix in \Cref{eq:coordinate_wise_regression}. We will refer to \Cref{eq:gen_error_expression} and \Cref{eq:train_error} as the generalization and training errors, respectively. 

Our main results provide sharp answers to both questions (i) and (ii) in the \emph{high-dimensional limit}, where the number of samples $n$ and the sequence length $d$ grow to infinity proportionally, with $\sfrac{n}{d}\to\alpha = \Theta_{d}(1)$ denoting the sample complexity.  More precisely, we will derive a \emph{deterministic equivalent} for the resolvent of the SSR matrix $\hat{A}$ and asymptotic limits of the corresponding risks $L(\hat{A})$, $\hat{L}_{n}(\hat{A})$. These results allow us to precisely characterize how the aggregated predictor encodes the geometry of the underlying data distribution, and how this geometry governs the total reconstruction error.

\section{Main Results}
Our first result concerns the structural properties of the aggregated predictor matrix $\hat A$ defined in \cref{eq:A_d_different_estimators}. While vector-valued ridge estimators of the form \cref{eq:coordinate_wise_regression} have been extensively analyzed in the random matrix theory literature, the aggregate estimator studied here exhibits fundamentally different behavior. In particular, although each coordinate-wise minimizer $\hat a_k$ solves a standard ridge regression problem, all such estimators are trained on the same data matrix $X$, inducing strong statistical dependencies across coordinates. As a consequence, existing RMT results that treat each $\hat a_k$ in isolation are insufficient to characterize the joint behavior of the matrix-valued estimator $\hat A$, its spectrum, and the associated reconstruction risk. A global analysis that explicitly accounts for these correlations is therefore required. To that end, we first derive a joint expression for the minimizer $\hat{A}$: 

\begin{lemma}\label[lemma]{lemma:explicit_expression_A}
    For $\lambda >0$, let $\hat{\Sigma} = \frac{1}{n} X^{\top} X \in \RR^{d \times d}$ denote the sample-covariance matrix and let 
    \[
    Q(\lambda) = (\hat{\Sigma} + \lambda I_{d})^{-1}
    \]
    denote the resolvent of $\hat{\Sigma}$.  Then
    \[
    \hat{A} = I_{d} - Q(\lambda) \Lambda, \qquad \Lambda =[\mathrm{diag} (Q(\lambda)) ]^{-1}.
    \]
\end{lemma}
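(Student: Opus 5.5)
The plan is to write down the first-order optimality conditions of the constrained ridge problem \cref{eq:coordinate_wise_regression} for each column $k$ via a Lagrange multiplier on the constraint $a_k=0$, and then stack the $d$ conditions into a single matrix identity. The objective is $\frac{1}{n}\|X_k - Xa\|^2 + \lambda\|a\|^2$, and its gradient in $a$ is $2(\hat\Sigma + \lambda I_d)a - \frac{2}{n}X^\top X_k = 2(\hat\Sigma+\lambda I_d)a - 2\hat\Sigma e_k$, where $e_k$ is the $k$th canonical basis vector. Since the problem is strictly convex with a linear equality constraint, the KKT conditions characterize the unique minimizer. They say there is a scalar $\mu_k$ with
\[
(\hat\Sigma + \lambda I_d)\hat a_k - \hat\Sigma e_k = \mu_k e_k, \qquad e_k^\top \hat a_k = 0 .
\]

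The key trick is to write $\hat\Sigma e_k = (\hat\Sigma+\lambda I_d)e_k - \lambda e_k$. Multiplying the stationarity condition by $Q(\lambda)$ then gives
\[
\hat a_k = e_k - (\lambda+\mu_k)\, Q(\lambda) e_k =: e_k - c_k Q(\lambda)e_k .
\]
Imposing the constraint $e_k^\top \hat a_k = 0$ gives $1 - c_k Q(\lambda)_{kk} = 0$, so $c_k = 1/Q(\lambda)_{kk}$. This requires $Q(\lambda)_{kk}\neq 0$, which holds because $Q(\lambda)$ is positive definite for $\lambda>0$ ($\hat\Sigma\succeq 0$). So $Q(\lambda)_{kk}>0$, and $\Lambda$ is well-defined with positive diagonal.

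Finally, I collect the columns. The $k$th column of $I_d - Q(\lambda)\Lambda$ is $e_k - Q(\lambda)e_k \Lambda_{kk} = e_k - Q(\lambda)e_k/Q(\lambda)_{kk}$, which equals $\hat a_k$. This yields $\hat A = I_d - Q(\lambda)\Lambda$. As a consistency check, the diagonal is $1 - Q_{kk}/Q_{kk} = 0$.

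There is no real obstacle here. The only point needing care is justifying the Lagrange/KKT step. An alternative that avoids multipliers is to restrict to the coordinates $[d]\setminus\{k\}$ and use the Schur complement/block-inverse formula for $Q(\lambda)$, which reproduces $\hat a_k$ off the $k$th entry. I would use the multiplier argument, since convexity plus the affine constraint makes it immediate.
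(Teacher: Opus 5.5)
Your proof is correct and follows essentially the same route as the paper: a Lagrange multiplier for the zero-diagonal constraint (you work column by column, the paper uses a diagonal multiplier matrix for the stacked problem), elimination of the multiplier through the constraint, and the identity $Q(\lambda)\hat\Sigma = I_d - \lambda Q(\lambda)$, which you apply up front and the paper applies at the end to simplify $D(\lambda)+\lambda I_d$. One cosmetic slip: with your sign convention the coefficient is $c_k=\lambda-\mu_k$, not $\lambda+\mu_k$; this is harmless because $c_k$ is then fixed by the constraint anyway.
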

We refer the reader to Appendix~\ref{appendix_preliminaries} for the proof. 
\begin{remark}
    Note that the matrix $\hat{A}$ is, in general, not symmetric as $Q(\lambda)$ and $\Lambda$ do not commute in general. However,  by applying the similarity transformation $S \to \Lambda^{\frac{1}{2}} S \Lambda^{-\frac{1}{2}}$ to the matrix $Q(\lambda) \Lambda$, we obtain the matrix $\Lambda^{\frac{1}{2}}Q(\lambda) \Lambda^{\frac{1}{2}}$, which is a symmetric matrix. Then $\hat{A}$ has the same spectrum as $I_d -  \Lambda^{\frac{1}{2}}Q(\lambda) \Lambda^{\frac{1}{2}}$, and therefore has a real-valued spectrum. 
    \label{rmk:real_eigenvalues_A}
\end{remark}
\Cref{lemma:explicit_expression_A} shows that $\hat A$ depends on the data only through the resolvent of the empirical covariance matrix. This representation naturally suggests an approach based on random matrix theory. The main technical challenge, however, is that this dependence appears through the product $Q(\lambda)\Lambda$, where $\Lambda$ is a diagonal matrix defined as a nonlinear function of the resolvent. Although $\Lambda$ admits a deterministic equivalent in the high-dimensional limit, its finite-dimensional dependence on $Q(\lambda)$ precludes a direct application of standard random matrix results. Our analysis therefore proceeds by first establishing a sharp concentration of $\Lambda$ around its deterministic limit, and then leveraging a linearization argument to control the resulting structured product and characterize the behavior of the aggregated predictor.

\subsection{The Generalization and Training Errors}
With the description of $\hat{A}$ in \Cref{lemma:explicit_expression_A}, we can characterize the generalization properties of the SSR predictor. A first natural statistical question is whether this is a consistent estimator, i.e. whether the estimator $\hat{A}$ reaches vanishing generalization error in the classical statistical limit where $n\to\infty$ at fixed $d=\Theta_{n}(1)$. The answer is negative: To see this, recall that given $k \in [d]$, we can always decompose 
$x_{k} = \mathbb{E}[x_{k} | x^{(-k)}] + (x_k - \mathbb{E}[x_{k} | x^{(-k)}])$, and $\mathbb{E}[x_{k} | x^{(-k)}]$ is the best estimator of $x_k$ given the rest of the coordinates. Then, if $x_{k} - \mathbb{E}[x_{k} | x^{(-k)}]$ is non-zero, there will always be a part that is not predictable by the rest of the coordinates. As an example, if the data were sampled from a Gaussian distribution with a full-rank covariance, then $x_{k} - \mathbb{E}[x_{k} | x^{(-k)}]$ is always non-zero. 
\begin{lemma}[Approximation error]
    \label[lemma]{lemma:approximation_error}
    Assume $\Sigma:=\mathbb{E}[xx^{\top}] \succ 0$. Then, the approximation error is given by:
    \begin{align}
        L_{\rm App} := \underset{\substack{A\in\mathbb{R}^{d\times d}\\ {\rm diag}(A)=0}}{\min} L(A) = \frac{1}{d}\mathrm{Tr} (\Sigma^{-1} [\diag(\Sigma^{-1})]^{-2}) = \dfrac{1}{d} \sum_{k \le d} \frac{1}{(\Sigma^{-1})_{kk}}. 
    \end{align}
    and it is attained by $A_\mathrm{App} : = I_{d} -  \Sigma^{-1}[\diag(\Sigma^{-1})]^{-1}$. 
\end{lemma}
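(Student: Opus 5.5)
The plan is to decouple the problem column by column and then solve each column as a constrained quadratic minimization with the population covariance. Writing $A=[a_1,\dots,a_d]$, we have $x-A^\top x$ with $k$th coordinate $x_k-a_k^\top x$. Hence
\[
L(A)=\frac1d\sum_{k\le d}\mathbb{E}\big[(x_k-a_k^\top x)^2\big]=\frac1d\sum_{k\le d}\big(\Sigma_{kk}-2a_k^\top\Sigma e_k+a_k^\top\Sigma a_k\big).
\]
The constraint $\mathrm{diag}(A)=0$ reads $a_k^\top e_k=0$ for each $k$ separately. The minimization therefore splits into $d$ independent problems, one for each $a_k$.

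For fixed $k$, I would write $b_k:=e_k-a_k$. Then the constraint $a_k^\top e_k=0$ becomes $e_k^\top b_k=1$, and the objective becomes $b_k^\top\Sigma b_k$. Each subproblem is therefore
\[
\min_{b\in\RR^d,\; e_k^\top b=1} b^\top\Sigma b .
\]
This is a strictly convex problem because $\Sigma\succ0$. Lagrange multipliers (or Cauchy--Schwarz in the $\Sigma$-inner product, via $1=e_k^\top b=(\Sigma^{-1}e_k)^\top\Sigma b$) give the unique minimizer $b_k=\Sigma^{-1}e_k/(\Sigma^{-1})_{kk}$. The minimal value is $1/(\Sigma^{-1})_{kk}$. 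Note that $(\Sigma^{-1})_{kk}>0$ because $\Sigma^{-1}\succ0$, so the division is legitimate.

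Stacking the columns gives $I_d-A_{\rm App}=\Sigma^{-1}[\diag(\Sigma^{-1})]^{-1}$, which is the stated minimizer. Averaging over $k$ gives $L_{\rm App}=\frac1d\sum_k 1/(\Sigma^{-1})_{kk}$. For the trace form, I would compute
\[
\mathrm{Tr}\big((I-A)^\top\Sigma(I-A)\big)=\mathrm{Tr}\big(D^{-1}\Sigma^{-1}D^{-1}\big),\qquad D=\diag(\Sigma^{-1}).
\]
Cyclicity turns this into $\mathrm{Tr}(\Sigma^{-1}D^{-2})=\sum_k(\Sigma^{-1})_{kk}/(\Sigma^{-1})_{kk}^2$, which matches the sum form.

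There is no genuine obstacle here. The only points needing care are the correct transpose convention (columns of $A$ are the predictors $\hat a_k$, matching \cref{eq:A_d_different_estimators}) and the positivity of $(\Sigma^{-1})_{kk}$. The computation mirrors the proof of \Cref{lemma:explicit_expression_A}, with $\hat\Sigma+\lambda I$ replaced by $\Sigma$.
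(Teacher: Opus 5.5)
Your proof is correct and is essentially the paper's argument. The paper writes a single matrix Lagrangian with a diagonal multiplier $\Lambda$, which decouples exactly into your $d$ column problems, and solving stationarity together with $\diag(A)=0$ yields the same $A_{\rm App}=I_d-\Sigma^{-1}[\diag(\Sigma^{-1})]^{-1}$. Your Cauchy--Schwarz variant is a small improvement: it certifies global optimality and uniqueness directly, whereas the paper's stationarity computation relies implicitly on convexity. It also keeps the $x-A^\top x$ convention consistent, which the paper's written proof does not.
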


\begin{remark}
    The approximation error defined above is the best achievable error in the SSR hypothesis: a linear predictor in the form of a square matrix with zero-diagonal.
\end{remark}

\begin{remark}
    For Gaussian Data, the Approximation error coincides with the sum of the Bayes errors for each coordinate prediction in \Cref{eq:coordinate_wise_regression}. To see this, note that for each $k$, we have that the Bayes error of the predictor $\hat{a}_k$ in \Cref{eq:coordinate_wise_regression} is achieved by the estimator $\mathbb{E}\left[x_{k} | x_{-k}\right ]$. When $x \sim \mathcal{N}(0,\Sigma)$, for $\Sigma \succ 0$, we have: 
    \begin{equation}
        \mathbb{E} \left [ (x_{k} - \mathbb{E}\left[x_{k} | x_{-k}\right ])^{2} \right ] = \mathrm{Var}(x_{k}| x_{k-1}) = \dfrac{1}{(\Sigma^{-1})_{k,k}},
    \end{equation}
    and the sum of the Bayes errors for each coordinate becomes
    the expression in \Cref{lemma:approximation_error}. 
\end{remark}

\begin{figure}[t]
    \centering
    \includegraphics[width=0.7\linewidth]{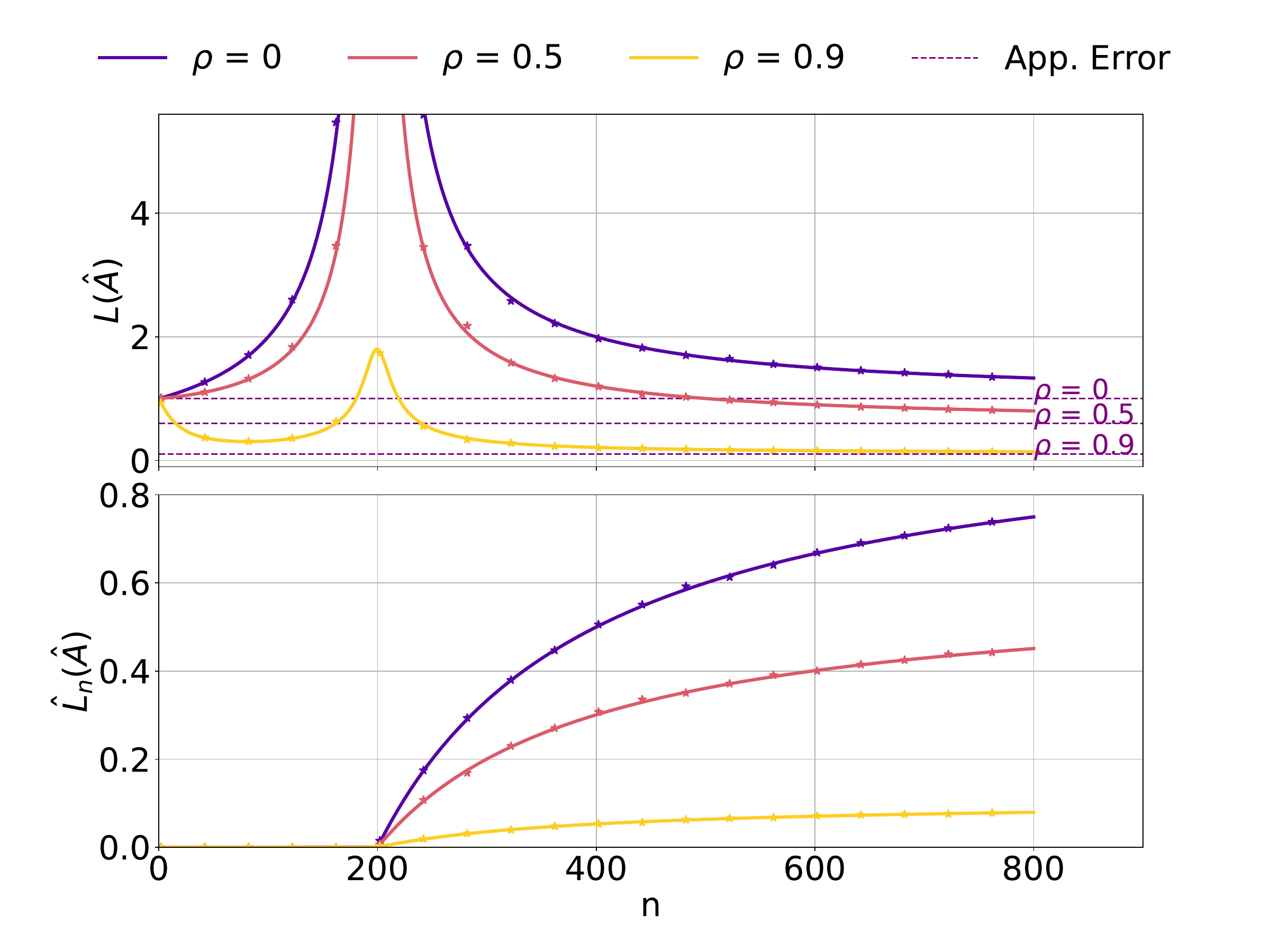}
    \caption{Generalization and Training Error for the SSR estimator for Gaussian data with a Toeplitz covariance $\Sigma_{i,j} = \rho^{|i-j|}$, for different values of $\rho$. The value $\rho=0$ corresponds to $\Sigma=I_d$. Solid lines correspond to the asymptotic limit, while dots correspond to the empirical error. In all curves, $d = 200$ and $\lambda = 10^{-4}$.} 
    \label{fig:deterministic_equivalent_gen__and_train_error}
\end{figure}

\Cref{lemma:approximation_error} shows that in general the lowest achievable error is non-zero. Moreover, note that in the particular case where $\Sigma$ is a diagonal matrix, the best possible estimator is $\hat{A}\equiv 0$. This is intuitive, as when coordinates are not correlated there is no information on a coordinate in the remaining. This highlights that structure is fundamental for successful SSR, and raises the question of what properties of the data geometry are particularly important. To get an intuition for this question, consider the spectral decomposition of the population covariance $\Sigma = \sum_{k=1}^{d}\lambda_{k}u_{k}u_{k}^{\top}$, and note the approximation error can be rewritten as:
\begin{equation}
    L_\mathrm{App} = \frac{1}{d}\sum_{k=1}^{d} \dfrac{1}{\lambda_{k}} u_{k}^{\top} [\mathrm{diag}(\Sigma^{-1})]^{-2} u_{k}. 
\end{equation}
Taking inspiration from the case of Gaussian data, we have that for $k \in [d]$, $\mathrm{Var}(x_k | x_{-k}) = \sfrac{1}{(\Sigma^{-1})_{kk}}$. Hence, the matrix $[\mathrm{diag}(\Sigma^{-1})]^{-1}$ can be interpreted as a measure of how predictable (or explainable) each coordinate is with respect to the rest. With this interpretation, we see that the quadratic form $u_{k}^{\top} [\mathrm{diag}(\Sigma^{-1})]^{-2} u_{k}$ penalizes the magnitude of the vectors $u_{k}$ with this variance. Therefore, this quantity is lower when the coordinates of the eigenvectors are delocalized, and $\mathrm{Var}(x_k | x_{-k})$ is large for each $k \in [d]$. 

Now that we understand what the best achievable error is, we move to addressing the performance of SSR at finite sample complexity. Our first main result is to show that the risk achieved by the SSR predictor $L(\hat{A})$ admits a fully deterministic characterization in the proportional high-dimensional limit where $d\to\infty$ with $n=\Theta(d)$, depending only on the population covariance $\Sigma$, the asymptotic ratio $\sfrac{n}{d}\to\alpha$ and the regularization strength $\lambda>0$.
\begin{theorem}[Asymptotic Limit of the risk]
    Let $X = Z \Sigma^{\frac{1}{2}} \in \RR^{n \times d}$, with $\Sigma \succeq 0$, and $\| \Sigma \|_\mathrm{op}$ bounded, and $Z$ having independent entries, with mean $0$, variance $1$, and $4 + \varepsilon$-th bounded moments. Let $\mathrm{df}_1^{\Sigma} := \mathrm{Tr}(\Sigma(\Sigma+\kappa I_d)^{-1})$ and $\mathrm{df}_2^{\Sigma} := \mathrm{Tr}(\Sigma^2(\Sigma+\kappa I_d)^{-2})$ denote the degrees of freedom.   Let $\kappa_{\star}(\lambda)$ denote the (unique) solution of the self-consistent equation: 
    \begin{equation}
        \kappa = \lambda + \frac{\kappa}{n} \mathrm{df}_{1}^{\Sigma}(\kappa).
        \label{eq:equation_kappa_lambda}
    \end{equation}
     Then, as $n,d\to\infty$ at fixed $\sfrac{n}{d}\to\alpha=\Theta_{d}(1)$, the normalized generalization error converges in probability to:
    \begin{align*}
    L(\hat{A}) & \to  L_1 \left (1 +  \dfrac{\mathrm{df}^{\Sigma}_2(\kappa_{\star}(\lambda)) }{n - \mathrm{df}^{\Sigma}_2(\kappa_{\star}(\lambda))} \right ) 
    \end{align*}
    where 
    \begin{align*}
        L_1  = \frac{1}{d}\mathrm{Tr} \left [ \bar{D}^2 (\Sigma + \kappa_{\star}(\lambda) I_{d})^{-1} \Sigma (\Sigma + \kappa_{\star}(\lambda) I_{d})^{-1}\right ], 
    \end{align*}
    and 
    \begin{align*}
        \bar{D} = \bar{D}(\Sigma, \kappa_{\star}(\lambda) ) :=[\mathrm{diag}((\Sigma + \kappa_{\star}(\lambda) I_d)^{-1})]^{-1}.
    \end{align*} 
    \label[theorem]{theorem:det_equivalent_gen_error}
\end{theorem}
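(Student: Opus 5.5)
Using \Cref{lemma:explicit_expression_A}, we have $I_d-\hat A^{\top} = \Lambda Q$ (with $Q=Q(\lambda)$ symmetric and $\Lambda$ diagonal). Hence
\[
L(\hat A)=\frac1d\mathrm{Tr}\big(\Lambda Q\Sigma Q\Lambda\big).
\]
The plan has two stages. First, replace the random diagonal $\Lambda$ by the deterministic $\bar D$, up to a vanishing error. Second, compute a deterministic equivalent of the weighted quadratic functional $\frac1d\mathrm{Tr}(\bar D^2 Q\Sigma Q)$.

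\textbf{Step 1: concentration of $\Lambda$.} We use the standard anisotropic deterministic equivalent $Q(\lambda)\asymp \frac{\kappa_\star}{\lambda}(\Sigma+\kappa_\star I_d)^{-1}$ in the sense of entrywise/quadratic-form convergence with deterministic vectors $e_k$. This result is available under $4+\varepsilon$ moments via e.g.\ \citep{knowles2017anisotropic,chouard2022quantitative}. It gives
\[
\max_{k\le d}\Big|Q_{kk}-\tfrac{\kappa_\star}{\lambda}\big[(\Sigma+\kappa_\star I_d)^{-1}\big]_{kk}\Big|\to0 .
\]
The bound must be uniform in $k$, which requires a high-probability local law with a union bound, or a truncation argument given only $4+\varepsilon$ moments. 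We also need $Q_{kk}$ bounded away from zero. This follows from $Q_{kk}\ge 1/(\|\hat\Sigma\|_{\mathrm{op}}+\lambda)$ together with the Bai–Yin type bound $\|\hat\Sigma\|_{\mathrm{op}}=O(1)$. We then obtain $\|\Lambda-\tfrac{\lambda}{\kappa_\star}\bar D\|_{\mathrm{op}}\to0$. Since $\|Q\Sigma Q\|_{\mathrm{op}}\le\|\Sigma\|_{\mathrm{op}}/\lambda^2$, we can substitute $\Lambda$ by $\frac{\lambda}{\kappa_\star}\bar D$ in the trace at $o(1)$ cost.

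\textbf{Step 2: the second-order equivalent.} It remains to compute $\frac{\lambda^2}{\kappa_\star^2}\frac1d\mathrm{Tr}(B\,Q\Sigma Q)$ with $B=\bar D^2$ deterministic, bounded, and not commuting with $\Sigma$. We write $Q\Sigma Q$ as a derivative, $Q\Sigma Q=-\partial_t (\hat\Sigma+\lambda I_d - t\Sigma)^{-1}|_{t=0}$, or directly use the known two-resolvent deterministic equivalent, e.g.\ the ridge-variance formula:
\[
\lambda^2 Q\Sigma Q\asymp \kappa_\star^2(\Sigma+\kappa_\star)^{-1}\Sigma(\Sigma+\kappa_\star)^{-1}+\frac{\kappa_\star^2}{n-\mathrm{df}_2^\Sigma}\,\mathrm{Tr}\big(\Sigma^2(\Sigma+\kappa_\star)^{-2}\big)\cdot(\Sigma+\kappa_\star)^{-1}\Sigma(\Sigma+\kappa_\star)^{-1}.
\]
The key point is that the correction term is proportional to the same matrix $(\Sigma+\kappa)^{-1}\Sigma(\Sigma+\kappa)^{-1}$. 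This comes from a leave-one-out (Sherman–Morrison) expansion over rows $x_i$: the fluctuation contributes $\frac1n\mathrm{Tr}(\Sigma\,Q\Sigma Q)$ times $\Sigma$-sandwiched terms, closing a scalar self-consistent equation $\theta=\mathrm{df}_2/n+(\mathrm{df}_2/n)\theta$. Testing against $B$ then gives $L_1\big(1+\frac{\mathrm{df}_2}{n-\mathrm{df}_2}\big)$. The prefactors $\lambda/\kappa_\star$ from Step 1 cancel against $\kappa_\star/\lambda$ here.

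\textbf{Main obstacle.} The main obstacle is Step 1's uniformity: convergence of the diagonal entries must hold in sup norm rather than on average, because $\Lambda$ multiplies both sides. I would first try the entrywise anisotropic local law with high-probability bounds $O(d^{-1/2+\epsilon})$, after truncating entries of $Z$ at $n^{1/2-\delta}$, which the $4+\varepsilon$ moment permits. If this is too delicate, a weaker route suffices: using $\frac1d\sum_k|\Lambda_{kk}-\tfrac{\lambda}{\kappa_\star}\bar D_{kk}|^2\to0$ together with $\|Q\Sigma Q\|_{\mathrm{op}}$ bounded, the Cauchy–Schwarz bound $|\frac1d\mathrm{Tr}(E M)|\le(\frac1d\mathrm{Tr}E^2)^{1/2}(\frac1d\mathrm{Tr}M^2)^{1/2}$ requires only averaged control. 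This average control follows from second-moment bounds on each $Q_{kk}$.
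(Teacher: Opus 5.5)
Your proposal is correct and follows the paper's route. You write $L(\hat A)=\frac1d\mathrm{Tr}(\Lambda Q\Sigma Q\Lambda)$, replace $\Lambda$ by its deterministic diagonal equivalent as in the paper's concentration lemma (your normalization $\frac{\lambda}{\kappa_\star}\bar D$ is the right one), and then apply the two-resolvent equivalent of Bach's Proposition 1 with $B=\Sigma$, using that $\Sigma$ commutes with $(\Sigma+\kappa_\star I_d)^{-1}$ to factor out $1+\frac{\mathrm{df}_2}{n-\mathrm{df}_2}$. Your averaged-control fallback in Step 1 is a sound way to avoid the uniform-in-$k$ union bound that the paper only sketches; note also a sign slip in your derivative trick, which should read $Q\Sigma Q=+\partial_t(\hat\Sigma+\lambda I_d-t\Sigma)^{-1}|_{t=0}$.
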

\begin{remark}
    It is important to highlight that computing this asymptotic limit from existing limits in the literature for the minimizer $\hat{a}_k, k \in [d]$ (e.g.,  \citep{DW18,wu2020optimal,HMRT22}) would require summing a diverging number of individual formulas, one for each $k \in [d]$. Instead, \Cref{theorem:det_equivalent_gen_error} gives us a closed formula for the loss associated with the aggregate predictor.   
\end{remark}
To get some intuition for the formulas in \Cref{theorem:det_equivalent_gen_error}, it is instructive to consider the ridge-less limit $\lambda \to 0^{+}$ when $\alpha >1$. In this case, $\kappa(\lambda) \to 0$, so $L_{1} \to L_\mathrm{App}$ defined in \Cref{lemma:approximation_error}, which is independent of $n$. Then:
\begin{equation}
    L(\hat{A}) - L_{1} \sim L_{1} \dfrac{\mathrm{df}_2(0) }{n - \mathrm{df}_2(0)} = L_{\rm App}\dfrac{1}{\sfrac{n}{d} - 1}. 
\end{equation}
From the above, we can see an explicit dependence of the generalization error on $\alpha$. More generally, we can study how $L(\hat{A})$ converges to the approximation error by computing the degrees of freedom, which depend only on the eigenvalues of $\Sigma$. Then, the general performance of the SSR estimator will be determined by $L_\mathrm{App}$, which is a highly structure-dependent quantity (as described above), and the speed at which the convergence will happen depends on the eigenvalues of $\Sigma$. 

Away from this particular limit, we can numerically verify \Cref{theorem:det_equivalent_gen_error} 
against finite-size simulations.  The top \Cref{fig:deterministic_equivalent_gen__and_train_error} shows the generalization error for self-supervised ridge regression for Gaussian data with a structured covariance $\Sigma$, which corresponds to a linear auto-regressive process. We plot both the empirical generalization error and the asymptotic limit of \Cref{theorem:det_equivalent_gen_error}, showing a remarkable agreement even for modest sizes $d=200$. The dotted lines below represent the approximation error given by \Cref{lemma:approximation_error}. Note that the behavior of the generalization error varies depending on the value of $\rho$, changing both behavior of the curves and its asymptotic value. This model will be discussed in depth in \cref{sec:toeplitz}.
Finally, a similar characterization as the one in \Cref{theorem:det_equivalent_gen_error} holds for the training error defined in \Cref{eq:train_error}. 
\begin{theorem}
    Under the same assumptions and notation as \Cref{theorem:det_equivalent_gen_error}, as $n,d$ grow to $\infty$ proportionally as $\alpha = \frac{n}{d}$, the normalized training error converges in probability to 
    \begin{align*}
    \hat{L}(\hat A) & \to  \dfrac{\kappa(\lambda)}{\lambda d}\bar{L}_1 - \dfrac{\kappa(\lambda)^2}{\lambda d}\bar{L}_2 - \dfrac{\kappa(\lambda)^2}{\lambda d(n - \mathrm{df}_2^{\Sigma}(\kappa(\lambda)))}\bar{L}_3,
    \end{align*}
    where, denoting $\bar{Q}(\kappa) = (\Sigma + \kappa I_d)^{-1}$, we have: 
    \begin{align*}
    \bar{L}_1 = \mathrm{Tr} \left (\bar{D}^2 \bar{Q}(\kappa)\right );  \quad 
    \bar{L}_2 =  \mathrm{Tr} \left (\bar{D}^2 \bar{Q}(\kappa)^2 \right ),
    \end{align*}
    and 
    \[
    \bar{L}_3 = \mathrm{Tr} \left (\bar{D}^2 \bar{Q}(\kappa)^2\Sigma \right )\mathrm{Tr} \left (\bar{Q}(\kappa)^2 \Sigma \right ). 
    \]
    \label[theorem]{theorem:det_equivalent_train_error}
\end{theorem}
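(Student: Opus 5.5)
Using \Cref{lemma:explicit_expression_A}, we have $I_d-\hat A = Q\Lambda$ with $Q=Q(\lambda)$, so the residual matrix is $X-X\hat A = XQ\Lambda$. Hence
\[
\hat L_n(\hat A)=\frac{1}{nd}\mathrm{Tr}\big(\Lambda Q X^\top X Q\Lambda\big)=\frac1d\mathrm{Tr}\big(\Lambda^2 Q\hat\Sigma Q\big).
\]
Since $Q\hat\Sigma Q = Q - \lambda Q^2$, we get the exact identity
\[
\hat L_n(\hat A)=\frac1d\mathrm{Tr}(\Lambda^2 Q)-\frac{\lambda}{d}\mathrm{Tr}(\Lambda^2Q^2).
\]
The plan is to replace $\Lambda$ by a deterministic diagonal matrix. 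Then each term becomes a weighted trace of a resolvent or squared resolvent, and we can apply standard anisotropic deterministic equivalents.

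\textbf{Step 1 (concentration of $\Lambda$).} This is the same ingredient as in the proof of \Cref{theorem:det_equivalent_gen_error}. The anisotropic local law gives $Q_{kk}=\frac{\kappa}{\lambda}\bar Q(\kappa)_{kk}+o(1)$ uniformly in $k$, using $\lambda Q\approx\kappa\bar Q(\kappa)$. The diagonal entries $Q_{kk}$ are bounded below, since $Q_{kk}\ge 1/(\|\hat\Sigma\|_{\rm op}+\lambda)$. Therefore $\max_k|\Lambda_{kk}-\frac{\lambda}{\kappa}\bar D_{kk}|\to0$ in probability. Uniformity over $k$ needs concentration of each entry at rate $d^{-1/2+\epsilon}$, followed by a union bound; the $4+\varepsilon$ moments are handled by truncation. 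Because $\|Q\|_{\rm op}\le 1/\lambda$, replacing $\Lambda$ by $\tilde\Lambda:=\frac{\lambda}{\kappa}\bar D$ costs $o(1)$ in both normalized traces.

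\textbf{Step 2 (first-order term).} With a deterministic diagonal weight, the known deterministic equivalent $\lambda Q\asymp \kappa\bar Q$ gives
\[
\frac1d\mathrm{Tr}(\tilde\Lambda^2Q)\to\frac{\lambda^2}{\kappa^2}\cdot\frac{\kappa}{\lambda}\frac1d\mathrm{Tr}(\bar D^2\bar Q)=\frac{\lambda}{\kappa d}\bar L_1 .
\]
The prefactor needs care here: the statement's normalisation suggests the convention $\Lambda\approx\bar D$ up to the ratio $\kappa/\lambda$. I would reconcile the constants at the end by checking against $\Sigma=I_d$, where everything is explicit.

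\textbf{Step 3 (second-order term, the main obstacle).} Here we need a deterministic equivalent for $\mathrm{Tr}(B Q^2)$ with $B=\bar D^2$. I would use $Q^2=-\partial_\lambda Q$ and differentiate the equivalent $\mathrm{Tr}(BQ(\lambda))\approx\frac{\kappa(\lambda)}{\lambda}\mathrm{Tr}(B\bar Q(\kappa(\lambda)))$ in $\lambda$. The interchange of limit and derivative is justified by analyticity in $\lambda$ and Vitali/Montel. The derivative of \eqref{eq:equation_kappa_lambda} gives
\[
\kappa'(\lambda)=\frac{n}{n-\mathrm{df}_2^{\Sigma}(\kappa)},
\]
using $\frac{d}{d\kappa}\big(\kappa\,\mathrm{df}_1^\Sigma\big)=\mathrm{df}_2^\Sigma$. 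Expanding the derivative yields three contributions:
\begin{itemize}
\item a $\mathrm{Tr}(B\bar Q)$ piece, which combines with Step 2;
\item a $\mathrm{Tr}(B\bar Q^2)$ piece, producing $\bar L_2$;
\item a piece from $\kappa'-1=\mathrm{df}_2^\Sigma/(n-\mathrm{df}_2^\Sigma)$, multiplied by $\mathrm{Tr}(B\bar Q^2\Sigma)$. Using $\mathrm{df}_2^\Sigma=\mathrm{Tr}(\bar Q^2\Sigma^2)$ and the identity $\kappa\,\mathrm{Tr}(\bar Q^2\Sigma)=\mathrm{df}_1^\Sigma-\mathrm{df}_2^\Sigma$ together with \eqref{eq:equation_kappa_lambda}, this should reorganise into the $\bar L_3/(n-\mathrm{df}_2^\Sigma)$ term.
\end{itemize}

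The bookkeeping in Step 3 is where errors are most likely, so I would verify the final combination numerically and on the isotropic case. The genuinely hard analytic point remains Step 1: the uniform-in-$k$ control of $Q_{kk}$, which is what makes the random weight $\Lambda^2$ replaceable.
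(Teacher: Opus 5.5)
Your proof is correct in substance and, through Step~2, follows the same route as the paper's proof of \Cref{theorem:det_equivalent_gen_error}. You use the exact identity $\hat L_n(\hat A)=\frac1d\mathrm{Tr}(\Lambda^2Q)-\frac{\lambda}{d}\mathrm{Tr}(\Lambda^2Q^2)$, and then replace $\Lambda$ by a deterministic diagonal matrix as in \Cref{lemma:approximation_Lambda_D} and \Cref{lemma:concentration_gen_error_1}.

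The genuine difference is Step~3. The paper's machinery would treat $\mathrm{Tr}(\bar D^2Q^2)$ with the two-resolvent equivalent of \Cref{lemma:det_equivalent_quadratic}, taking $A=\bar D^2$ and $B=I_d$. You instead differentiate the one-resolvent equivalent in $\lambda$. Because the middle matrix is the identity, your trick suffices and is more self-contained. It needs only the first-order equivalent plus Vitali's theorem: freeze $B=\bar D^2$ at the given $\lambda$ before differentiating, and work on $\{\Re\lambda>0\}$, where $\|Q\|_{\mathrm{op}}\le 1/\Re\lambda$. The two-resolvent lemma is more general, since it also covers $B=\Sigma$, which the generalization error needs. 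It is not required here.

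Your description of the Step~3 bookkeeping is inaccurate, though. Differentiating gives
\[
\mathrm{Tr}(BQ^2)\approx\frac{\kappa-\lambda\kappa'}{\lambda^2}\,\mathrm{Tr}(B\bar Q)+\frac{\kappa\kappa'}{\lambda}\,\mathrm{Tr}(B\bar Q^2).
\]
The $\mathrm{Tr}(B\bar Q)$ piece must \emph{not} be merged with Step~2, because the coefficient of $\bar L_1$ comes from Step~2 alone. Instead, split it as $\mathrm{Tr}(B\bar Q)=\mathrm{Tr}(B\bar Q^2\Sigma)+\kappa\,\mathrm{Tr}(B\bar Q^2)$. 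Then use $\kappa-\lambda\kappa'=\kappa^2\,\mathrm{Tr}(\Sigma\bar Q^2)/(n-\mathrm{df}_2^{\Sigma})$, which follows from $\lambda=\kappa(1-\mathrm{df}_1^{\Sigma}/n)$ and $\mathrm{df}_1^{\Sigma}-\mathrm{df}_2^{\Sigma}=\kappa\,\mathrm{Tr}(\Sigma\bar Q^2)$. The $\lambda\kappa\kappa'$ cross terms cancel, and you get
\[
\mathrm{Tr}(BQ^2)\approx\frac{\kappa^2}{\lambda^2}\Big(\mathrm{Tr}(B\bar Q^2)+\frac{\mathrm{Tr}(B\bar Q^2\Sigma)\,\mathrm{Tr}(\bar Q^2\Sigma)}{n-\mathrm{df}_2^{\Sigma}}\Big).
\]
This is precisely the $B=I_d$ case of \Cref{lemma:det_equivalent_quadratic}. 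The $\bar L_3$ coefficient therefore comes from $\kappa-\lambda\kappa'$, not from $\kappa'-1$ multiplying $\mathrm{Tr}(B\bar Q^2\Sigma)$.

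The prefactor mismatch you noticed is a convention issue, not something to patch at the end. The stated prefactors $\kappa/\lambda$ and $\kappa^2/\lambda$ are right exactly when $\bar D$ is the deterministic equivalent of $\Lambda$ itself, $\bar D=\frac{\lambda}{\kappa}[\mathrm{diag}\,\bar Q(\kappa)]^{-1}$. That is the $\bar D$ of \Cref{lemma:approximation_Lambda_D}, where $\tilde m=1/\kappa$. With it, $\frac1d\mathrm{Tr}(\bar D^2Q)\to\frac{\kappa}{\lambda d}\bar L_1$, and $-\frac{\lambda}{d}\mathrm{Tr}(\bar D^2Q^2)$ yields the other two terms directly.

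With the $\bar D$ of \Cref{theorem:det_equivalent_gen_error}, your Step~2 is correct, and the three prefactors should instead be $\frac{\lambda}{\kappa d}$, $\frac{\lambda}{d}$ and $\frac{\lambda}{d(n-\mathrm{df}_2^{\Sigma})}$. Your isotropic check settles which reading is intended. For $\Sigma=I_d$, $\alpha>1$, $\lambda\to0^+$, the true training error is the least-squares residual $1-1/\alpha$, which the first reading reproduces. The stated prefactors combined with that theorem's $\bar D$ give $\alpha/(\alpha-1)$ instead.
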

\Cref{theorem:det_equivalent_train_error} is illustrated at the bottom of \Cref{fig:deterministic_equivalent_gen__and_train_error}. For all values of $\rho$, the asymptotic limit in \Cref{theorem:det_equivalent_train_error} shows excellent agreement with the finite-size training error. From this figure, it is clear that the origin of the double descent peak in \cref{fig:deterministic_equivalent_gen__and_train_error} coincides with the point in which the estimator interpolates the training data, akin to the standard double descent phenomena in supervised learning \citep{mei2022generalization, gerace2020generalisation, HMRT22}. 

\subsection{Asymptotic spectrum of the SSR matrix}
Beyond performance characterization, a natural question is how the structure of the data distribution is reflected in the SSR estimator $\hat A$. As shown in \cite{rende2024mapping}, for simple data models the attention matrix learned by single-layer architectures trained with masked self-supervised objectives is closely related to the inverse of the data covariance. This behavior is closely aligned with the structure that emerges in our self-supervised ridge regression setting.

\begin{figure}[t]
    \centering
    \begin{subfigure}{0.48\textwidth}
        \centering
        \includegraphics[width=\linewidth]{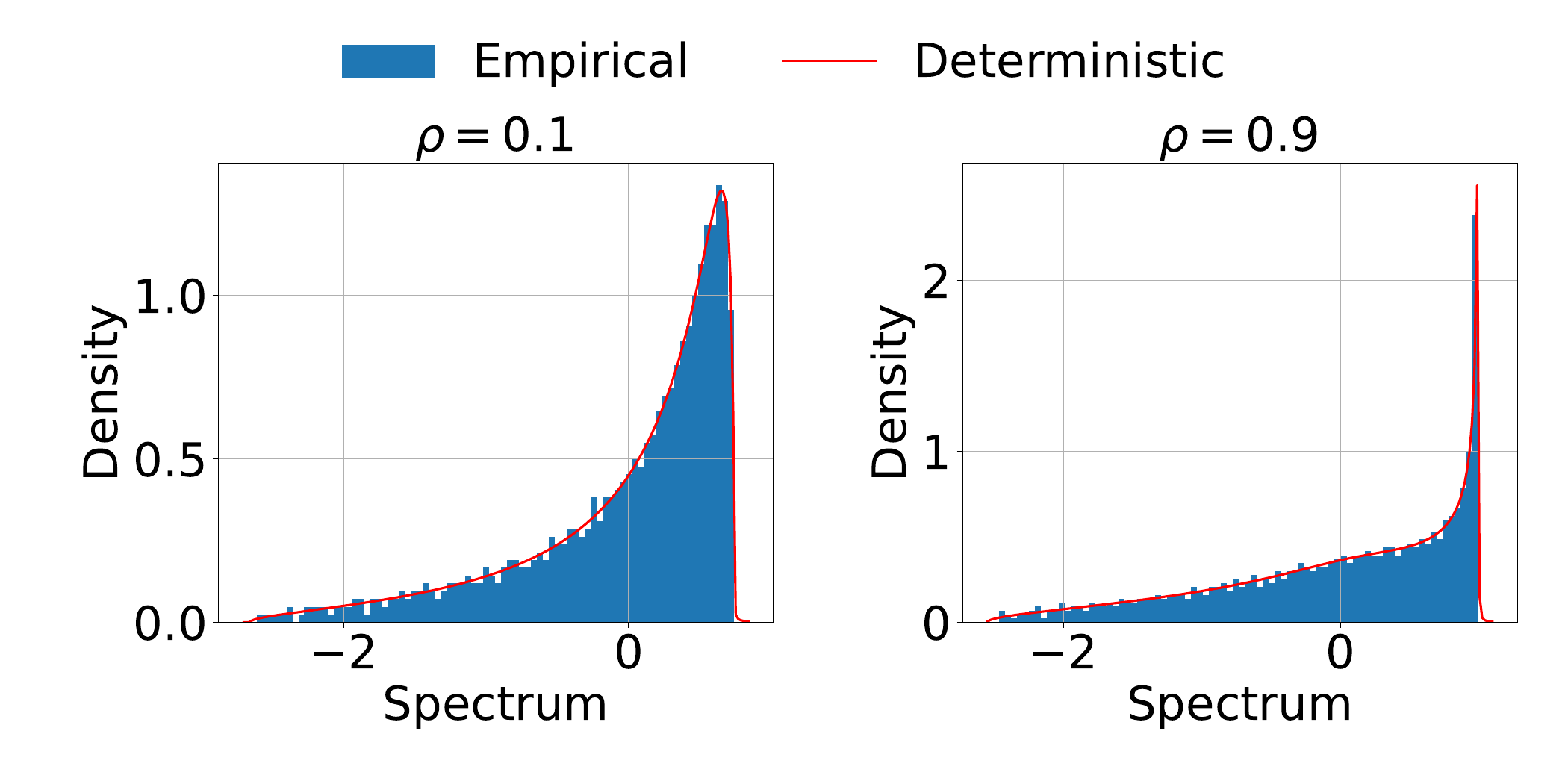}
        \label{fig:first}
    \end{subfigure}
    \hfill
    \begin{subfigure}{0.48\textwidth}
        \centering
        \includegraphics[width=\linewidth]{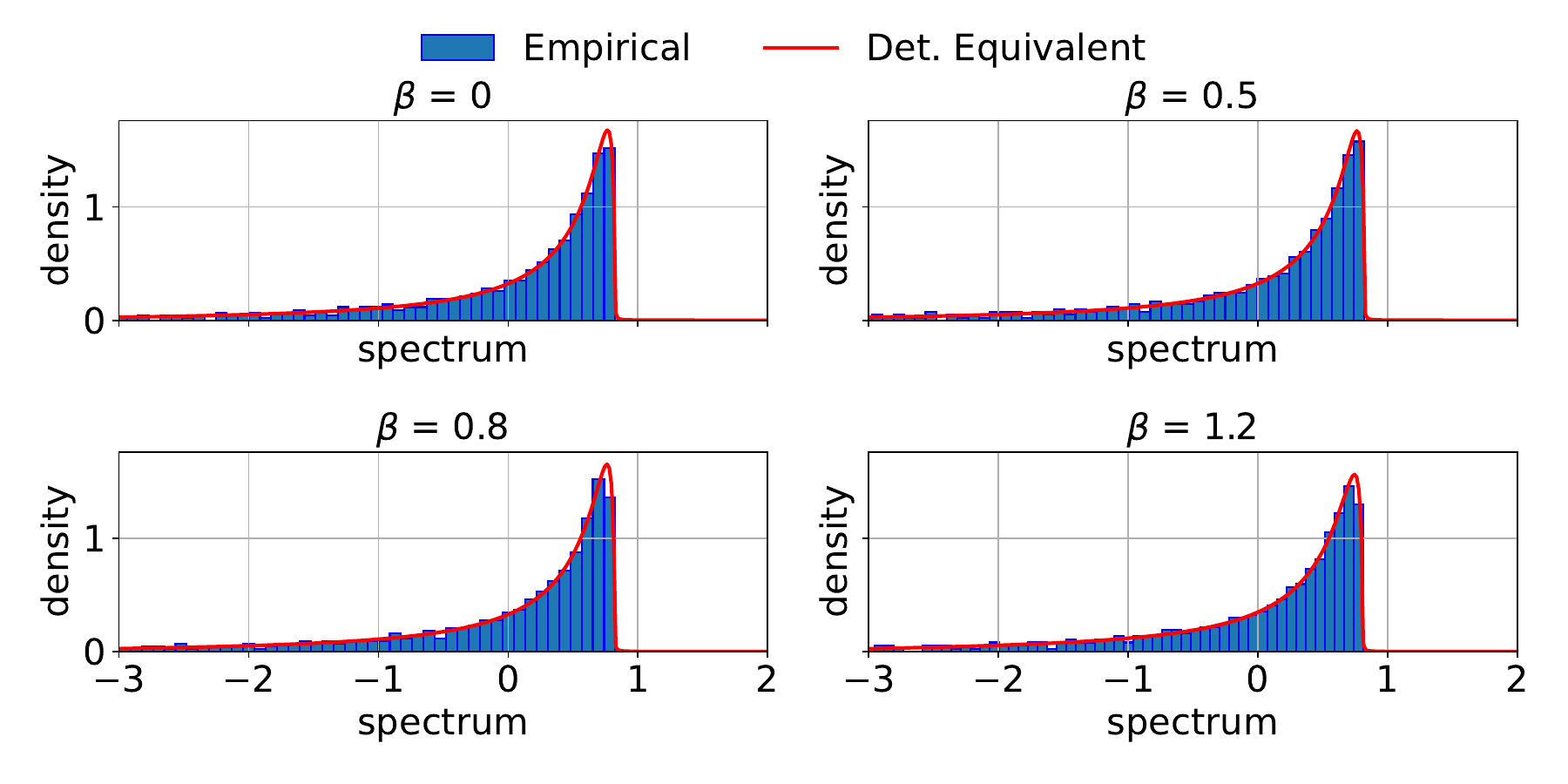}
        \label{fig:second}
    \end{subfigure}
    \caption{\textbf{Left:} Spectrum of the SSR matrix $\hat{A}$ for Gaussian Data with a Toeplitz Covariance parametrized by $\rho \in (0,1)$, that is: $\Sigma_{i,j} = \rho^{|i-j|}$. The blue lines are the empirical spectrum, while the red lines are predicted by \Cref{thm:det_equivalent_A_hat}. In both experiments $\lambda=0.01$, $d =1000$ and $\alpha = 3$. \textbf{Right: }Spectral density of $\hat{A}$ and the predicted spectral density from \Cref{corollary:universality} for Gaussian data with covariance $\Sigma = C_{\beta}\mathrm{diag}(1, 2^{-\beta}, \dots, d^{-\beta})$, with $C_{\beta} $ such that $\mathrm{Tr}(\Sigma) = 1$. In all plots $\lambda = 10^{-5}$ and $d = 500$.}
    \label{fig:spectrum_plots}
\end{figure}

 As noted in Remark~\ref{rmk:real_eigenvalues_A}, we can study the eigenvalues of $\hat{A}$ by studying the eigenvalues of the matrix $\Lambda^{\frac{1}{2}} Q(\lambda) \Lambda^{\frac{1}{2}}$, for $\Lambda = [\mathrm{diag}(Q(\lambda))]^{-1}$. Our second main result is a characterization
of the asymptotic spectrum of $\Lambda^{\frac{1}{2}} Q(\lambda) \Lambda^{\frac{1}{2}}$, and therefore that of the SSR predictor $\hat{A}$. 

\begin{theorem}
For $\lambda >0$ and $\alpha = \frac{n}{d}$, let $\tilde{m}(\lambda)$ be the unique solution of the self-consistent equation: 
\begin{equation}
    \tilde{m}(\lambda) = \left ( \lambda + \dfrac{1}{n} \mathrm{Tr}\left ( \Sigma (\tilde{m} \Sigma + I_{d})^{-1}\right )\right )^{-1}.
    \label{eq:self_consistent_m}
\end{equation}
For $z \in \C^{+}$, let $\chi$ be the solution to the self-consistent equation
\begin{equation}
    \chi = \frac 1 n \mathrm{Tr}\left(\Sigma(-\lambda I_d - \frac{1}{1-\chi} \Sigma + \bar{D}/z)^{-1}\right).
    \label{eq:chi_equation}
\end{equation}
where $\bar{D}$ is a diagonal matrix with entries $\bar{D}_{k,k}= \frac{\lambda}{(\tilde{m}(\lambda) \Sigma +  I_{d})^{-1}_{k,k}}$. Then, the matrix: 
\begin{equation}\label{eq:G_equiv}
    \mathcal{G}(z) = \left[\bar{D}^{\frac{1}{2}}\left(\frac{1}{1-\chi} \Sigma + \lambda I_d\right)^{-1} \bar{D}^{\frac{1}{2}} - z I_d\right]^{-1}
\end{equation}
is a deterministic equivalent for the resolvent:
\begin{equation}
    G(z) := (\Lambda^{\frac{1}{2}} Q(\lambda) \Lambda^{\frac{1}{2}} - zI_{d})^{-1}.
    \label{eq:resolvent_D_Q_D}
\end{equation}
As an immediate consequence, for any $z \in \C^+$, the Stieltjes transform of the empirical spectral measure of $\Lambda^{\frac{1}{2}} Q(\lambda) \Lambda^{\frac{1}{2}}$ evaluated at $z$ converges in probability to $\frac 1 d \mathrm{Tr} \,\mathcal{G}(z)$.
\label[theorem]{thm:det_equivalent_A_hat}
\end{theorem}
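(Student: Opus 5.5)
The plan has three steps: first replace the random diagonal $\Lambda$ by the deterministic $\bar{D}$, then rewrite the resolvent of $\bar{D}^{\frac12}Q(\lambda)\bar{D}^{\frac12}$ through a Woodbury-type identity, and finally invoke an anisotropic deterministic equivalent for $\hat\Sigma$ with a non-scalar complex shift.

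\textbf{Step 1: concentration of $\Lambda$.} With $\kappa = 1/\tilde m(\lambda)$, \cref{eq:self_consistent_m} is exactly \cref{eq:equation_kappa_lambda}. The standard anisotropic deterministic equivalent for the ridge resolvent reads $\lambda Q(\lambda)\approx \kappa(\Sigma+\kappa I_d)^{-1} = (\tilde m\Sigma + I_d)^{-1}$. I will strengthen this to a uniform entrywise statement,
\[
\max_{k\le d}\bigl|\lambda Q_{kk}(\lambda) - (\tilde m\Sigma+I_d)^{-1}_{kk}\bigr|\to 0 \quad \text{in probability}.
\]
\begin{itemize}[label={}, leftmargin=1.5em]
\item For each fixed $k$, a concentration bound for $e_k^\top Q e_k$ follows from leave-one-out (Sherman--Morrison) expansions over the rows of $X$ and quadratic-form estimates.
\item The union bound over $k$ needs polynomial tail bounds. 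I will obtain these by first truncating the entries of $Z$ at level $n^{1/2-\delta}$, which the $4+\varepsilon$ moment assumption allows, and then using high-moment bounds on the quadratic forms.
\item Since $(\tilde m\Sigma+I_d)^{-1}_{kk}\ge (1+\tilde m\|\Sigma\|_{\mathrm{op}})^{-1}$ is bounded below, inversion is Lipschitz on the relevant range. Hence $\|\Lambda-\bar{D}\|_{\mathrm{op}}\to 0$, with $\bar{D}$ bounded above and below.
\end{itemize}
Because $\|Q\|_{\mathrm{op}}\le 1/\lambda$, this gives $\|\Lambda^{\frac12}Q\Lambda^{\frac12}-\bar{D}^{\frac12}Q\bar{D}^{\frac12}\|_{\mathrm{op}}\to0$. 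The resolvent identity then yields $\|G(z)-\tilde G(z)\|_{\mathrm{op}}\le \mathrm{(Im}\,z)^{-2}\cdot o(1)$, where $\tilde G(z)=(\bar{D}^{\frac12}Q\bar{D}^{\frac12}-z I_d)^{-1}$. It therefore suffices to find a deterministic equivalent for $\tilde G$.

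\textbf{Step 2: linearization.} Write $D=\bar{D}$. Factoring out $D^{\frac12}$ on both sides and applying Woodbury to $[(\hat\Sigma+\lambda I_d)^{-1}-zD^{-1}]^{-1}$ gives
\[
\tilde G(z) = -\tfrac1z I_d + \tfrac1{z^2}\, D^{\frac12}\bigl(\hat\Sigma+\lambda I_d - D/z\bigr)^{-1}D^{\frac12}.
\]
So everything reduces to the resolvent of the sample covariance $\hat\Sigma$ shifted by the deterministic, non-scalar, complex matrix $\lambda I_d - D/z$. For $z\in\C^+$, $\mathrm{Im}(-D/z)=D\,\mathrm{Im}z/|z|^2\succ0$. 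This is the key positivity that keeps the shifted resolvent bounded, with norm at most $|z|^2/(\mathrm{Im}z\,\min_k D_{kk})$.

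\textbf{Step 3: anisotropic equivalent with a matrix shift.} I will prove that
\[
(\hat\Sigma+\lambda I_d-D/z)^{-1}\approx\Bigl(\tfrac{1}{1-\chi}\Sigma+\lambda I_d-D/z\Bigr)^{-1},
\]
with $\chi$ given by \cref{eq:chi_equation}, up to the sign convention fixed by the Schur-complement computation.
\begin{itemize}[label={}, leftmargin=1.5em]
\item Run the usual leave-one-row-out argument. Each rank-one update $\frac1n x_i x_i^\top$ contributes, via Sherman--Morrison, the factor $\frac{1}{1+\frac1n x_i^\top R^{(-i)}x_i}$. The quadratic form concentrates on $\frac1n\mathrm{Tr}(\Sigma R)$, which produces the self-energy $\Sigma/(1-\chi)$.
\item Error terms are controlled in the spirit of \citep{knowles2017anisotropic,louart2018concentration}, using the imaginary-part bound above in place of real positivity.
\item Existence and uniqueness of $\chi$ follow from a contraction or Nevanlinna-function argument: $\chi$ maps $\C^+$ into the appropriate half-plane and the fixed-point map is a strict contraction for $\mathrm{Im}\,z$ large. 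Analytic continuation extends this to all of $\C^+$.
\end{itemize}
Substituting into the identity of Step 2 and reversing the Woodbury step algebraically gives exactly $\mathcal{G}(z)$ in \cref{eq:G_equiv}. Taking normalized traces then yields the Stieltjes transform statement.

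\textbf{Main obstacle.} I expect Step 1 to be the main difficulty, because it needs uniform-in-$k$ control of diagonal resolvent entries under only $4+\varepsilon$ moments, which forces the truncation and high-moment argument. Step 3 is the second difficulty, since there the shift is not a multiple of the identity and the usual monotonicity arguments must be replaced by imaginary-part estimates.
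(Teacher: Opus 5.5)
Your proposal is correct and follows essentially the paper's route. Step 1 is the paper's lemma replacing $\Lambda$ by $\bar D$; you add the truncation needed to justify the union bound over $k$, which the paper only asserts. Your shifted resolvent $(\hat\Sigma+\lambda I_d-\bar D/z)^{-1}$ is exactly $-[H^{-1}]_{2,2}$ for the paper's $2d\times 2d$ linearization $H$, on which the paper runs the same leave-one-out/Sherman--Morrison argument (perturbing only that block) before taking the Schur complement.

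One small slip: the Woodbury identity should read $\tilde G(z) = -\frac1z I_d - \frac1{z^2}\bar D^{\frac12}(\hat\Sigma+\lambda I_d-\bar D/z)^{-1}\bar D^{\frac12}$, with a minus rather than a plus. It is this sign that makes the final substitution reproduce $\mathcal G(z)$.
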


For the proof of \Cref{thm:det_equivalent_A_hat} we refer the reader to Appendix~\ref{section:appendix_spectrum}. The left of \Cref{fig:spectrum_plots} illustrates the empirical spectral density of $\hat{A}$ with the deterministic equivalent given in \Cref{thm:det_equivalent_A_hat} for Gaussian data generated from an AR(1) process. (See Section~\ref{sec:toeplitz} for details of this model.)

\subsubsection{Universality of the spectrum} 
An interesting consequence of \Cref{thm:det_equivalent_A_hat} arises when we focus on the case where the covariance $\Sigma$ is a diagonal matrix and we consider the ridge-less limit $\lambda\to 0^{+}$.  From \Cref{lemma:approximation_error}, we know that this task is equivalent to simply fitting noise.
\begin{corollary}
    Assume $\Sigma$ is a diagonal matrix. Then, for $\alpha >1$ and $\lambda \to 0$, the deterministic equivalent for the resolvent in \Cref{thm:det_equivalent_A_hat} is independent of $\Sigma$: 
    \[
     \mathcal{G}(z) = \dfrac{1}{(1 - \frac{1}{\alpha} ) (1 - \chi) - z} I_d,
    \]
    where $\chi$ solves: 
    \begin{equation}
        \chi ^2 + (z-1) \chi + \dfrac{z}{\alpha -1} = 0.
        \label{eq:chi_self_consistent_diag}
    \end{equation}
    In particular, the spectral density of the SSR matrix $\hat{A}$ is supported in the interval $\left[\frac{-2}{\sqrt{\alpha} -1}, \frac{2}{\sqrt{\alpha} +1}\right]$.
    \label[corollary]{corollary:universality}
\end{corollary}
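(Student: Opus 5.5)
The plan is to specialize the two self-consistent equations of \Cref{thm:det_equivalent_A_hat} to $\Sigma=\mathrm{diag}(\sigma_1,\dots,\sigma_d)$ and take $\lambda\to0^+$ with $\alpha>1$. Every matrix in \eqref{eq:G_equiv} is then diagonal, so the computation reduces to scalars. I take $\sigma_k>0$, at least for all but $o(d)$ indices, so that $\mathrm{df}_1$ counts $d$ in the limit.

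\textbf{Step 1: the constant $\tilde m$ and the matrix $\bar D$.} Write \eqref{eq:self_consistent_m} as $1/\tilde m=\lambda+\frac1n\sum_k \sigma_k/(\tilde m\sigma_k+1)$. For $\alpha>1$ we expect $\tilde m\to\infty$ with $\lambda\tilde m\to c:=1-\frac1\alpha$. Indeed, $\frac1n\sum_k\sigma_k/(\tilde m\sigma_k+1)=\frac{d}{n\tilde m}(1+o(1))$, so the equation becomes $1/\tilde m\approx\lambda+1/(\alpha\tilde m)$. Consequently
\[
\bar D_{kk}=\lambda(\tilde m\sigma_k+1)\to c\,\sigma_k, \qquad \text{i.e.}\qquad \bar D\to c\,\Sigma .
\]
Substituting into \eqref{eq:G_equiv}, the factors of $\Sigma$ cancel:
\[
\bar D^{1/2}\Big(\tfrac{1}{1-\chi}\Sigma+\lambda I_d\Big)^{-1}\bar D^{1/2}\to c(1-\chi)I_d .
\]
This gives the claimed form $\mathcal G(z)=\big(c(1-\chi)-z\big)^{-1}I_d$.

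\textbf{Step 2: the equation for $\chi$.} In \eqref{eq:chi_equation}, each summand becomes $\sigma_k/\big(\sigma_k(-\frac1{1-\chi}+\frac cz)\big)$. The result is independent of $k$, so
\[
\chi\Big(-\tfrac{1}{1-\chi}+\tfrac cz\Big)=\tfrac1\alpha .
\]
Multiplying by $z(1-\chi)$ and rearranging gives $-c\chi^2+c(1-z)\chi-z/\alpha=0$. Dividing by $-c$ and using $\alpha c=\alpha-1$ yields \eqref{eq:chi_self_consistent_diag}. All dependence on $\Sigma$ has disappeared, which proves the universality claim.

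\textbf{Step 3: the support.} Since $\mathcal G$ is a rational function of $\chi$, its Stieltjes transform has nonzero imaginary part on the real axis exactly where $\chi$ is non-real. That happens where the discriminant of \eqref{eq:chi_self_consistent_diag} is negative:
\[
(z-1)^2-\tfrac{4z}{\alpha-1}<0 .
\]
The roots of $z^2-2\frac{\alpha+1}{\alpha-1}z+1=0$ are $z_\pm=\frac{(\sqrt\alpha\pm1)^2}{\alpha-1}=\frac{\sqrt\alpha\pm1}{\sqrt\alpha\mp1}$. So the spectrum of $\Lambda^{1/2}Q\Lambda^{1/2}$ is supported on $[z_-,z_+]$. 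By Remark~\ref{rmk:real_eigenvalues_A}, the eigenvalues of $\hat A$ are $1-z$, which gives the interval $\big[1-z_+,\,1-z_-\big]=\big[\frac{-2}{\sqrt\alpha-1},\frac{2}{\sqrt\alpha+1}\big]$.

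\textbf{Main obstacle.} I expect the hard part to be rigor rather than algebra. First, the limit $\lambda\to0$ must be interchanged with the formulas of \Cref{thm:det_equivalent_A_hat}; I would handle this by continuity of the fixed points in $\lambda$ for $z\in\C^+$. Second, we must select the root of \eqref{eq:chi_self_consistent_diag} for which $\frac1d\mathrm{Tr}\,\mathcal G$ is a genuine Stieltjes transform. Third, we must rule out atoms outside $[z_-,z_+]$, in particular at a pole where $c(1-\chi)=z$ with real $\chi$. I would check directly that on the real axis outside the cut the selected branch gives a real, finite $\mathcal G$, and then apply Stieltjes inversion.
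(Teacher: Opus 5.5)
Your proposal is correct and follows the same route as the paper. You specialize the deterministic equivalent to diagonal $\Sigma$, show $\lambda\tilde m(\lambda)\to 1-\frac1\alpha$ (the paper writes this as $\nu=\frac{1}{\alpha-1}$ with $\frac{1}{1+\nu}=\lambda\tilde m$), deduce $\bar D\to\frac{\alpha-1}{\alpha}\Sigma$, and let $\Sigma$ cancel in both $\mathcal G$ and the $\chi$-equation; your explicit reduction to the quadratic and the discriminant computation of the edges $\frac{\sqrt\alpha\mp1}{\sqrt\alpha\pm1}$ fill in steps the paper only asserts, and the atom worry is harmless, since $z=c(1-\chi)$ is compatible with the quadratic only at $z=0$, where the physical branch has $\chi=0$.
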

The fact that $\mathcal{G}(z)$ is independent of the values in $\Sigma$ means the asymptotic spectrum in the ridge-less limit is universal. The right of \Cref{fig:spectrum_plots} illustrates this universality for the family of diagonal power-law covariances parametrized by an exponent $\beta >0$. Despite the heavy-tailed spectrum of the population covariance, as $\lambda$ approaches $0^{+}$ the different spectra collapse to the same universal shape for $\alpha >1$. 
As a sanity check for our results on the spectrum of the SSR matrix, consider the simplest possible setting of i.i.d. isotropic Gaussian data  $x_i \sim \mathcal{N}(0,I_{d})$. In this case, the asymptotic spectral density of $\hat{\Sigma}_{n} = \frac{1}{n}X^T X$ is the Marchenko-Pastur law,  $\tilde{m}(\lambda)$ is the solution of 
\begin{equation}
    - \frac{\lambda}{\alpha} \tilde{m}(\lambda)^2  - ( 1- \frac{1}{\alpha} + \lambda) \tilde{m}(\lambda)  + 1 = 0. \label{eq:steltjes_transform_eq_isotropic}
\end{equation}
Then in this case, the spectral density of the SSR matrix $\hat{A}$ is just a transformation of the Marchenko-Pastur distribution.  To illustrate this, \Cref{fig:spectrum_isotropic} shows the empirical spectral density of $\hat{A}$ with the deterministic spectral density stated in \Cref{thm:det_equivalent_A_hat}. In the case where $\alpha <1 $, there are fluctuations around the atom on the left. These fluctuations decrease as $d$ goes to infinity.

\begin{figure}[t] 
    \centering
    \includegraphics[width=0.7\linewidth]{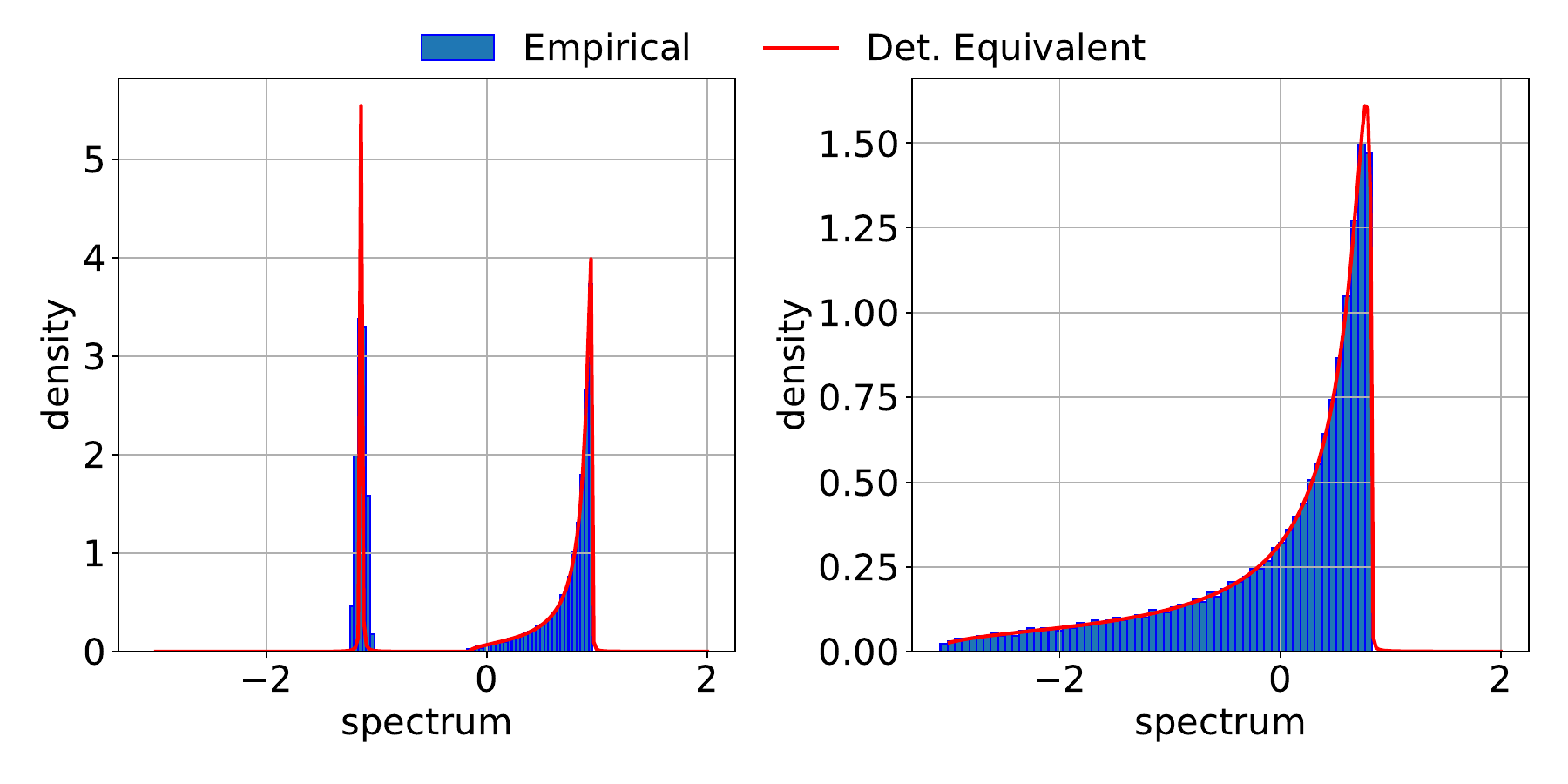}
    \caption{Empirical spectral density of $\hat{A}$ for Gaussian, isotropic data, compared with the spectrum predicted by \Cref{thm:det_equivalent_A_hat}. The dimension is $d=2000$ and $\lambda = 0.01$. On the left, $\alpha = 0.6$, and on the right, $\alpha = 1.5$.}
    \label{fig:spectrum_isotropic}
\end{figure}
\section{Case Studies}
In the last section, we showed that the high-dimensional limit of the SSR predictor can be characterized by a deterministic equivalent depending on the data only through the properties of the population covariance $\Sigma$. We now turn to an in-depth discussion of two particular examples of structured data which are popular in the machine learning literature, corresponding to two different types of tasks: one in which the coordinates of $x$ are correlated through an underlying low-dimensional structure, and one in which $x$ follows an autoregressive structure.  Since we are considering a linear SSL, a natural linear unsupervised learning benchmark is PCA. For the reader's convenience, we recall that the PCA estimator with $p$ principal components (or simply, $p$ directions) is defined as: 
\begin{equation}
    A^\mathrm{PCA}_{p} = \arg \min_{A \succ 0, \mathrm{rank}(A) = p} \dfrac{1}{dn} \sum_{i=1}^{n} \| x_i - Ax_i \|^2.
    \label{eq:def_PCA}
\end{equation}

\subsection{Spiked Covariance}

\begin{figure}[t]
    \centering
    \begin{subfigure}{0.48\textwidth}
        \centering
        \includegraphics[width=\linewidth]{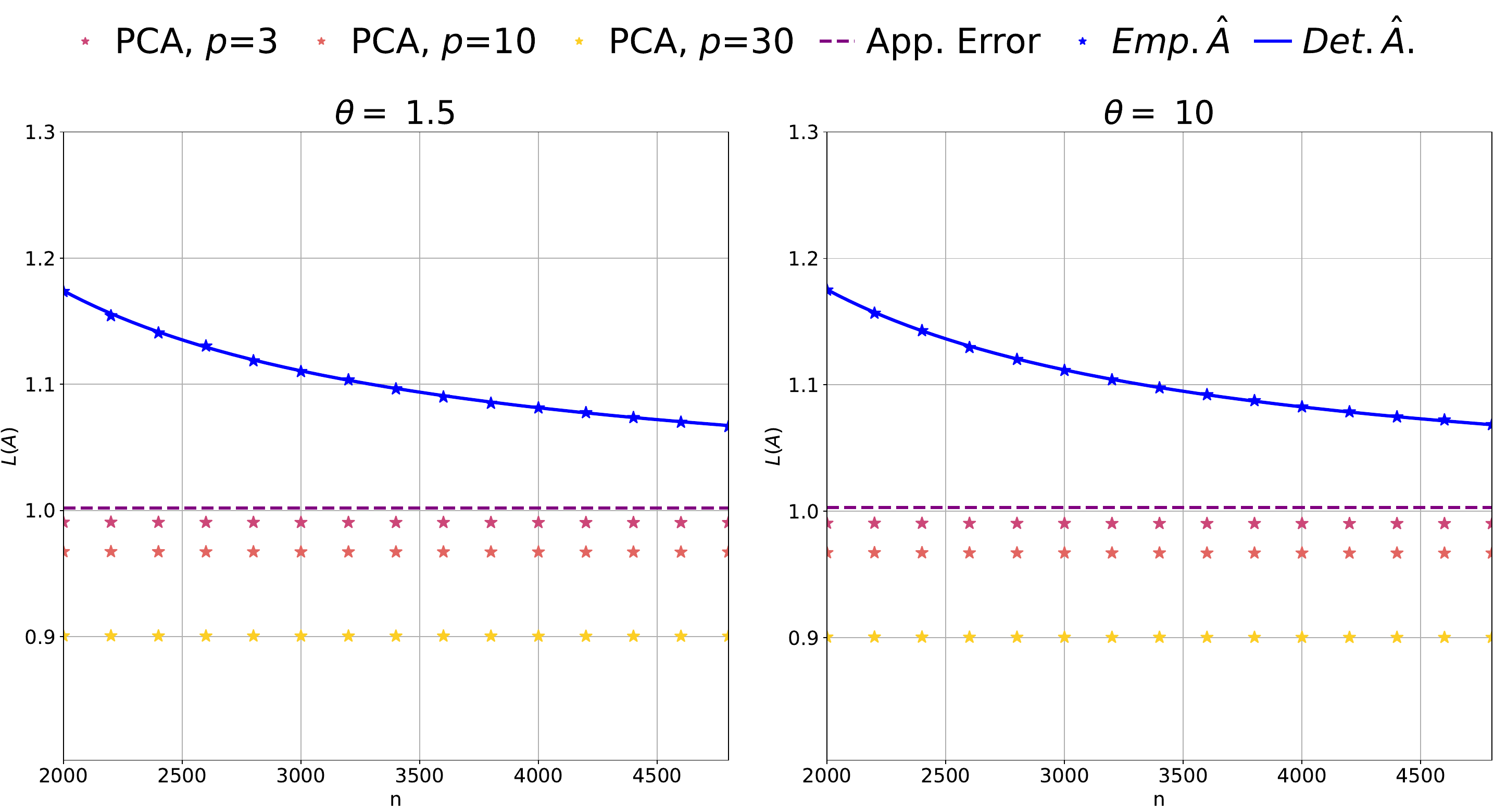}
    \end{subfigure}
    \hfill
    \begin{subfigure}{0.48\textwidth}
        \centering
        \includegraphics[width=\linewidth]{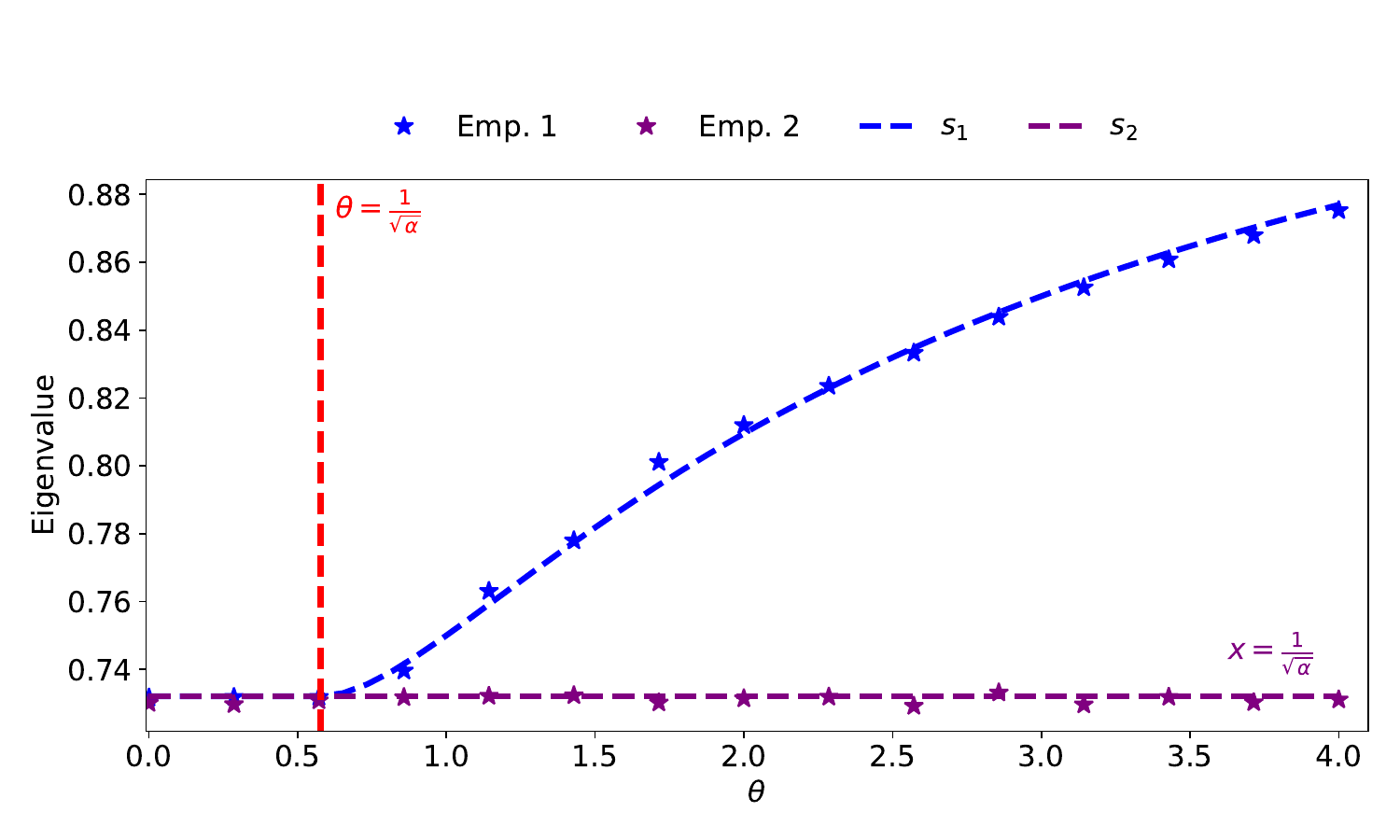}
    \end{subfigure}
    \caption{\textbf{Left:} Comparison of the SSR estimator and PCA for a spiked covariance $\Sigma = I_{d} + \theta vv^T$, for $v \sim \mathrm{Unif}(\mathbb{S}^{d-1})$. In the experiment, $d=300,\lambda = 0.01$ and PCA is applied for $p \in \{ 3, 10,30\}$. \textbf{Right: } Baik-Ben Arous-Peché transition for the SSR matrix $\hat{A}$ for Gaussian data with covariance $\Sigma = I_{d} + \theta vv^T$, for $v \sim \mathrm{Unif}(\mathbb{S}^{d-1})$ and varying $\theta$. The value of $\lambda = 10^{-5}$, $d = 2000$, and $\alpha = \frac{n}{d} = 2$.}
    \label{fig:pca_spiked_covariance}
\end{figure}

In this section, we consider the case where $\Sigma = I_{d} + \theta v v^{\top}$, with $v \in \mathbb{S}^{d-1}$, and $\theta \geq 0$. Known as the \emph{spiked covariance model}, it is a popular model in the statistics literature for a situation where one needs to learn a signal $v \in \RR^{d}$ in otherwise noisy data  \citep{donoho2018optimal}.

The first question we want to answer in this setting is:  What is the performance of $\hat{A}$ compared to PCA in the task of reconstructing a random vector with covariance given by $\Sigma = I_{d} + \theta vv^{\top}$? How does it change depending on the signal-to-noise ratio $\theta$? We first answer this question in the classical limit $n\to\infty$. 
\begin{lemma}\label[lemma]{lemma:pca_comparison_spiked_cov}
    Consider centered data with distribution satisfying the assumptions of \Cref{theorem:det_equivalent_gen_error}, and covariance $\Sigma = I_{d} + \theta vv^{\top}$, for $\theta \geq 0$ and $v \in \mathbb{S}^{d-1}$. Let $L(\mathrm{SSR})$ denote the infinite-data generalization error in \cref{eq:gen_error_expression} for Self-Supervised Ridge Regression, and $L(\mathrm{PCA}_{p})$ denote the infinite-data generalization error for PCA with $p$ directions. Then, for any values of $\theta, \lambda$ and $p \geq 1$, we have: 
    \[
    L(\mathrm{PCA}_{p}) < L(\mathrm{SSR}). 
    \]
\end{lemma}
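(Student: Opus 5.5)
The plan is to compute both infinite-data errors in closed form and compare them. The SSR side will be bounded below by the approximation error of \Cref{lemma:approximation_error}, which is explicit for $\Sigma = I_d + \theta vv^\top$. The PCA side is a simple eigenvalue sum.

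\textbf{Step 1: the infinite-data SSR predictor.} As $n\to\infty$ at fixed $d$, the law of large numbers gives $\hat\Sigma\to\Sigma$. Hence $Q(\lambda)\to(\Sigma+\lambda I_d)^{-1}$, and by \Cref{lemma:explicit_expression_A},
\[
\hat A\to A_\infty:=I_d-(\Sigma+\lambda I_d)^{-1}\big[\mathrm{diag}((\Sigma+\lambda I_d)^{-1})\big]^{-1}.
\]
This is consistent with \Cref{theorem:det_equivalent_gen_error} with $\kappa_\star=\lambda$ and $\mathrm{df}_2/n\to0$. By construction $\mathrm{diag}(A_\infty)=0$: each column $\hat a_k$ satisfies the constraint $a_k=0$, and this passes to the limit. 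Therefore $A_\infty$ is feasible for the minimization in \Cref{lemma:approximation_error}, and
\[
L(\mathrm{SSR})=L(A_\infty)\ \ge\ L_{\rm App}=\frac1d\sum_{k=1}^d\frac{1}{(\Sigma^{-1})_{kk}}.
\]

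\textbf{Step 2: evaluate $L_{\rm App}$.} Sherman--Morrison gives $\Sigma^{-1}=I_d-\frac{\theta}{1+\theta}vv^\top$, so
\[
(\Sigma^{-1})_{kk}=1-\frac{\theta}{1+\theta}v_k^2\in(0,1].
\]
Each term $1/(\Sigma^{-1})_{kk}$ is therefore at least $1$, and $L(\mathrm{SSR})\ge L_{\rm App}\ge 1$ for every $\theta\ge0$ and every $\lambda>0$.

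\textbf{Step 3: the infinite-data PCA error.} With $n=\infty$ the empirical objective in \cref{eq:def_PCA} becomes $\frac1d\mathbb{E}\|x-Ax\|^2$. Among rank-$p$ matrices, this is minimized by the orthogonal projector onto the top-$p$ eigenspace of $\Sigma$, by the Eckart--Young theorem. The resulting error is $\frac1d\big(\mathrm{Tr}\,\Sigma-\sum_{j\le p}\lambda_j(\Sigma)\big)$. The spectrum of $\Sigma$ is $1+\theta$ (once, eigenvector $v$) and $1$ (with multiplicity $d-1$). For $p\ge1$ this gives
\[
L(\mathrm{PCA}_p)=\frac1d\big(d+\theta-(1+\theta)-(p-1)\big)=\frac{d-p}{d}<1\le L(\mathrm{SSR}),
\]
which is the claim.

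\textbf{Main obstacle.} The only delicate point is interpreting the PCA estimator in \cref{eq:def_PCA}. The constraint "$A\succ0$, rank $p$" is formally degenerate, so I would read it as $A\succeq0$ of rank $p$. I would then argue that the orthogonal projector is optimal: for any rank-$p$ matrix $A$, the vector $Ax$ lies in a $p$-dimensional subspace, so $\mathbb{E}\|x-Ax\|^2$ is at least the error of projecting onto that subspace. Eckart--Young then bounds this below by the tail eigenvalue sum. Even if a suboptimal rank-$p$ choice were used, it suffices that the projector onto $\mathrm{span}(v)$ together with any $p-1$ further orthonormal directions attains $(d-p)/d<1$. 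Strictness of the inequality comes from $p\ge1$ on the PCA side, so it holds even at $\theta=0$, where $L_{\rm App}=1$ exactly.
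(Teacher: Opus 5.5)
Your proof is correct, but it takes a different route from the paper's. The paper evaluates the population ridge risk exactly. It applies Sherman--Morrison to $(\tau I_d+\theta vv^\top)^{-1}$ with $\tau=1+\lambda$ and writes $L(\mathrm{SSR})=\frac1d\sum_{\ell}\frac{1+av_\ell^2}{(1-bv_\ell^2)^2}$, with explicit $\lambda$-dependent constants $a,b$. It then shows by direct algebra that each summand exceeds $1$ by $T_\ell=\frac{v_\ell^2(a+2b-b^2v_\ell^2)}{(1-bv_\ell^2)^2}\ge0$, using $a+2b-b^2v_\ell^2=\frac{\theta(\tau^2+\theta(1-v_\ell^2))}{(\tau+\theta)^2}$.

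You bypass this computation entirely. Since the population ridge predictor has zero diagonal, it is feasible for the oracle problem of \Cref{lemma:approximation_error}, so $L(\mathrm{SSR})\ge L_{\rm App}$. You then only need Sherman--Morrison for $\Sigma^{-1}$ to get $L_{\rm App}\ge1$. In fact only $\Sigma\succeq I_d$ is needed: it forces $(\Sigma^{-1})_{kk}\le1$, so every coordinate keeps at least unit residual variance after the best linear prediction from the others.

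Your argument has several advantages. It is shorter and uniform in $\lambda$, covering $\lambda\to0^+$ and indeed any zero-diagonal linear predictor. It shows that the conclusion is a limitation of the hypothesis class rather than of the ridge penalty. It is also insensitive to the $A$ versus $A^\top$ convention, which the paper uses inconsistently, because the zero-diagonal constraint set is closed under transposition.

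In exchange, the paper's computation gives the exact value of $L(\mathrm{SSR})$ at each $\lambda$. This yields the explicit gap $L(\mathrm{SSR})-L(\mathrm{PCA}_p)=\frac pd+\frac1d\sum_\ell T_\ell$, which your lower bound does not provide.
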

This lemma gives us insight on how strong the correlation in the data needs to be in order for SSR to achieve a reasonable performance. Recall that for isotropic data $\Sigma = I_{d}$ the best SSR matrix is $\hat{A}=0$, since there is no correlation in the data. We can see the spike $\theta vv^{\top}$ as a perturbation of this case, which introduces correlations between the entries of $x$. Indeed, for a constant SNR $\theta=\Theta_{d}(1)$, the correlations are vanishing in $d$ (because $\| v\|_2 =1$) so the correlations are rather weak in the high-dimensional limit. \Cref{fig:pca_spiked_covariance} illustrates this by comparing the empirical error of PCA with the asymptotic limit of the generalization error in \Cref{theorem:det_equivalent_gen_error} for different values of the SNR $\theta$ in the large $n$ limit. As it can be seen, even the approximation error in this case is not better than $L(\hat{A}) = 1$, which corresponds to no reconstruction.  

We now turn to the question of whether signatures of the low-dimensional structure of $\Sigma$ can be detected in the spectrum of the SSR matrix $\hat{A}$. Recall that  the sample covariance matrix $\hat{\Sigma}_{n}$ in this model undergoes a Baik-Ben Arous-Peché (BBP) transition \citep{BBAP05}: the spectrum of $\hat{\Sigma}$ will not identify the spiked direction $v$ unless $\theta>\theta^\star$, for some critical value $\theta^{\star}>0$. In other words, under certain values of $\theta$ the spectral distribution of $\hat{\Sigma}$ is no different than if the model was sampled from pure noise (isotropic data). As $\theta$ is increased, there exists a value $\theta^\star$ after which an isolated eigenvalue pops out of the Marchenko-Pastur bulk of $\hat{\Sigma}_{n}$.

Since the matrix $\hat{A}$ is related to the sample covariance through the resolvent $Q(\lambda)$, a natural question is whether a similar BBP transition can be observed for $\hat{A}$.  Building on the deterministic equivalents found in\ref{thm:det_equivalent_A_hat} and \Cref{corollary:universality}, we prove the following result concerning this phase transition. 

\begin{proposition}
Let $s_1, s_2$ denote the top two eigenvalues of the SSR matrix $\hat A$ in the ridgeless limit, and let $\xi$ be the solution of \Cref{eq:chi_self_consistent_diag}. Then
\begin{equation}
    s_1 \xrightarrow[]{n, d \to \infty}\begin{cases}
        1 - z^\ast &\text{if } \theta  > \frac{1}{\sqrt \alpha}\\
        \frac{2}{1 + \sqrt \alpha} &\text{otherwise },
    \end{cases}
\end{equation}
where $z^\ast$ solves equation: 
\begin{equation}\label{eq:z_equation}
    \frac{z\alpha}{(\alpha-1)(1-\chi)} = \frac{1}{1 +\theta},
\end{equation}
for $\chi$ solving \cref{eq:self_consistent_equation_diagonal}, 
and $ s_2 \xrightarrow[]{n, d \to \infty} \frac{2}{1 + \sqrt \alpha}$.
\label[proposition]{prop:BBP_transition}
\end{proposition}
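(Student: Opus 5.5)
Our plan is to treat the spike as a rank-one perturbation of the diagonal (isotropic) case and locate outliers with the usual ``secular equation'' argument. We work with the symmetric matrix $M=\Lambda^{1/2}Q(\lambda)\Lambda^{1/2}$, whose spectrum is $1-\mathrm{spec}(\hat A)$ by \Cref{rmk:real_eigenvalues_A}. The top eigenvalues of $\hat A$ therefore correspond to the bottom eigenvalues of $M$; this explains the $1-z^\ast$ in the statement. By rotation invariance of the Gaussian case, we first reduce to $v=e_1$. For uniform $v$ we instead argue that $\bar D$ is asymptotically a multiple of the identity plus a vanishing perturbation, since $(\tilde m\Sigma+I)^{-1}_{kk}=1-O(\theta v_k^2)$ and $v_k^2=O(1/d)$.

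\textbf{Step 1 (bulk and $s_2$).} Write $\Sigma=I_d+\theta vv^\top$. By \Cref{thm:det_equivalent_A_hat}, in the ridgeless limit the resolvent equivalent $\mathcal G$ differs from that of \Cref{corollary:universality} only by a finite-rank term. Hence the limiting spectral measure is unchanged. Its support, by the corollary, gives the bulk edge $\frac{2}{1+\sqrt\alpha}$ for $\hat A$, i.e.\ the lower edge $1-\frac{2}{1+\sqrt\alpha}$ for $M$. Interlacing for the rank-one perturbation (of $Q$, then conjugated by the nearly scalar $\Lambda$) shows that at most one eigenvalue can leave the bulk. This gives $s_2\to\frac{2}{1+\sqrt\alpha}$.

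\textbf{Step 2 (outlier equation).} Using the isotropic deterministic equivalent, write $\mathcal G$ with $\Sigma=I+\theta vv^\top$ and expand by Sherman--Morrison:
\[
\Big(\tfrac{1}{1-\chi}\Sigma+\lambda I\Big)^{-1}=\Big(\tfrac{1}{1-\chi}+\lambda\Big)^{-1}I-\text{rank one}.
\]
An outlier of $M$ at a real point $z$ outside the bulk corresponds to a pole of $v^\top\mathcal G(z)v$, i.e.\ a zero of a $1\times1$ secular determinant $1+\theta\,\phi(z)=0$. Here $\phi$ is built from the isotropic Stieltjes-type quantity $\chi(z)$ of \eqref{eq:chi_self_consistent_diag}, extended analytically to the real axis below the bulk. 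Taking $\lambda\to0$ with $\bar D\to(1-1/\alpha)I$, the same normalization as in \Cref{corollary:universality}, we aim to simplify this determinant to \eqref{eq:z_equation}:
\[
\frac{z\alpha}{(\alpha-1)(1-\chi(z))}=\frac{1}{1+\theta}.
\]
To justify that empirical outliers follow the deterministic equation, we use the standard argument. The determinant $\det(I+\theta\,U^\top G_0(z)U)$, with $G_0$ the resolvent of the unperturbed model, converges uniformly on compacts away from the bulk, because the isotropic local law gives $v^\top G_0 v\to v^\top\mathcal G_0 v$. Hurwitz/Rouché's theorem then matches zeros. A subtlety is that the perturbation enters nonlinearly, through $Q$ and through $\Lambda$ depending on $Q$. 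We handle this by using the concentration of $\Lambda$ (established en route to \Cref{thm:det_equivalent_A_hat}) to replace $\Lambda$ by $\bar D$ up to $o(1)$ in operator norm. That leaves a genuine rank-one perturbation of the sample covariance, to which Sherman--Morrison applies to $Q$ directly.

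\textbf{Step 3 (threshold).} Finally we analyze when the equation has a solution below the lower bulk edge. The function $z\mapsto \frac{z\alpha}{(\alpha-1)(1-\chi(z))}$ is monotone on that interval, and we evaluate it at the edge $z_-=1-\frac{2}{1+\sqrt\alpha}$. There the discriminant of \eqref{eq:chi_self_consistent_diag} vanishes, so $\chi(z_-)=(1-z_-)/2$. A direct computation should give the edge value $\frac{1}{1+\theta_c}$ with $\theta_c=1/\sqrt\alpha$. Monotonicity then yields a solution iff $\theta>1/\sqrt\alpha$, and otherwise $s_1$ sticks to the edge.

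The \textbf{main obstacle} is Step 2. Rigorously controlling the outlier requires the isotropic (quadratic-form) version of the deterministic equivalent uniformly up to real $z$ near the edge, not just the averaged trace statement in \Cref{thm:det_equivalent_A_hat}. It also requires showing that the $\Lambda$-fluctuations, which are only $o(1)$ entrywise, do not create or destroy an outlier. For the latter we would first try a perturbation bound in operator norm, $\|\Lambda-\bar D\|_{\mathrm{op}}\to0$, combined with Weyl's inequality.
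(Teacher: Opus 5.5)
\textbf{Verdict: the overall route is right, but one step is false and must be removed.}

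Your Steps 2 and 3 are the paper's argument. You insert $\Sigma=I_d+\theta vv^\top$ and $\bar D\approx\frac{\alpha-1}{\alpha}I_d$ into $\mathcal G$, read off the singularity $\frac{z\alpha}{(\alpha-1)(1-\chi)}=\frac{1}{1+\theta}$, and evaluate it at the lower edge $z_-=\frac{\sqrt\alpha-1}{\sqrt\alpha+1}$, where $\chi=\frac{1}{1+\sqrt\alpha}$, giving $\theta_c=1/\sqrt\alpha$. Your branch and monotonicity discussion, and the interlacing argument for $s_2$, fill in points the paper leaves implicit.

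\textbf{The step that fails} is the opening reduction to $v=e_1$ ``by rotation invariance''. The SSR matrix is not rotation-equivariant, because $\Lambda=[\mathrm{diag}\,Q(\lambda)]^{-1}$ depends on the coordinate basis. Rotating the spike therefore changes the spectrum of $\hat A$.

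For $v=e_1$ the conclusion is actually false. Then $\Sigma$ is diagonal, and at $\lambda=0$, with $S=\frac1n Z^\top Z$, one has $Q=\Sigma^{-1/2}S^{-1}\Sigma^{-1/2}$ and $\Lambda=\Sigma\,[\mathrm{diag}(S^{-1})]^{-1}$. Since diagonal matrices commute, this gives exactly $\hat A_\Sigma=\Sigma^{-1/2}\hat A_{I}\Sigma^{1/2}$. The spectrum therefore coincides with the null one, and there is no outlier for any $\theta$. Equivalently, $\bar D=\frac{\alpha-1}{\alpha}\Sigma$ cancels the spike in $\mathcal G$; this is the paper's remark that a localized $v$ makes the correction in $\bar D$ non-negligible.

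So the proposition genuinely needs $\|v\|_\infty\to0$, which is what the paper assumes. Drop the $e_1$ reduction and keep only your delocalized branch. For uniform $v$ one has $\max_k v_k^2=O(\log d/d)$ with high probability, not $O(1/d)$, but that still suffices.

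\textbf{The obstacle you flag in Step 2 can be bypassed.} Once delocalization is in place, no new isotropic law for $\mathcal G$ near the edge is needed:
\begin{itemize}
\item Since $\Lambda$ is diagonal, uniform concentration of the diagonal entries of $Q$ gives $\|\Lambda-cI_d\|_{\mathrm{op}}\to0$ with $c=1-1/\alpha$.
\item $\|Q\|_{\mathrm{op}}$ is bounded for $\alpha>1$, so Weyl's inequality gives $\lambda_i(M)=c/\mu_i(\hat\Sigma)+o(1)$, where $\mu_i(\hat\Sigma)$ are the sample-covariance eigenvalues.
\item The classical BBP theorem for $\hat\Sigma$ then yields $s_1$ and $s_2$ at once. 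The top eigenvalue tends to $(1+\theta)(1+\frac{1}{\alpha\theta})$ if $\theta>1/\sqrt\alpha$ and to $(1+1/\sqrt\alpha)^2$ otherwise, and the second eigenvalue always sits at the edge.
\item This gives $z^\ast=\frac{(\alpha-1)\theta}{(1+\theta)(1+\alpha\theta)}$. A direct check shows it solves the $z^\ast$-equation with $\chi=\frac{1}{1+\alpha\theta}$, the root of $\chi^2+(z-1)\chi+\frac{z}{\alpha-1}=0$ on the branch with $\chi(0)=0$.
\end{itemize}
This route avoids both the edge local law and any Rouch\'e argument. It turns the formal pole computation, which is all the paper does, into a proof.
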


\begin{remark}
The classical BBP transition for the covariance also occurs at $\theta^*= \sqrt{\frac{d}{n}} = \sqrt{\frac{1}{\alpha}}$, so the phase transition occurs at the same SNR in both models. The second eigenvalue $s_2$ converges to the top of the bulk in the case where $\Sigma = I_d$ (see \Cref{corollary:universality}). 
\end{remark}

The result in \Cref{prop:BBP_transition} is illustrated in \Cref{fig:pca_spiked_covariance}.

\subsection{Auto-Regressive Model of Order 1}
\label{sec:toeplitz}
\begin{figure}[t]
    \centering
    \begin{subfigure}{0.48\textwidth}
        \centering
        \includegraphics[width=\linewidth]{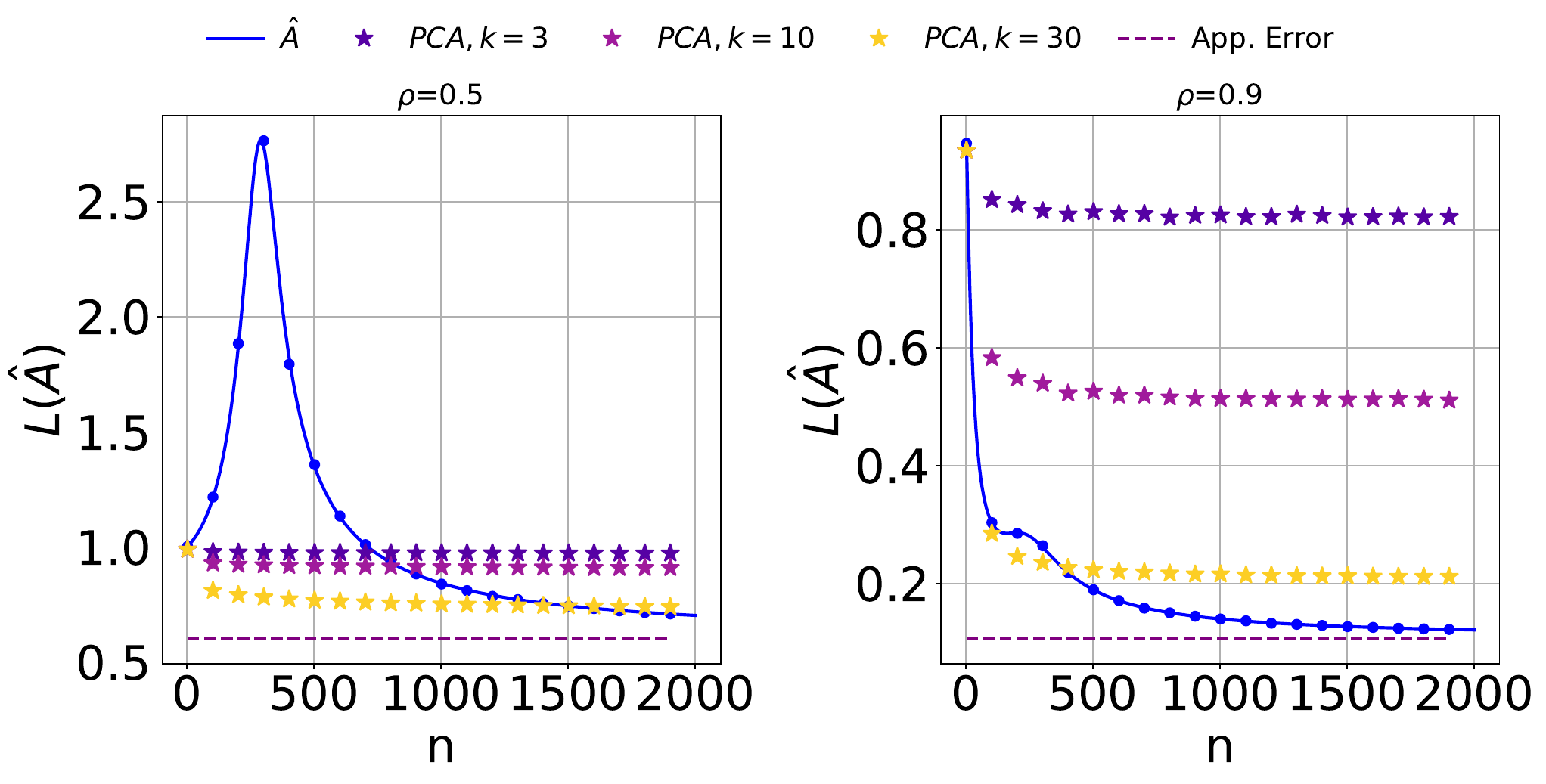}
    \end{subfigure}
    \hfill
    \begin{subfigure}{0.48\textwidth}
        \centering
        \includegraphics[width=\linewidth]{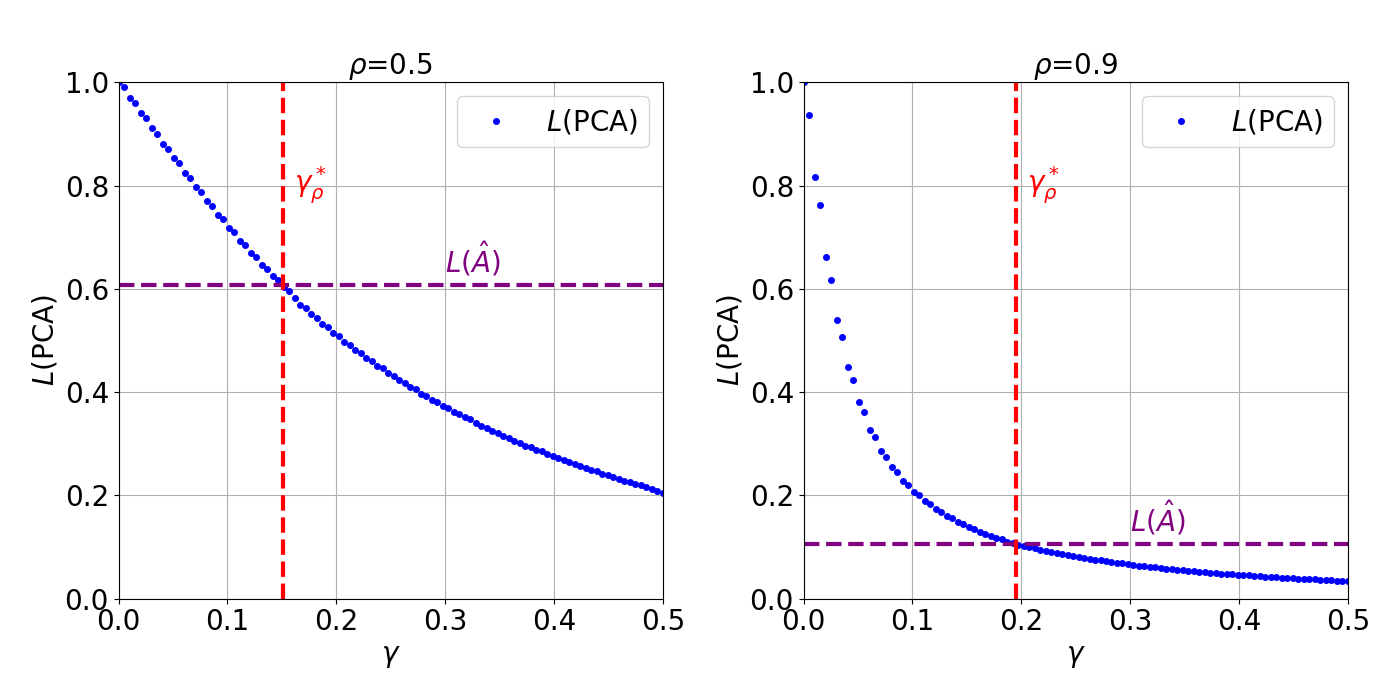}
    \end{subfigure}
    \caption{\textbf{Left:} Empirical generalization error for PCA and the self-supervised ridge estimator for Gaussian data with a Toeplitz covariance, for different values of $\rho$. In both pictures, $d=200$ and $\lambda=0.01$. \textbf{Right: } Generalization error for PCA and the self-supervised Ridge estimator for Gaussian data with a Toeplitz covariance, for different values of $\rho$. The sample size is fixed at $n=20.000$, the dimension is $d=300$ and $\lambda = 10^{-5}$. For PCA with $p$ directions, $\gamma = \frac{p}{d}$.}
    \label{fig:pca_toeplitz}
\end{figure}

As a second example, we now consider a case in which $x$ follows an auto-regressive structure common in sequence modeling. The simplest such example is stationary autoregressive Process of order one (a.k.a. AR(1) process):
\begin{equation}
x_{k} = \rho x_{k-1} + \varepsilon_{k}, \quad k \in [d],  
\label{eq:ar_1_model}
\end{equation}
where $\varepsilon_{k}$ are i.i.d centered random variables with variance $\frac{1}{1-\rho^2}$ so that ${\rm Var}(x_{k}) = 1$, and $0 < \rho < 1$ is a parameter. Note that the $\rho \to 0^{+}$ limit corresponds to the isotropic setting. 

The covariance of AR(k) processes are Toeplitz matrices (see, e.g Appendix 3 in \citep{PB20}): A matrix $\Sigma \in \RR^{d \times d}$ such that its entry $\Sigma_{jk}$ depends on the difference $|j - k|$. In the AR(1) case: 
\begin{equation}
    \Sigma_{jk} = \EE \left[x_{j}x_{k}\right] = \rho^{|j-k|},
    \label{eq:ar_1_toeplitz_cov}
\end{equation}
which describes the correlation between $x_{j}$ and $x_{k}$ that decays exponentially as $|j-k|$ grows. The spectrum and properties of this matrix have been studied, for example, in \citep{KMS53} and \citep{NS20}. Our first result for this setting shows that unlike the spiked covariance case, in the population limit SSR outperforms PCA unless the number of directions $p=\Theta(d)$. This is summarized in the following proposition. 

\begin{proposition}
Consider centered data with distribution satisfying the assumptions of \Cref{theorem:det_equivalent_gen_error}, and covariance of the form in \Cref{eq:ar_1_toeplitz_cov}, with a fixed parameter $\rho \in (0,1)$.  Let $L(\mathrm{SSR})$ and $L(\mathrm{PCA}_{p})$ denote the population limit of the generalization error for self-supervised ridge regression and PCA with $p$ directions, respectively. Let $\gamma:=\frac{p}{d}$. Then, as $d$ grows to infinity and $\lambda \to 0^{+}$, we have 
\begin{equation}
L(\mathrm{PCA}_{p}) < L (\mathrm{SSR}),    
\end{equation}
if and only if
\[
    \dfrac{2}{\pi}\arctan \left ( \frac{1-\rho}{1+\rho}\tan \left (  \frac{\pi}{2} \left (\dfrac{(2\rho)^2}{(1+\rho)^2 + (1-\rho)^2} \right )\right ) \right ) \leq \gamma.
\]
\label[proposition]{prop:pca_comparison_toeplitz}
\end{proposition}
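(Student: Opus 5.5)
The plan is to compute both population errors in closed form as $d\to\infty$ and compare them directly. The Toeplitz structure of \cref{eq:ar_1_toeplitz_cov} gives an explicit inverse, which handles SSR. Szegő-type eigenvalue asymptotics handle PCA.

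\textbf{SSR side.} First I reduce $L(\mathrm{SSR})$ to the approximation error of \Cref{lemma:approximation_error}. In \Cref{theorem:det_equivalent_gen_error}, send $n\to\infty$ with $d$ fixed, i.e.\ $\alpha\to\infty$. Then \cref{eq:equation_kappa_lambda} gives $\kappa_\star(\lambda)\to\lambda$, and $\mathrm{df}_2^\Sigma/(n-\mathrm{df}_2^\Sigma)\to 0$ since $\mathrm{df}_2^\Sigma\le d$. Letting $\lambda\to0^+$ then sends $L_1\to L_{\rm App}$. This step needs continuity of $\kappa\mapsto \frac1d\mathrm{Tr}[\bar D^2(\Sigma+\kappa I)^{-1}\Sigma(\Sigma+\kappa I)^{-1}]$ at $\kappa=0$, uniformly in $d$. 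That holds because the smallest eigenvalue of the AR(1) covariance is bounded below by $\frac{1-\rho}{1+\rho}>0$ and $\|\Sigma\|_{\rm op}\le\frac{1+\rho}{1-\rho}$.

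Next, $\Sigma^{-1}$ is the well-known tridiagonal matrix $\frac{1}{1-\rho^2}\,\mathrm{tridiag}(-\rho,\,1+\rho^2,\,-\rho)$, except that its two corner diagonal entries equal $\frac1{1-\rho^2}$. Hence
\[
L_{\rm App}=\frac1d\sum_k\frac1{(\Sigma^{-1})_{kk}}\longrightarrow \frac{1-\rho^2}{1+\rho^2},
\]
since the two boundary terms contribute $O(1/d)$.

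\textbf{PCA side.} With infinite data, $A^{\rm PCA}_p$ is the projector onto the top-$p$ eigenvectors of $\Sigma$. Its error is therefore $\frac1d\sum_{k>p}\lambda_k(\Sigma)$, with eigenvalues sorted in decreasing order. The symbol of $\Sigma$ is
\[
f(\omega)=\sum_{m\in\mathbb Z}\rho^{|m|}e^{im\omega}=\frac{1-\rho^2}{1-2\rho\cos\omega+\rho^2},
\]
which is continuous and strictly decreasing on $[0,\pi]$. By Szegő's theorem, the empirical eigenvalue distribution of $\Sigma$ converges to the law of $f(\omega)$ with $\omega\sim\mathrm{Unif}[0,\pi]$.

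Because $f$ is bounded and monotone, and its pushforward law has no atoms, the normalized sum of the smallest $d-p$ eigenvalues converges to an integral over the lower quantile range. I would justify this by weak convergence plus uniform integrability, or alternatively by the explicit eigenvalues from \citep{KMS53}. The result is
\[
\frac1d\sum_{k>p}\lambda_k\to\frac1\pi\int_{\pi\gamma}^{\pi}f(\omega)\,d\omega .
\]
Using the antiderivative $\int f\,d\omega=2\arctan\!\big(\tfrac{1+\rho}{1-\rho}\tan\tfrac\omega2\big)$, this limit equals
\[
1-\tfrac2\pi\arctan\!\big(\tfrac{1+\rho}{1-\rho}\tan\tfrac{\pi\gamma}2\big).
\]

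\textbf{Comparison.} PCA beats SSR iff this PCA limit is at most $\frac{1-\rho^2}{1+\rho^2}$. Rearranging, this is equivalent to
\[
\tfrac2\pi\arctan\!\big(\tfrac{1+\rho}{1-\rho}\tan\tfrac{\pi\gamma}2\big)\ \ge\ \frac{2\rho^2}{1+\rho^2}=\frac{(2\rho)^2}{(1+\rho)^2+(1-\rho)^2}.
\]
Both $\arctan$ and $\tan$ are increasing on the relevant ranges, so I can invert them. This gives $\tan\frac{\pi\gamma}{2}\ge\frac{1-\rho}{1+\rho}\tan\!\big(\frac\pi2\cdot\frac{(2\rho)^2}{(1+\rho)^2+(1-\rho)^2}\big)$, which is exactly the stated threshold on $\gamma$.

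\textbf{Main obstacle.} I expect the hard part to be the rigorous handling of the limits. There are two issues. The first is the order $n\to\infty$, $\lambda\to0^+$, $d\to\infty$ on the SSR side, which the uniform spectral bounds on $\Sigma$ should control. The second is the partial-sum Szegő statement on the PCA side. For the latter I would first try the explicit AR(1) eigenvalues $\lambda_j=f(\theta_j)$, where the $\theta_j$ interlace with $j\pi/(d+1)$. This reduces the PCA limit to a Riemann sum and avoids general Toeplitz theory.

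I would also need to settle the boundary case: the statement writes a strict inequality for the errors but a non-strict one for $\gamma$. I would argue that equality occurs only at the threshold itself, so in the limit the two conditions differ only on a single value of $\gamma$.
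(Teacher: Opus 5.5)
Your proposal is correct and arrives at the paper's threshold, but you compute the SSR side by a different route. The paper keeps $\lambda>0$ and writes the population loss as $\frac1d\sum_\ell (Q_{\ell\ell}-\lambda (Q^2)_{\ell\ell})/Q_{\ell\ell}^2$ with $Q=(\Sigma+\lambda I_d)^{-1}$. It then replaces the Toeplitz $\Sigma$ by a circulant matrix, so that every $Q_{\ell\ell}$ equals $\frac1d\mathrm{Tr}\,Q$. Evaluating this Stieltjes transform in closed form gives $f(\lambda,\rho)$, and only then does the paper let $\lambda\to0$, obtaining $2/(E^{+}+E^{-})=\frac{1-\rho^2}{1+\rho^2}$.

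You instead go to $\lambda=0$ first and read $L_{\rm App}$ off the exact tridiagonal AR(1) precision matrix. You pay for the interchange of limits with the bound $\lambda_{\min}(\Sigma)\ge\frac{1-\rho}{1+\rho}$, which does make the population loss Lipschitz in $\lambda$ uniformly in $d$.

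Your route is more elementary and more rigorous, because it needs no circulant surrogate. The paper never justifies that the diagonal of the Toeplitz resolvent matches the circulant one; this is true away from the boundary by decay of resolvent entries, but the paper only sketches the argument. In exchange, the paper's route gives a closed form at every $\lambda>0$ and uses only the symbol, so it adapts to other Toeplitz covariances. Yours relies on the sparse precision matrix specific to autoregressive models. On the PCA side the two arguments agree, and your Szeg\H{o}/Kac--Murdock--Szeg\H{o} justification of the partial eigenvalue sums is more careful than the paper's $\lambda_\ell\sim E(\ell/d)$.

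Two minor points. First, invoking \Cref{theorem:det_equivalent_gen_error} with $n\to\infty$ at fixed $d$ falls outside that theorem's proportional regime, and it is unnecessary. \Cref{lemma:explicit_expression_A} with $\hat\Sigma$ replaced by $\Sigma$ gives the population SSR loss directly, which is what the paper does. Second, your boundary observation is right: at the threshold value of $\gamma$ the two limiting errors coincide, so the statement's ``$\leq$'' on $\gamma$ should really be strict, a point the paper glosses over.
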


\begin{figure}[t]
    \centering
    \includegraphics[width=0.4\linewidth]{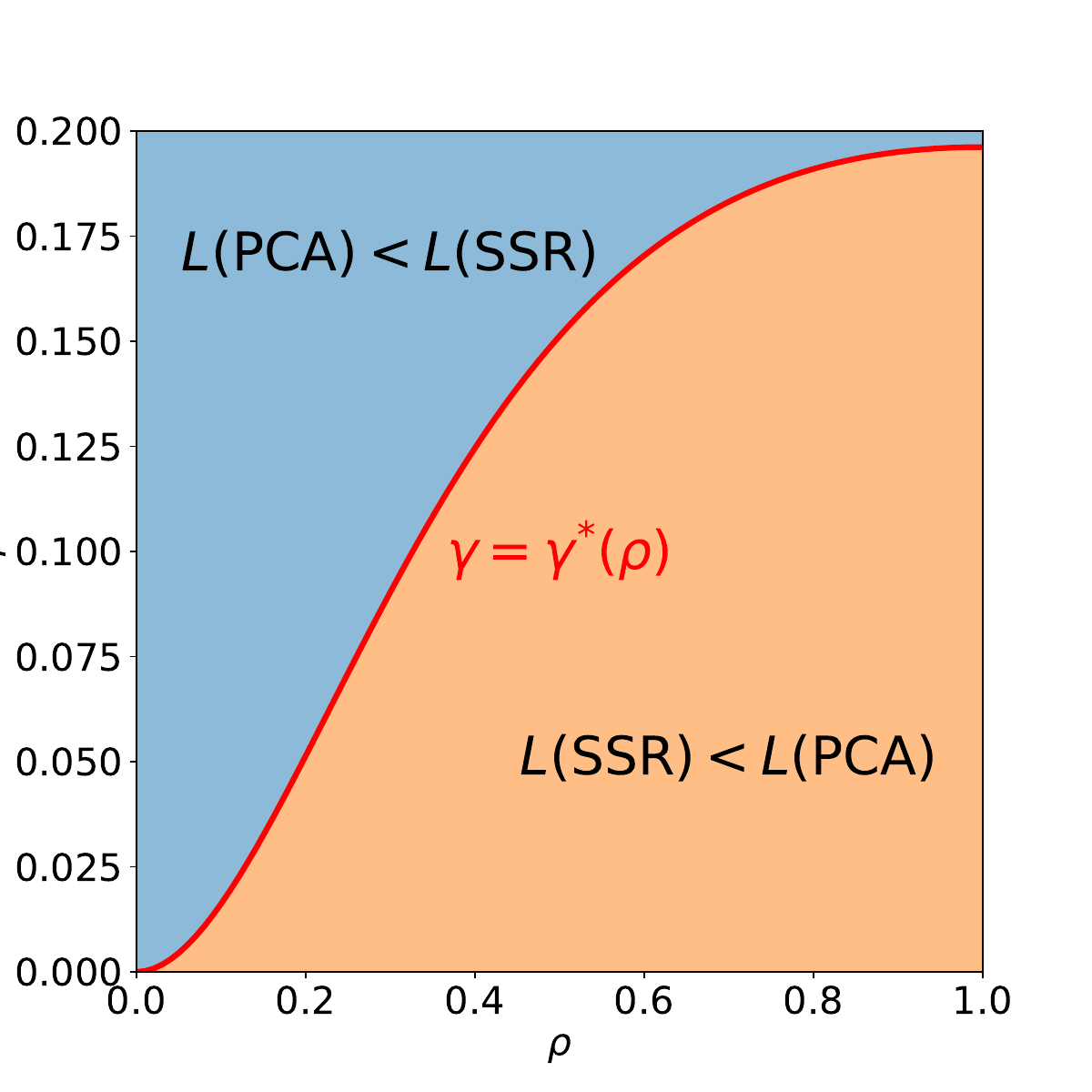}
    \caption{The phase transition curve defined by \Cref{eq:phase_transition_curve}. Above the curves, we have that PCA achieves a better generalization error than the SSR predictor. Below the curve, the SSR predictor achieves a better generalization error. }
    \label{fig:phase_transition_boundary}
\end{figure}

Note that the LHS in the inequality in \Cref{prop:pca_comparison_toeplitz} is increasing in $\rho$. Therefore, as the SNR $\rho$ in the AR(1) model in \Cref{eq:ar_1_model} becomes bigger, it becomes harder for PCA to outperform the self-supervised estimator (as measured by training sample-complexity). Moreover, in the population limit, the number of directions has to be extensive in $d$ in order to reach the same performance the SSR predictor. Figure~\ref{fig:phase_transition_boundary} illustrates the curve $\gamma^{\star}(\rho)$.  

\Cref{prop:pca_comparison_toeplitz} is also illustrated in \Cref{fig:pca_toeplitz}. The Figure in the left compares these two estimators for increasing values of $\rho$ in \Cref{eq:ar_1_model}, corresponding to an increasing dependence of $x_k$ on $x_{k-1}$. As the sample complexity $\alpha=\sfrac{n}{d}$ grows, we see that for PCA to be at the same generalization error as the self-supervised estimator, a greater number of directions $p$ is needed. Its also possible to see that for higher $\rho$ the peak of $L(\hat{A})$ is also lower. This is explained by the fact that, as computed in Appendix~\ref{appendix_toeplitz_finite_alpha}, for high-dimensional Toeplitz matrices, the degrees of freedom are approximately: 
\begin{equation}
\mathrm{df}^{\Sigma}_2(\kappa) \approx \dfrac{\sqrt{1- \rho^{2}}}{4\sqrt{\kappa }}.
\end{equation}
In the right of \Cref{fig:pca_toeplitz},  $\gamma^{\star}_\rho$ denotes the quantity on the LHS of \Cref{prop:pca_comparison_toeplitz}. For both values of $\rho$ in the figure, $L(\mathrm{SSR}) < L(\mathrm{PCA})$, approximately until $\gamma$ reaches $\gamma^{*}_\rho$.

\section{Conclusion}

In this work, we studied masked self supervised models in their simplest declination: self-supervised ridge regression. We derived asymptotic limits for the generalization error and the asymptotic spectral distribution of the matrix-valued SSR estimator. With this characterization, we proved the importance of structure in the data for this type of tasks, and tested this on two classical statistical models: the spiked covariance model and an auto-regressive process of order $1$. 

\section*{Acknowledgements}
We would like to thank Lenka Zdeborov\'a for insightful discussions. This research was motivated by discussions during the ``Huddle on Learning and Inference from Structured Data'' at ICTP
Trieste in 2023. We would like to thank the huddle organizers Jean Barbier, Manuel S\'aenz, Subhabrata Sen, and Pragya Sur for their hospitality and many helpful discussion. BL and AW were supported by the French government, managed by the National Research Agency (ANR), under the France 2030 program with the reference ``ANR-23-IACL-0008'' and the Choose France - CNRS AI Rising Talents program. AW was also funded by the PSL Graduate Program in Computer Science. The work of YML is supported by a Harvard College Professorship, by the Harvard FAS Dean's Fund for Promising Scholarship, and by DARPA under grant DIAL-FP-038. FG is supported by European Union-NextGenerationEU
(NGEU) and she is partially supported by project SERICS (PE00000014) under the MUR National Recovery and Resilience Plan.

\bibliography{bibliography}

@book{CL22,
  title={Random matrix methods for machine learning},
  author={Couillet, Romain and Liao, Zhenyu},
  year={2022},
  publisher={Cambridge University Press}
}

@article{V10,
  title={Introduction to the non-asymptotic analysis of random matrices},
  author={Vershynin, Roman},
  journal={arXiv preprint arXiv:1011.3027},
  year={2010}
}

@article{mei2022generalization,
  title={Generalization error of random feature and kernel methods: Hypercontractivity and kernel matrix concentration},
  author={Mei, Song and Misiakiewicz, Theodor and Montanari, Andrea},
  journal={Applied and Computational Harmonic Analysis},
  volume={59},
  pages={3--84},
  year={2022},
  publisher={Elsevier}
}

@inproceedings{devlin2019bert,
  title={Bert: Pre-training of deep bidirectional transformers for language understanding},
  author={Devlin, Jacob and Chang, Ming-Wei and Lee, Kenton and Toutanova, Kristina},
  booktitle={Proceedings of the 2019 conference of the North American chapter of the association for computational linguistics: human language technologies, volume 1 (long and short papers)},
  pages={4171--4186},
  year={2019}
}

@article{liu2019roberta,
  title={Roberta: A robustly optimized bert pretraining approach},
  author={Liu, Yinhan and Ott, Myle and Goyal, Naman and Du, Jingfei and Joshi, Mandar and Chen, Danqi and Levy, Omer and Lewis, Mike and Zettlemoyer, Luke and Stoyanov, Veselin},
  journal={arXiv preprint arXiv:1907.11692},
  year={2019}
}

@article{B24,
  title={High-dimensional analysis of double descent for linear regression with random projections},
  author={Bach, Francis},
  journal={SIAM Journal on Mathematics of Data Science},
  volume={6},
  number={1},
  pages={26--50},
  year={2024},
  publisher={SIAM}
}

@book{PB20,
  title={A first course in random matrix theory: for physicists, engineers and data scientists},
  author={Potters, Marc and Bouchaud, Jean-Philippe},
  year={2020},
  publisher={Cambridge University Press}
}

@article{NS20,
  title = {The Toeplitz matrix {$e^{-\kappa |i-j|}$} and its application to a layered electron gas},
  author = {Narayan, Onuttom and Shastry, B. Sriram},
  journal = {arXiv preprint arXiv:2006.15436},
  year = {2020}
}

@article{KMS53,
  title={On the eigenvalues of certain Hermitian forms},
  author={Kac, Marek and Murdock, WL and Szeg{\"o}, Gabor},
  journal={Journal of Rational Mechanics and Analysis},
  volume={2},
  pages={767--800},
  year={1953},
  publisher={JSTOR}
}

@article{RW20,
  title={Nonasymptotic upper bounds for the reconstruction error of PCA},
  author={Reiss, Markus and Wahl, Martin},
  journal={The Annals of Statistics},
  volume={48},
  number={2},
  pages={1098--1123},
  year={2020},
  publisher={JSTOR}
}

@inproceedings{EHEM25,
  title={A Geometric Analysis of PCA},
  author={El Hanchi, Ayoub and Erdogdu, Murat A and Maddison, Chris J},
  booktitle={The Thirty-ninth Annual Conference on Neural Information Processing Systems},
  year={2025}
}

@article{BBAP05,
  title={Phase transition of the largest eigenvalue for nonnull complex sample covariance matrices},
  author={Baik, Jinho and Ben Arous, G{\'e}rard and Péché, Sandrine},
  journal={Annals of probability},
  volume={33},
  number={5},
  pages={1643--1697},
  year={2005}
}

@article{CM24,
  title={Dimension free ridge regression},
  author={Cheng, Chen and Montanari, Andrea},
  journal={The Annals of Statistics},
  volume={52},
  number={6},
  pages={2879--2912},
  year={2024},
  publisher={Institute of Mathematical Statistics}
}

@article{DW18,
  title={High-dimensional asymptotics of prediction: Ridge regression and classification},
  author={Dobriban, Edgar and Wager, Stefan},
  journal={The Annals of Statistics},
  volume={46},
  number={1},
  pages={247--279},
  year={2018},
  publisher={JSTOR}
}

@article{rende2024mapping,
  title={Mapping of attention mechanisms to a generalized potts model},
  author={Rende, Riccardo and Gerace, Federica and Laio, Alessandro and Goldt, Sebastian},
  journal={Physical Review Research},
  volume={6},
  number={2},
  pages={023057},
  year={2024},
  publisher={APS}
}

@inproceedings{he2022masked,
  title={Masked autoencoders are scalable vision learners},
  author={He, Kaiming and Chen, Xinlei and Xie, Saining and Li, Yanghao and Doll{\'a}r, Piotr and Girshick, Ross},
  booktitle={Proceedings of the IEEE/CVF conference on computer vision and pattern recognition},
  pages={16000--16009},
  year={2022}
}

@article{bao2021beit,
  title={Beit: Bert pre-training of image transformers},
  author={Bao, Hangbo and Dong, Li and Piao, Songhao and Wei, Furu},
  journal={arXiv preprint arXiv:2106.08254},
  year={2021}
}

@article{donoho2018optimal,
  title={Optimal shrinkage of eigenvalues in the spiked covariance model},
  author={Donoho, David L and Gavish, Matan and Johnstone, Iain M},
  journal={Annals of statistics},
  volume={46},
  number={4},
  pages={1742},
  year={2018}
}

@inproceedings{schroder2023deterministic,
  title={Deterministic equivalent and error universality of deep random features learning},
  author={Schr{\"o}der, Dominik and Cui, Hugo and Dmitriev, Daniil and Loureiro, Bruno},
  booktitle={International Conference on Machine Learning},
  pages={30285--30320},
  year={2023},
  organization={PMLR}
}

@InProceedings{schroder24asymptotics,
  title = 	 {Asymptotics of Learning with Deep Structured ({R}andom) Features},
  author =       {Schr\"{o}der, Dominik and Dmitriev, Daniil and Cui, Hugo and Loureiro, Bruno},
  booktitle = 	 {Proceedings of the 41st International Conference on Machine Learning},
  pages = 	 {43862--43894},
  year = 	 {2024},
  editor = 	 {Salakhutdinov, Ruslan and Kolter, Zico and Heller, Katherine and Weller, Adrian and Oliver, Nuria and Scarlett, Jonathan and Berkenkamp, Felix},
  volume = 	 {235},
  series = 	 {Proceedings of Machine Learning Research},
  month = 	 {21--27 Jul},
  publisher =    {PMLR},
}

@InProceedings{dandi25random,
  title = 	 {A Random Matrix Theory Perspective on the Spectrum of Learned Features and Asymptotic Generalization Capabilities},
  author =       {Dandi, Yatin and Pesce, Luca and Cui, Hugo and Krzakala, Florent and Lu, Yue and Loureiro, Bruno},
  booktitle = 	 {Proceedings of The 28th International Conference on Artificial Intelligence and Statistics},
  pages = 	 {2224--2232},
  year = 	 {2025},
  editor = 	 {Li, Yingzhen and Mandt, Stephan and Agrawal, Shipra and Khan, Emtiyaz},
  volume = 	 {258},
  series = 	 {Proceedings of Machine Learning Research},
  month = 	 {03--05 May},
  publisher =    {PMLR},
}

@article{knowles2017anisotropic,
  title={Anisotropic local laws for random matrices},
  author={Knowles, Antti and Yin, Jun},
  journal={Probability Theory and Related Fields},
  volume={169},
  number={1},
  pages={257--352},
  year={2017},
  publisher={Springer}
}

@article{burda2004signal,
  title={Signal and noise in correlation matrix},
  author={Burda, Zdzislaw and G{\"o}rlich, A and Jarosz, Andrzej and Jurkiewicz, Jerzy},
  journal={Physica A: Statistical Mechanics and its Applications},
  volume={343},
  pages={295--310},
  year={2004},
  publisher={Elsevier}
}

@article{marvcenko1967distribution,
  title={Distribution of eigenvalues for some sets of random matrices},
  author={Mar{\v{c}}enko, Vladimir A and Pastur, Leonid Andreevich},
  journal={Mathematics of the USSR-Sbornik},
  volume={1},
  number={4},
  pages={457},
  year={1967},
  publisher={IOP Publishing}
}

@article{bai2008large,
  title={Large sample covariance matrices without independence structures in columns},
  author={Bai, Zhidong and Zhou, Wang},
  journal={Statistica Sinica},
  pages={425--442},
  year={2008},
  publisher={JSTOR}
}

@article{louart2018concentration,
  title={Concentration of measure and large random matrices with an application to sample covariance matrices},
  author={Louart, Cosme and Couillet, Romain},
  journal={arXiv preprint arXiv:1805.08295},
  year={2018}
}

@article{chouard2022quantitative,
  title={Quantitative deterministic equivalent of sample covariance matrices with a general dependence structure},
  author={Chouard, Cl{\'e}ment},
  journal={arXiv preprint arXiv:2211.13044},
  year={2022}
}

@article{vigeant2026dyson,
author = {Latourelle-Vigeant, Hugo and Paquette, Elliot},
title = {Dyson equation for correlated linearizations and test error of random features regression},
journal = {Random Matrices: Theory and Applications},
volume = {15},
number = {01},
pages = {2550026},
year = {2026},
doi = {10.1142/S2010326325500261},
}

@article{ilbert2024analysing,
  title={Analysing multi-task regression via random matrix theory with application to time series forecasting},
  author={Ilbert, Romain and Tiomoko, Malik and Louart, Cosme and Odonnat, Ambroise and Feofanov, Vasilii and Palpanas, Themis and Redko, Ievgen},
  journal={Advances in Neural Information Processing Systems},
  volume={37},
  pages={115021--115057},
  year={2024}
}

@article{pennington2017nonlinear,
  title={Nonlinear random matrix theory for deep learning},
  author={Pennington, Jeffrey and Worah, Pratik},
  journal={Advances in neural information processing systems},
  volume={30},
  year={2017}
}

@inproceedings{xiao2022precise,
  title={Precise learning curves and higher-order scaling limits for dot product kernel regression},
  author={Xiao, Lechao and Hu, Hong and Misiakiewicz, Theodor and Lu, Yue M and Pennington, Jeffrey},
  booktitle={Thirty-sixth Conference on Neural Information Processing Systems (NeurIPS)},
  year={2022}
}

@inproceedings{liao2018spectrum,
  title={On the spectrum of random features maps of high dimensional data},
  author={Liao, Zhenyu and Couillet, Romain},
  booktitle={International Conference on Machine Learning},
  pages={3063--3071},
  year={2018},
  organization={PMLR}
}

@article{louart2018random,
  title={A random matrix approach to neural networks},
  author={Louart, Cosme and Liao, Zhenyu and Couillet, Romain},
  journal={The Annals of Applied Probability},
  volume={28},
  number={2},
  pages={1190--1248},
  year={2018},
  publisher={JSTOR}
}

@article{misiakiewicz2024non,
  title={A non-asymptotic theory of kernel ridge regression: deterministic equivalents, test error, and gcv estimator},
  author={Misiakiewicz, Theodor and Saeed, Basil},
  journal={arXiv preprint arXiv:2403.08938},
  year={2024}
}

@article{hu2024asymptotics,
  title={Asymptotics of random feature regression beyond the linear scaling regime},
  author={Hu, Hong and Lu, Yue M and Misiakiewicz, Theodor},
  journal={arXiv preprint arXiv:2403.08160},
  year={2024}
}

@article{moniri2023theory,
  title={A theory of non-linear feature learning with one gradient step in two-layer neural networks},
  author={Moniri, Behrad and Lee, Donghwan and Hassani, Hamed and Dobriban, Edgar},
  journal={arXiv preprint arXiv:2310.07891},
  year={2023}
}

@article{fan2024kronecker,
  title={Kronecker-product random matrices and a matrix least squares problem},
  author={Fan, Zhou and Ma, Renyuan},
  journal={arXiv preprint arXiv:2406.00961},
  year={2024}
}

@article{benigni2021eigenvalue,
  title={Eigenvalue distribution of some nonlinear models of random matrices},
  author={Benigni, Lucas and P{\'e}ch{\'e}, Sandrine},
  journal={Electronic Journal of Probability},
  volume={26},
  pages={1--37},
  year={2021},
  publisher={The Institute of Mathematical Statistics and the Bernoulli Society}
}

@article{atanasov2024scaling,
  title={Scaling and renormalization in high-dimensional regression},
  author={Atanasov, Alexander and Zavatone-Veth, Jacob A and Pehlevan, Cengiz},
  journal={arXiv preprint arXiv:2405.00592},
  year={2024}
}

@article{fan2020spectra,
  title={Spectra of the conjugate kernel and neural tangent kernel for linear-width neural networks},
  author={Fan, Zhou and Wang, Zhichao},
  journal={Advances in neural information processing systems},
  volume={33},
  pages={7710--7721},
  year={2020}
}

@article{ba2022high,
  title={High-dimensional asymptotics of feature learning: How one gradient step improves the representation},
  author={Ba, Jimmy and Erdogdu, Murat A and Suzuki, Taiji and Wang, Zhichao and Wu, Denny and Yang, Greg},
  journal={Advances in Neural Information Processing Systems},
  volume={35},
  pages={37932--37946},
  year={2022}
}

@article{rubio2011spectral,
  title={Spectral convergence for a general class of random matrices},
  author={Rubio, Francisco and Mestre, Xavier},
  journal={Statistics \& probability letters},
  volume={81},
  number={5},
  pages={592--602},
  year={2011},
  publisher={Elsevier}
}

@inproceedings{lu2024context,
  title={In-context learning by linear attention: Exact asymptotics and experiments},
  author={Lu, Yue and Letey, Mary and Zavatone-Veth, Jacob A and Maiti, Anindita and Pehlevan, Cengiz},
  booktitle={NeurIPS 2024 Workshop on Mathematics of Modern Machine Learning},
  year={2024}
}

@article{lu2025equivalence,
  title={An equivalence principle for the spectrum of random inner-product kernel matrices with polynomial scalings},
  author={Lu, Yue M and Yau, Horng-Tzer},
  journal={The Annals of Applied Probability},
  volume={35},
  number={4},
  pages={2411--2470},
  year={2025},
  publisher={Institute of Mathematical Statistics}
}

@inproceedings{d2020double,
  title={Double trouble in double descent: Bias and variance (s) in the lazy regime},
  author={d’Ascoli, St{\'e}phane and Refinetti, Maria and Biroli, Giulio and Krzakala, Florent},
  booktitle={International Conference on Machine Learning},
  pages={2280--2290},
  year={2020},
  organization={PMLR}
}

@inproceedings{gerace2020generalisation,
  title={Generalisation error in learning with random features and the hidden manifold model},
  author={Gerace, Federica and Loureiro, Bruno and Krzakala, Florent and M{\'e}zard, Marc and Zdeborov{\'a}, Lenka},
  booktitle={International Conference on Machine Learning},
  pages={3452--3462},
  year={2020},
  organization={PMLR}
}

@article{rende2024distributional,
  title={A distributional simplicity bias in the learning dynamics of transformers},
  author={Rende, Riccardo and Gerace, Federica and Laio, Alessandro and Goldt, Sebastian},
  journal={Advances in Neural Information Processing Systems},
  volume={37},
  pages={96207--96228},
  year={2024}
}

@misc{devlin2018bert,
      title={BERT: Pre-training of Deep Bidirectional Transformers for Language Understanding}, 
      author={Jacob Devlin and Ming-Wei Chang and Kenton Lee and Kristina Toutanova},
      year={2019},
      eprint={1810.04805},
      archivePrefix={arXiv}
}

@misc{howard2018universal,
      title={Universal Language Model Fine-tuning for Text Classification}, 
      author={Jeremy Howard and Sebastian Ruder},
      year={2018},
      eprint={1801.06146},
      archivePrefix={arXiv},
      primaryClass={cs.CL}
}

@misc{radford2018improving,
  title={Improving language understanding by generative pre-training},
  author={Radford, Alec and Narasimhan, Karthik and others},
  year={2018}
}

@article{brown2020language,
  title={Language models are few-shot learners},
  author={Brown, Tom and Mann, Benjamin and Ryder, Nick and Subbiah, Melanie and Kaplan, Jared D and Dhariwal, Prafulla and Neelakantan, Arvind and Shyam, Pranav and Sastry, Girish and Askell, Amanda and others},
  journal={Advances in neural information processing systems},
  volume={33},
  pages={1877--1901},
  year={2020}
}

@misc{chatgpt,
      title={GPT-4 Technical Report}, 
      author={OpenAI},
      year={2024},
      eprint={2303.08774},
      archivePrefix={arXiv},
      primaryClass={cs.CL}
}

@article{bhattacharya2020single,
author = {Bhattacharya, Nicholas and Thomas, Neil and Rao, Roshan and Daupras, Justas and Koo, Peter and Baker, David and Song, Yun and Ovchinnikov, Sergey},
year = {2020},
title = {Single Layers of Attention Suffice to Predict Protein Contacts},
doi = {10.1101/2020.12.21.423882},
journal = {bioRxiv}
}

@article{viteritti2022transformer,
  title = {Transformer Variational Wave Functions for Frustrated Quantum Spin Systems},
  author = {Viteritti, Luciano Loris and Rende, Riccardo and Becca, Federico},
  journal = {Phys. Rev. Lett.},
  volume = {130},
  issue = {23},
  pages = {236401},
  numpages = {6},
  year = {2023},
  month = {Jun},
  publisher = {American Physical Society},
  doi = {10.1103/PhysRevLett.130.236401},
  url = {https://link.aps.org/doi/10.1103/PhysRevLett.130.236401}
}

@misc{rende2023simple,
      title={A simple linear algebra identity to optimize Large-Scale Neural Network Quantum States}, 
      author={Riccardo Rende and Luciano Loris Viteritti and Lorenzo Bardone and Federico Becca and Sebastian Goldt},
      year={2023},
      eprint={2310.05715},
      archivePrefix={arXiv},
      primaryClass={cond-mat.str-el}
}

@misc{viteritti2023transformer,
      title={Transformer Wave Function for the Shastry-Sutherland Model: emergence of a Spin-Liquid Phase}, 
      author={Luciano Loris Viteritti and Riccardo Rende and Alberto Parola and Sebastian Goldt and Federico Becca},
      year={2023},
      eprint={2311.16889},
      archivePrefix={arXiv},
      primaryClass={cond-mat.str-el}
}

@article{wu2020optimal,
  title={On the optimal weighted $\backslash$$ell\_2 $ regularization in overparameterized linear regression},
  author={Wu, Denny and Xu, Ji},
  journal={Advances in Neural Information Processing Systems},
  volume={33},
  pages={10112--10123},
  year={2020}
}

@article{HMRT22,
  title={Surprises in high-dimensional ridgeless least squares interpolation},
  author={Hastie, Trevor and Montanari, Andrea and Rosset, Saharon and Tibshirani, Ryan J},
  journal={Annals of statistics},
  volume={50},
  number={2},
  pages={949},
  year={2022}
}

@article{DLM24,
  title={Dimension-free deterministic equivalents and scaling laws for random feature regression},
  author={Defilippis, Leonardo and Loureiro, Bruno and Misiakiewicz, Theodor},
  journal={Advances in Neural Information Processing Systems},
  volume={37},
  pages={104630--104693},
  year={2024}
}
\clearpage
\clearpage
\bibliographystyle{abbrvnat}

\newpage
\appendix
\onecolumn
\section{Preliminaries}
\label{appendix_preliminaries}

Recall the definition of $\hat{A}$: 
\begin{equation}
    \hat{A}: = [\hat{a}_1, \dots, \hat{a}_d] \in \RR^{d \times d}, 
\end{equation}
where $\hat{a}_k$ solves \Cref{eq:coordinate_wise_regression}. We can first re-write as matrix optimization problem: 
\begin{equation}
    \hat{A} = \underset{\substack{A\in\mathbb{R}^{d\times d}\\ {\rm diag}(A)=0}}{\argmin}\frac{1}{n}\|X-XA^{\top}||^{2}_{F}+\lambda||A||_{F}^{2}.
    \label{eq:joint_matrix_opt}
\end{equation}

\subsection{Proof of \ref{lemma:explicit_expression_A}}
We solve this problem by imposing the KKT conditions in \cref{eq:joint_matrix_opt}. Let $g_l(A) = A_{l,l}$ and $F(A)=  \frac{1}{n} \| X - X A^T\|_{2}^2 + \lambda \| A\|_{F}^2$. Then, the Lagrangian is given by: 
\begin{align}
    \mathcal{L}(A) & = F(A) - \sum_{l=1}^{d} \lambda_l g_{l}(A) = F(A) - \mathrm{Tr}(\Lambda A )\\
    & = \dfrac{1}{n}\mathrm{Tr} \left ( (I_{d} - A^T) X^T X (I_{d} - A) \right ) + \lambda \mathrm{Tr} (A^T A) - \mathrm{Tr} (\Lambda A)
\end{align}
where $\Lambda = \diag(\lambda_1, \dots, \lambda_d)$. Then, differentiating with respect to $A$, we have: 
\begin{equation}
    \frac{\partial}{\partial A} \mathcal{L} (A) =  -\dfrac{2}{n} X^T X + \dfrac{2}{n} X^T X A + 2\lambda A + \Lambda,
\end{equation}
and imposing the stationarity condition of the Lagrangian we get: 
\begin{equation}
    \frac{\partial \mathcal{L}}{\partial A}   = -\frac{2}{n} X^TX  + \frac{2}{n}(X^TX + \lambda I_{d}) A + \Lambda = 0.
\end{equation}
Denote $Q({\lambda}): = (\hat{\Sigma} + \lambda I_{d})^{-1}$. Then, solving for $A$ gives: 
\begin{equation}
    A = (\frac{1}{n}X^TX + \lambda I_{d})^{-1}( \frac{1}{n}X^TX + \frac{1}{2}\Lambda) = Q(\lambda) (\hat{\Sigma} + \dfrac{1}{2} \Lambda).
\end{equation}
In order to obtain $\Lambda$, we impose the restriction of the optimization problem: 
\begin{align}
    &\mathrm{diag} ( \hat{A} ) =  \mathrm{diag} \left ( (\hat{\Sigma} + \lambda I_{d}))^{-1}(\hat{\Sigma} + \frac{1}{2}\Lambda)\right ) = 0\\
     & \iff  \lambda_k Q({\lambda})_{k,k}  = -2Q({\lambda })_{k,:}^T \hat{\Sigma}_{k,:} \\
     & \iff  \lambda_k = \dfrac{-2}{Q({\lambda})_{k,k}}Q({\lambda })_{k,:}^T \hat{\Sigma}_{k,:},
\end{align}
where we used the fact that $\Lambda$ is diagonal. Then, by defining $D(\lambda) \in \RR^{d \times d}$ the diagonal matrix with entries: 
\begin{equation}
    D(\lambda)_{k,k} = \dfrac{\diag(Q(\lambda) \hat{\Sigma})_{k,k}}{Q({\lambda})_{k,k}},
\end{equation}
we can write: 
\begin{equation}
    \hat{A} = Q(\lambda)(\hat{\Sigma} - D(\lambda)). 
    \label{eq:solution_A}
\end{equation}
By adding and subtracting the $\frac{\lambda}{n}$ in \eqref{eq:solution_A}, we also get: 
\begin{equation}
    \hat{A}  = I_d  - Q(\lambda)(D(\lambda)  + \lambda I_d).
\end{equation}
Note that 
\begin{align}
    D(\lambda) + \lambda I_d & = \dfrac{\diag(Q(\lambda) \hat{\Sigma})}{\diag(Q({\lambda}))}  +\lambda I_d\\
    & = \dfrac{1}{\diag(Q(\lambda))} \left ( \diag(Q(\lambda) \hat{\Sigma}) + \lambda \diag(Q(\lambda)) \right ) \\
    & = \dfrac{1}{\diag(Q(\lambda))}. 
\end{align}
Hence, we finally write: 
\begin{equation}
    \hat{A} = I_d - Q(\lambda) [\diag(Q(\lambda))]^{-1}. 
\end{equation}

\subsection{Concentration}

\begin{lemma}
Consider a matrix $X \in \RR^{n \times d}$, where each row is an independent sample from $\mathcal{N}(0,\Sigma)$, and $\hat{A}$ the SSR matrix. As $n, d \to \infty$, with high probability, we have: 
\[
\| \hat{A} - (I_{d} -  Q(\lambda) \bar{D}\|_\mathrm{op} \overset{d \to \infty}{\to} 0,
\]
where $\bar{D} \in \RR^{d \times d}$ is a diagonal matrix with entries:
\[
\bar{D}_{k,k} : =  \dfrac{\lambda}{[(I_{d} + \tilde{m}(\lambda) \Sigma )^{-1}]_{k,k}},
\]
where $\tilde{m}$ solves the self-consistent equation 
\begin{equation}
    \tilde{m} = \left ( \lambda + \dfrac{1}{n}\mathrm{Tr}\left ( \Sigma (\tilde{m}(\lambda) \Sigma + I_d )^{-1}\right )\right )^{-1}.
\end{equation}
\label[lemma]{lemma:approximation_Lambda_D}
\end{lemma}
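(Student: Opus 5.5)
By \Cref{lemma:explicit_expression_A}, $\hat{A} = I_d - Q(\lambda)\Lambda$ with $\Lambda = [\diag(Q(\lambda))]^{-1}$. Hence
\[
\hat{A} - (I_d - Q(\lambda)\bar{D}) = Q(\lambda)(\bar{D} - \Lambda),
\]
and since $\|Q(\lambda)\|_{\rm op}\le 1/\lambda$, it is enough to show that
\[
\max_{k\le d}\,|\Lambda_{kk} - \bar{D}_{kk}| \to 0
\]
with high probability. Both matrices are diagonal, so this is an entrywise statement. The plan is therefore to prove a uniform (over $k$) deterministic equivalent for the diagonal entries $Q(\lambda)_{kk}$ and then invert.

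\textbf{Step 1: deterministic equivalent for the diagonal entries.} The target is
\[
\max_k \big|Q(\lambda)_{kk} - \lambda^{-1}[(I_d+\tilde m\Sigma)^{-1}]_{kk}\big| \to 0 .
\]
This is the anisotropic local law / deterministic equivalent $Q(\lambda)\asymp \bar{Q}:=\lambda^{-1}(I_d+\tilde m \Sigma)^{-1}$, tested against the rank-one matrices $e_ke_k^\top$.

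To see that $\bar Q$ is the right equivalent, use the standard leave-one-out argument over rows. Write $\hat\Sigma=\frac1n\sum_i x_ix_i^\top$ and apply Sherman--Morrison:
\[
Q x_i = \frac{Q_{-i}x_i}{1+\frac1n x_i^\top Q_{-i}x_i}.
\]
Quadratic forms concentrate, $\frac1n x_i^\top Q_{-i}x_i \approx \frac1n\mathrm{Tr}(\Sigma Q)$. Using $\lambda Q = I_d - \hat\Sigma Q$, this gives
\[
\EE Q \approx \big(\lambda I_d + \tfrac{1}{1+\delta}\Sigma\big)^{-1}, \qquad \delta=\tfrac1n\mathrm{Tr}(\Sigma\,\EE Q).
\]
With $\tilde m := \big(\lambda(1+\delta)\big)^{-1}$, this is exactly $\lambda^{-1}(I_d+\tilde m\Sigma)^{-1}$. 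One then checks that $\delta$ satisfies the fixed point equation, which is equivalent to the stated equation for $\tilde m$. Uniqueness of the solution follows from monotonicity.

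For the Gaussian rows in this lemma I would not redo the whole proof. Instead I would invoke the anisotropic local law of Knowles--Yin (or the quantitative equivalents of Chouard / Louart--Couillet), which give
\[
|u^\top(Q-\bar Q)u| \le d^{-1/2+\epsilon}
\]
for deterministic unit vectors $u$, with overwhelming probability. A union bound over the $d$ vectors $u=e_k$ then yields the uniform bound. If a self-contained argument is wanted, Gaussian concentration works directly: $X\mapsto Q_{kk}$ is Lipschitz with constant $O(n^{-1/2}\lambda^{-3/2}\|\Sigma\|^{1/2})$. This gives sub-Gaussian tails $e^{-cn t^2}$ for $Q_{kk}-\EE Q_{kk}$, and a union bound over $d$ coordinates costs only a $\log d$ factor. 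It then remains to bound $\|\EE Q - \bar Q\|_{\rm op}=O(n^{-1/2})$, which follows from the leave-one-out computation above with the usual second-order error terms.

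\textbf{Step 2: inversion.} The diagonal entries are bounded below uniformly:
\[
Q_{kk}\ge (\|\hat\Sigma\|_{\rm op}+\lambda)^{-1}.
\]
Since $\|\hat\Sigma\|_{\rm op}$ is bounded with high probability (by Bai--Yin together with boundedness of $\|\Sigma\|_{\rm op}$), we get $Q_{kk}\ge c>0$. The same lower bound holds for $\bar Q_{kk}\ge (\lambda(1+\tilde m\|\Sigma\|_{\rm op}))^{-1}$. Since $x\mapsto 1/x$ is Lipschitz on $[c,\infty)$, we obtain
\[
\max_k|\Lambda_{kk}-\bar D_{kk}|\le c^{-2}\max_k|Q_{kk}-\bar Q_{kk}|\to 0,
\]
because $\bar D_{kk}=1/\bar Q_{kk}$ by definition. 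Combining this with $\|Q\|_{\rm op}\le \lambda^{-1}$ concludes the proof, with rate $O(\sqrt{\log d/ n})$.

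\textbf{Main obstacle.} The crux is the uniformity over $k$ in Step 1. Averaged trace statements, of the Marchenko--Pastur type, are not enough; we need every diagonal entry of the resolvent to be close to its deterministic equivalent simultaneously. The union bound handles this, but only because the per-entry fluctuation bounds have exponential (Gaussian concentration) or overwhelming-probability (local law) tails, which is where Gaussianity is used here. The bias bound $\|\EE Q-\bar Q\|_{\rm op}\to 0$ in operator norm, rather than only in normalized trace, is the second delicate point. I would handle it by writing
\[
\EE Q-\bar Q = \bar Q\,(\text{error})\,\EE Q,
\]
and bounding the error term through the variance of $\frac1n x_i^\top Q_{-i}x_i$, which is $O(1/n)$.
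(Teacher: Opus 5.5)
Your proposal is correct and follows the same route as the paper. You factor out $\|Q(\lambda)\|_{\mathrm{op}}\le 1/\lambda$ and reduce to $\max_k |Q(\lambda)_{kk}-\bar Q(\lambda)_{kk}|$ via the inversion bound, using that $\|\hat\Sigma\|_{\mathrm{op}}$ is bounded with high probability. You then conclude from the deterministic equivalent $\bar Q(\lambda)=\lambda^{-1}(I_d+\tilde m\Sigma)^{-1}$ tested on $e_k e_k^\top$, with a union bound over $k$. Your version is more careful than the paper's on the uniformity step: you make explicit that the union bound needs exponential per-entry tails (Gaussian concentration with Lipschitz constant $O(n^{-1/2})$, plus an operator-norm bias bound on $\mathbb{E}Q-\bar Q$), which the paper only asserts.
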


\begin{proof}
Let $X = Z \Sigma^{\frac{1}{2}} \in \RR^{n \times d}$, where $Z$ is a matrix with standard gaussian, independent entries, and let $Q(\lambda) := (\hat{\Sigma} + \lambda I_{d})^{-1}$, for $\lambda >0$. By \Cref{lemma:explicit_expression_A}, we have: 
\begin{equation}
    \hat{A} = I_{d} - Q(\lambda)  [\diag(Q(\lambda))]^{-1}.
\end{equation}
We will to prove that, for $d$ big enough, we can replace $[\diag(Q(\lambda))]^{-1}$ by $[\diag(\bar{Q}(\lambda))]^{-1}$, where
\begin{equation}
    \bar{Q}(\lambda) := -\frac{1}{\lambda}(\tilde{m}(\lambda) \Sigma + I_{d})^{-1},
\end{equation}
is the deterministic equivalent of $\bar{Q}(\lambda)$ (Theorem 2.6, \cite{CL22}). Let $\bar{A}:= \hat{A} = I_{d} - Q(\lambda)  [\diag(\bar{Q}(\lambda))]^{-1}$. We have: 
\begin{align}
    \| \hat{A} - \bar{A}\|_\mathrm{op} &= \|Q(\lambda) ([\diag(Q(\lambda))]^{-1} -  [\diag(\bar{Q}(\lambda))]^{-1}) \|_\mathrm{op}.
\end{align}
By using the fact that 
\begin{equation}
    \| Q(\lambda) \|_\mathrm{op} \leq \dfrac{1}{\lambda},
\end{equation}
we get: 
\begin{align}
    \| \hat{A} - \bar{A}\|_\mathrm{op} & \leq \dfrac{1}{\lambda} \max_{k \in [d]} \left | \dfrac{1}{Q(\lambda)_{k,k}} - \dfrac{1}{\bar{Q}(\lambda)_{k,k}}\right |.
\end{align}
Now using the identity $A^{-1} - B^{-1} = A^{-1}(B - A) B^{-1}$: 
\begin{align}
    \| \hat{A} - \bar{A}\|_\mathrm{op} & \leq C \max_{k \in [d]} \left | Q(\lambda)_{k,k} - \bar{Q}(\lambda)_{k,k} \right | \\ 
    & \leq C \max_{k \in [d]} \left | e_k^TQ(\lambda) e_k - e_k^T \bar{Q}(\lambda)e_k \right |,
\end{align}
where we used that $\|\hat{\Sigma}\|_\mathrm{op} \leq C$ with high probability (\cite{V10}, Theorem 5.44). From here, we conclude by using the fact that $\bar{Q}(\lambda)$ is the deterministic equivalent of $Q(\lambda)$, so $a^T Q(\lambda) b \to a^T \bar{Q}(\lambda) b$, for $a, b \in \mathbb{S}^{d-1}$ and a union-bound on each $k \in [d]$. 
\end{proof}
\section{The Spectrum of $\hat{A}$}
\label{section:appendix_spectrum}

With the results in Appendix \ref{appendix_preliminaries}, we 
 we can proceed to study the spectrum of the SSR matrix $\hat{A}$, given by: 
 \begin{equation}
     \hat{A}: = I_{d} - Q(\lambda) \Lambda, \quad \Lambda = [\diag(Q(\lambda))]^{-1}
 \end{equation}
 The first challenge we note is the second term: We have the product of $Q(\lambda)$ with a diagonal matrix $\Lambda$, which are both random but not independent. We will overcome this complication by leveraging the fact that the diagonal of the resolvent is, under mild conditions, very concentrated. This is where we will first apply \Cref{lemma:approximation_Lambda_D}. With this, we will be able to replace the diagonal matrix $[\diag(Q(\lambda))]^{-1}$ by a deterministic matrix $\bar{D}$, and study the matrix $I_{d} - Q(\lambda) \bar{D}$ instead of $\hat{A}$.  
 
\subsection{The non-isotropic case - Proof of \Cref{thm:det_equivalent_A_hat}}
\label{appendix_non_isotropic}
In what follows, we directly study the spectrum of $\hat A$. Since the diagonal entries of $Q$ are nonnegative, we only need to study the spectrum of the matrix
\begin{equation}
    \bar{D}^{\frac{1}{2}} Q(\lambda) \bar{D}^{\frac{1}{2}},
\end{equation}
where, by \Cref{lemma:approximation_Lambda_D}
\begin{equation}\label{eq:Lambda_def}
    \bar{D}_{k,k} : =  \dfrac{\lambda}{[(I_{d} + \tilde{m}(\lambda) \Sigma )^{-1}]_{k,k}}
\end{equation}
To make notation simpler, denote $\tilde{D} = \bar{D}^{\frac{1}{2}}$. For any $z \in \C^+$, we aim to obtain a deterministic equivalent of the resolvent
\begin{equation}
    G(z) = \left ( \tilde{D} Q(\lambda) \tilde{D} - z I_{d}\right )^{-1}.
\end{equation}
To that end, we use the linearization trick. Note that: 
\begin{equation}
    \left ( \tilde{D} Q(\lambda) \tilde{D} - z I_{d}\right ) G(z)  = I_{d}. 
\end{equation}
Now define $G_1 = Q(\lambda) \tilde{D} G(z)$, so $\tilde{D} G(z) + (-\hat{\Sigma} + \lambda I_{d})G_1 = 0 $. We get: 
\begin{equation}
      \tilde{D} G_1 - z G(z) = I_{d}.
\end{equation}
Putting these equations together, we get the following linear system of equations: 
\begin{equation}
    \begin{pmatrix}
        -zI_{d} & \tilde{D} \\
        \tilde{D}& - \hat{\Sigma} -\lambda I_{d} 
    \end{pmatrix} \begin{pmatrix}
        G(z) \\
        G_1(z)
    \end{pmatrix} = \begin{pmatrix}
        I_{d} \\ 0
    \end{pmatrix}.
\end{equation}
Let $H$ be the block matrix: 
\begin{equation}
    H = \begin{bmatrix}
        -z I & \tilde{D}\\
        \tilde{D} & -\hat \Sigma - \lambda I
    \end{bmatrix}.
\end{equation}
Then: 
\begin{align}
    G(z) = \begin{pmatrix}
        I_{d} & 0 
    \end{pmatrix} 
    \begin{pmatrix}
        G(z) \\
        G_1
    \end{pmatrix} = \begin{pmatrix}
        I_{d} & 0 
    \end{pmatrix} H^{-1} \begin{pmatrix}
        I_{d} \\ 0
    \end{pmatrix} = [H^{-1}]_{1,1}. 
\end{align}
We can also compute $H^{-1}$ via matrix block-inversion: 
\begin{align}
    H^{-1} &= \begin{bmatrix}
        G(z) & G(z) \tilde{D} Q\\
        Q\tilde{D} G(z) & (\frac{\tilde{D}^2}{z} - \hat \Sigma - \lambda I)^{-1}
    \end{bmatrix}\\
    &= \begin{bmatrix}
        G(z) & G(z) \tilde{D} Q\\
        Q \tilde{D} G(z) & -Q + Q \tilde{D}G(z)\tilde{D} Q
    \end{bmatrix}.
\end{align}
Note that
\begin{equation}
    H = \underbrace{\begin{pmatrix}
        -zI_{d} & \tilde{D} \\
        \tilde{D}  & -\lambda I_{d} 
    \end{pmatrix}}_{H_0:=} + \underbrace{\begin{pmatrix}
        0 & 0 \\
        0 & -\hat{\Sigma}
    \end{pmatrix}}_{\text{random part}}.
\end{equation}
The idea is to get an approximation of $\mathbb{E}[H^{-1}]$ and use it as a deterministic equivalent for $H$. For this, we first use the fact that $HH^{-1} = I_{2d}$:
\begin{align}
    H^{-1}H = I_{2d} \iff H^{-1}H_0 +H \begin{pmatrix}
        0 & 0 \\
        0 & -\hat{\Sigma} 
    \end{pmatrix}  = I_{2d}.
    \label{eq:H_H_inv_1}
\end{align}
Now, since $\hat{\Sigma} = \dfrac{1}{n} \sum_{i=1} x_i x_i^T$, we can define $\tilde{x}_i = (0_{d}, x_i)$, where $0_d$ denotes the $d$-zeros vector, and we will get: 
\begin{equation}
    \begin{pmatrix}
        0 & 0 \\
        0 & -\hat{\Sigma} 
    \end{pmatrix} = -\dfrac{1}{n}\sum_{i=1} \tilde{x}_i \tilde{x}_i^T. 
\end{equation}
Then \Cref{eq:H_H_inv_1} becomes: 
\begin{equation}
    H^{-1} H_0 -\dfrac{1}{n}\sum_{i=1}^{n} H^{-1} \tilde{x}_i \tilde{x}_i^T = I_{2d},
\end{equation}
and taking expectation, since $H_0$ is deterministic:  
\begin{equation}
    \EE[H^{-1} ] H_0 -\dfrac{1}{n}\sum_{i=1}^{n} \EE \left [ H^{-1} \tilde{x}_i \tilde{x}_i^T \right ] = I_{2d}
    \label{eq:H_H_inv_2}
\end{equation}
Now, we define the leave-one-out version of $H$. For each $i \in [n]$, let
\begin{equation}
    H_{\setminus i} = H_0 - \dfrac{1}{n} \sum_{j \not = i} \tilde{x}_j \tilde{x}_j^T.
\end{equation}
Then, by the Sherman-Morrison formula we have that: 
\begin{align}
    H^{-1} & =  H_{\setminus i}^{-1}-\dfrac{H_{\setminus i}^{-1} \tilde{x}_i \tilde{x}_i^T H_{\setminus i}^{-1}}{1 - \frac{1}{n}\tilde{x}^T H_{\setminus i}^{-1} \tilde{x}_i}. 
\end{align}
Then, we can write the second term in \Cref{eq:H_H_inv_2} (without the expectation) as:
\begin{align}
    H^{-1} \tilde{x}_i \tilde{x}_i^T & = H_{\setminus i}^{-1}\tilde{x}_i \tilde{x}_i^T + \dfrac{H_{\setminus i}^{-1} \tilde{x}_i \tilde{x}_i^T H_{\setminus i}^{-1}}{1 - \frac{1}{n}\tilde{x}^T H_{\setminus i}^{-1} \tilde{x}_i}\tilde{x}_i \tilde{x}_i^T  \\
    & = H_{\setminus i}^{-1} \tilde{x}_i \tilde{x}_i^T \dfrac{1}{1 -\frac{1}{n}\tilde{x}_i^T H_{\setminus i}^{-1} \tilde{x}_i }. 
    \label{eq:leave_one_out_identity}
\end{align}
Note that by concentration of quadratic forms (\cite{CL22}, Lemma 2.1), with very high probability: 
\begin{align}
     \frac{1}{n}\tilde{x}_i^T H_{\setminus i}^{-1} \tilde{x}_i  & =\dfrac{1}{n}  \mathrm{Tr} \left ( \Sigma [H_{\setminus i}^{-1}]_{2,2}\right )  + o_d(1) \\
     & = \dfrac{1}{n}\mathrm{Tr} \left ( \Sigma  \EE[H^{-1}_{\setminus i}]_{2,2}\right ) + o_d(1),
     \label{eq:slef_consistent_chi_concentration_1}
\end{align}
where the last line follows from the fact that $\mathbb{E}[H^{-1}_{\setminus i}$ is a deterministic equivalent for $H^{-1}_{\setminus i}$, and $[\cdot]_{2,2}$ denotes the second block of the matirx (coordinates $d+1 $ to $2d$). Let $\chi = \dfrac{1}{n}\mathrm{Tr} \left ( \Sigma  \EE[H^{-1}_{\setminus i}]\right )$. Then applying \Cref{eq:leave_one_out_identity} in \Cref{eq:H_H_inv_2}: 
\begin{align}
    & \EE[H^{-1}] H_0 - \dfrac{1}{n(1 -\chi)}\sum _{i=1}^{n} \EE\left [ H^{-1}_{\setminus i} \tilde{x}_i \tilde{x}_i^T\right ] = I_{2d} \\
    \iff & \EE[H^{-1}] H_0 - \dfrac{1}{n(1- \chi)} \sum_{i=1}^{n} \EE\left [ H^{-1}_{\setminus i} \right ] \begin{pmatrix}
        0 & 0 \\
        0 & \Sigma
    \end{pmatrix}= I_{2d}, \\
    \iff & \EE[H^{-1}] H_0 + \EE\left [ H^{-1}_{\setminus i} \right ] \begin{pmatrix}
        0 & 0 \\
        0 & \dfrac{1}{(1- \chi)}\Sigma
    \end{pmatrix}= I_{2d}.
\end{align}
where in the second to last line we used the independence of $H_{\setminus i}$ and $\tilde{x}_i$. Now, using that $\EE[H_{\setminus i}^{-1}] = \EE[H^{-1}] + o_{n}(1)$, we get: 
\begin{equation}
    \EE[H^{-1}] \left ( H_0 + \begin{pmatrix}
        0 & 0 \\
        0 & -\dfrac{1}{(1- \chi)}\Sigma
    \end{pmatrix} \right ) = I_{2d} + o_{n}(1). 
\end{equation}
Therefore: 
\begin{equation}
    \EE[H^{-1}] \sim  \begin{pmatrix}
        -zI & \tilde{D}\\
        \tilde{D} & -\dfrac{1}{(1- \chi)}\Sigma-\lambda I_{d}
    \end{pmatrix}^{-1}. 
\end{equation}
At last, recalling \Cref{eq:slef_consistent_chi_concentration_1}, by the matrix inversion lemma we have: 
\begin{equation}
    \EE[H^{-1}_{\setminus i}]_{2,2} \sim \left [ \begin{pmatrix}
        -zI & \tilde{D} \\
        \tilde{D} & -\dfrac{1}{(1- \chi)}\Sigma -\lambda I_{d}
    \end{pmatrix}^{-1}\right ]_{2,2} = \left ( \dfrac{\tilde{D}^2}{z}-\frac{1}{1-\chi} \Sigma -\lambda I_{d}\right )^{-1},
\end{equation}
and then, asymptotically $\chi$ solves::  
\begin{equation}
    \chi = \dfrac{1}{n}\mathrm{Tr} \left ( \Sigma  \left ( \dfrac{\tilde{D}^2}{z}-\frac{1}{1-\chi} \Sigma -\lambda I_{d}\right )^{-1}\right ). 
\end{equation}
In conclusion, we've find that:
\begin{equation}
    H^{-1} \sim \begin{pmatrix}
        -zI & \tilde{D} \\
        \tilde{D} & -\dfrac{1}{(1- \chi)}\Sigma -\lambda I_{d}
    \end{pmatrix}^{-1},
\end{equation}
and then, since $G(z)$ is the upper left block of $H^{-1}$, we have:
\begin{equation}
    G(z) \sim \left ( \tilde{D} \left ( \dfrac{1}{(1- \chi)}\Sigma +\lambda I_{d} \right)^{-1} \tilde{D} -zI_{d}\right )^{-1},
\end{equation}
where $\chi$ solves: 
\begin{equation}
    \chi = \dfrac{1}{n}\mathrm{Tr} \left ( \Sigma  \left ( \dfrac{\tilde{D}^2}{z}-\frac{1}{1-\chi} \Sigma -\lambda I_{d}\right )^{-1}\right ). 
\end{equation}
We conclude \Cref{thm:det_equivalent_A_hat} by recalling that $\tilde{D} = \bar{D}^{\frac{1}{2}}$.

\subsection{Diagonal Population Covariance in the Ridgeless Limit - Proof of \Cref{corollary:universality}}
\label{section_diagonal_ridgeless_limit}

In what follows, we work out a special case corresponding to $\Sigma$ being a diagonal matrix and the ridge parameter $\lambda \to 0^+$.

First, under assumptions on the moments and the data distribution being centered, the deterministic equivalent of the resolvent $Q(\lambda) = (\hat{\Sigma} + \lambda I_{d})^{-1}$ is given by (\cite{CL22}, Theorem 2.6): 
\begin{equation}
    \bar{Q}(\lambda) = \dfrac{(\tilde{m}(\lambda) \Sigma + I_{d})^{-1}}{\lambda} = (\lambda\tilde{m}(\lambda) \Sigma + \lambda I_{d})^{-1},
\end{equation}
where $\tilde{m}(\lambda)$ solves: 
\begin{equation}
    \tilde{m}(\lambda) = \left ( \lambda + \dfrac{1}{n} \mathrm{Tr} \left ( \Sigma (\tilde{m}(\lambda)\Sigma + I_{d})^{-1}\right )\right )^{-1}.
\end{equation}
Let $\dfrac{1}{1 + \nu}:= \lambda \tilde{m}(\lambda)$. Then: 
\begin{equation}
    \bar{Q}(\lambda) = (\dfrac{1}{\nu + 1} \Sigma + \lambda I_{d})^{-1},
\end{equation}
and $\nu$ satisfies: 
\begin{equation}
    \nu = \dfrac{1}{n} \mathrm{Tr} \left (\Sigma \left (\dfrac{1}{\nu +1} \Sigma + \lambda I_{d} \right )^{-1} \right ).
    \label{self_consistent_nu}
\end{equation}
Taking $\lambda >0$, when $\alpha >1$ we get: 
\begin{equation}
    \nu = \dfrac{1}{n} \mathrm{Tr} \left (\Sigma \left (\dfrac{1}{\nu +1} \Sigma\right )^{-1} \right ) = \dfrac{d(\nu +1)}{n} = \dfrac{\nu + 1}{\alpha},
\end{equation}
so the solution is $\nu = \frac{1}{\alpha - 1}$. Then, in this case the matrix $\Lambda$ concentrates into: 
\begin{equation}
    \Lambda = [\mathrm{diag} \left (Q(\lambda) \right ) ]^{-1} \sim \bar{D} = \dfrac{1}{\nu + 1} \mathrm{diag}(\Sigma^{-1})^{-1},
\end{equation}
and since $\Sigma$ is diagonal, we conclude: 
\begin{equation}
    \bar{D} = \dfrac{\alpha -1}{\alpha} \Sigma. 
\end{equation}
Recall that the deterministic equivalent of Theorem \Cref{thm:det_equivalent_A_hat} is for $\bar{D}^{\frac{1}{2}} Q(\lambda)\bar{D}^{\frac{1}{2}}$, and in this case: 
\begin{equation}
    \bar{D} ^{\frac{1}{2}} = \sqrt{\dfrac{\alpha -1}{\alpha}} \Sigma^{\frac{1}{2}}. 
\end{equation}
Then, by \Cref{thm:det_equivalent_A_hat}, for $z \in \mathbb{C}$, the resolvent has the following deterministic equivalent: 
\begin{align}
    G(z) & \sim \left ( \tilde{D} \left ( \dfrac{1}{(1- \chi)}\Sigma\right)^{-1} \tilde{D} -zI_{d}\right )^{-1} \\
    & = \left ( \dfrac{\alpha - 1}{\alpha} \Sigma^{\frac{1}{2}}\left ( \dfrac{1}{(1- \chi)}\Sigma \right)^{-1} \Sigma^{\frac{1}{2}} -zI_{d}\right )^{-1} \\
    & = \left ( \dfrac{(\alpha - 1)(1 - \chi)}{\alpha} I_{d} - z I_{d} \right )^{-1},
    \label{eq:det_equivalent_resolvent_diagonal}
\end{align}
where $\chi$ solves: 
\begin{align}
   \chi^2 + (z-1)\chi + \dfrac{z}{\alpha - 1} = 0. 
   \label{eq:self_consistent_equation_diagonal}
\end{align}
We conclude the independence of the spectrum from the elements of $\Sigma$ by noting that \Cref{eq:det_equivalent_resolvent_diagonal} and \Cref{eq:self_consistent_equation_diagonal} are independent of $\Sigma$.

\section{Asymptotic Limits for the Training and Generalization Errors}
\label{appendix_det_equivaletns}

\subsection{Proof of Lemma \ref{lemma:approximation_error}}
\label{appendix_app_error}

In this section, we prove \Cref{lemma:approximation_error}. Let $\Sigma \in \RR^{d \times d}$, with $\Sigma \succ 0$. Then we can write: 
    Indeed, we can write: 
    \begin{align}
        L(A) & = \EE_{x} \left [ (x_i - A x_i)^T (x_i - A x_i)\right ] \\
        & = \mathrm{Tr}(\Sigma) - 2 \mathrm{Tr}(A \Sigma)  +  \mathrm{Tr} (A^T A \Sigma)
    \end{align}
    Then, the Lagrangian of this problem is: 
    \begin{equation}
        \mathcal{L}(A) = \mathrm{Tr}(\Sigma) - 2 \mathrm{Tr}(A^T \Sigma) +  \mathrm{Tr} (A^T A \Sigma) - \mathrm{Tr}(\Lambda A),
    \end{equation}
    where $\Lambda$ is a diagonal matrix. Hence, by imposing the stationarity of the Lagrangian, and using the assumption that $\Sigma$ is invertible, we get: 
    \begin{equation}
        - 2 \Sigma + 2\Sigma A - \Lambda = 0 \implies A = \Sigma^{-1} \left ( \frac{1}{2}\Lambda + \Sigma \right ). 
    \end{equation}
    Imposing the condition $\mathrm{diag}(A) = 0$, we get: 
    \begin{equation}
        A_{\mathrm{App}} = \Sigma^{-1} ( \Sigma - \mathrm{diag}(\Sigma^{-1})) = I_d - \Sigma^{-1} [\mathrm{diag}(\Sigma^{-1})]^{-1}.
    \end{equation}
    By replacing $A_{\mathrm{App}}$ in the generalization error, we get that the Approximation error is then given by: 
    \[
    L(A_{\mathrm{App}}) = \mathrm{Tr} [ \Sigma^{-1} [\mathrm{diag}(\Sigma^{-1})]^{-2 }] . 
    \]

\subsection{Proof of \Cref{theorem:det_equivalent_gen_error}}

In this section, we prove the asymptotic limit for $L(\hat{A})$. From \Cref{eq:gen_error_expression} and \cref{lemma:explicit_expression_A}, we have: 
\begin{align}
    L(\hat{A}) & = \dfrac{1}{d} \mathrm{Tr} \left ( (I_{d} - \hat{A}^T) \Sigma (I_{d} + \hat{A} )\right )  \\
    & = \dfrac{1}{d} \mathrm{Tr} \left (\Lambda Q(\lambda) \Sigma Q(\lambda) \Lambda \right ),
\end{align}
where we recall that $Q(\lambda) = (\hat{\Sigma} + \lambda I_{d})^{-1}$ and $\Lambda = [\mathrm{diag}(Q(\lambda))]^{-1}$. As we saw in Appendix \ref{appendix_preliminaries}, we have that
\begin{equation}
    \hat{A} \approx I_{d} - Q(\lambda) \bar{D},
\end{equation}
where $\bar{D} = \lambda [\mathrm{diag}(Q(\kappa(\lambda) ))]^{-1}$, and $\tilde{m}(\lambda)$ is a solution to the self-consistent equation: 
\begin{equation}
    \tilde{m}(\lambda) = \left ( \lambda + \dfrac{1}{n}\mathrm{Tr}\left ( \Sigma (\tilde{m}(\lambda) \Sigma + I_d )^{-1}\right )\right )^{-1}.
\end{equation}
Our first step in proving an asymptotic limit for $L(\hat{A})$ is to prove that we can approximate this quantity by $L(I_{d} - Q(\lambda)\bar{D})$. This is what we do in the following lemma. 

\begin{lemma}
    Let $X = Z \Sigma \in \RR^{n \times d}$, with $\| \Sigma \|_\mathrm{op}$ bounded, and $Z$ having independent entries, with mean $0$, variance $1$, and $4 + \varepsilon$ bounded moments. As $d$ grows to infinity, the normalized risk converges to: 
    \begin{equation}
        L(\hat{A}) \to  \dfrac{1}{d}\mathrm{Tr} [\bar{D}Q(\lambda) \Sigma Q(\lambda )\bar{D} ] .
    \end{equation}
    \label[lemma]{lemma:concentration_gen_error_1}
\end{lemma}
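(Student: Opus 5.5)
We need to show that $L(\hat A)-\frac1d\mathrm{Tr}[\bar D Q\Sigma Q\bar D]\to0$ in probability. The plan is to start from the exact identity $L(\hat A)=\frac1d\mathrm{Tr}(\Lambda Q\Sigma Q\Lambda)$, which follows from \Cref{lemma:explicit_expression_A} since $I_d-\hat A=Q\Lambda$. Set $M:=Q\Sigma Q$, which is positive semidefinite with $\|M\|_{\mathrm{op}}\le \|\Sigma\|_{\mathrm{op}}/\lambda^2$ deterministically. Write $\Lambda=\bar D+\Delta$ with $\Delta$ diagonal. Expanding gives
\begin{equation*}
\frac1d\mathrm{Tr}(\Lambda M\Lambda)-\frac1d\mathrm{Tr}(\bar D M\bar D)=\frac2d\mathrm{Tr}(\Delta M\bar D)+\frac1d\mathrm{Tr}(\Delta M\Delta),
\end{equation*}
where we used symmetry of $M$ and cyclicity of the trace. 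Each term is bounded by $|\frac1d\mathrm{Tr}(AB)|\le\|A\|_{\mathrm{op}}\frac1d\mathrm{Tr}|B|$, or more simply by $\|\Delta\|_{\mathrm{op}}\|M\|_{\mathrm{op}}(2\|\bar D\|_{\mathrm{op}}+\|\Delta\|_{\mathrm{op}})$. It therefore suffices to prove two things: $\|\bar D\|_{\mathrm{op}}=O(1)$, and $\|\Delta\|_{\mathrm{op}}=\max_k|1/Q_{kk}-\bar D_{kk}/\lambda\cdots|\to0$ in probability. Here the normalization of $\bar D$ is chosen to match $[\mathrm{diag}\,\bar Q]^{-1}$, as in \Cref{lemma:approximation_Lambda_D}.

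The first step is deterministic bounds. We have $Q_{kk}=e_k^\top Qe_k\ge 1/(\|\hat\Sigma\|_{\mathrm{op}}+\lambda)$, and $\|\hat\Sigma\|_{\mathrm{op}}\le C$ with high probability because $\|\Sigma\|_{\mathrm{op}}$ is bounded and the entries have $4+\varepsilon$ moments (Bai–Yin-type bound). Hence $\|\Lambda\|_{\mathrm{op}}\le C+\lambda$ with high probability. Similarly, the diagonal entries of $\bar Q$ are bounded below by $1/(\tilde m\|\Sigma\|_{\mathrm{op}}+1)\cdot\frac1\lambda$-type constants, since $\tilde m\le1/\lambda$. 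This gives $\|\bar D\|_{\mathrm{op}}=O(1)$, and all entries involved are bounded away from zero.

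The second step is the uniform diagonal concentration. Using $|1/a-1/b|=|a-b|/(ab)$ together with the lower bounds above, $\|\Delta\|_{\mathrm{op}}\le C\max_k|Q_{kk}-\bar Q_{kk}|$. This is exactly the content of \Cref{lemma:approximation_Lambda_D}, whose conclusion we invoke. I expect this to be the main obstacle, because the lemma is stated for Gaussian data, while here only $4+\varepsilon$ moments are assumed. A union bound over $d$ coordinates requires a per-entry error probability of $o(1/d)$.

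My first attempt would be an isotropic local law for sample covariance resolvents with $\lambda>0$ fixed, i.e. far from the spectrum (e.g. \citep{knowles2017anisotropic}). That gives $|e_k^\top(Q-\bar Q)e_k|\le d^{-1/2+\delta}$ with overwhelming probability, uniformly in $k$, although under finite moments one may first need to truncate the entries of $Z$ at level $d^{1/2-\delta'}$. The truncation changes $\hat\Sigma$ by $o(1)$ in operator norm with high probability, thanks to the $4+\varepsilon$ moment condition. The resolvent is Lipschitz in $\hat\Sigma$ with constant $\lambda^{-2}$, so that change does not affect the conclusion.

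A fallback is to avoid the maximum entirely. Since $M\succeq0$, $\frac1d|\mathrm{Tr}(\Delta M\bar D)|\le\|M\|_{\mathrm{op}}\|\bar D\|_{\mathrm{op}}\frac1d\sum_k|\Delta_{kk}|$, so it suffices to show $\frac1d\sum_k|\Delta_{kk}|\to0$. That in turn follows from $\mathbb{E}|Q_{kk}-\bar Q_{kk}|\to0$ uniformly in $k$, which only needs second-moment leave-one-out estimates (variance $O(1/d)$ via Efron–Stein, bias $O(1/\sqrt d)$). The quadratic term $\frac1d\mathrm{Tr}(\Delta M\Delta)$ is handled the same way: bound it by $\|M\|_{\mathrm{op}}\max_k|\Delta_{kk}|\cdot\frac1d\sum_k|\Delta_{kk}|$, using the crude $O(1)$ bound on $\max_k|\Delta_{kk}|$ from the first step. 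This averaged route needs only the weaker moment assumption, so I would present it as the main argument.
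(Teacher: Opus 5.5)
Your proposal is correct, and it differs from the paper's proof in two useful ways.

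\textbf{The cross term.} With $M=Q\Sigma Q$, the paper writes $\mathrm{Tr}[\Lambda^2 M]=\mathrm{Tr}[\bar D^2M]+\mathrm{Tr}[\Delta^2 M]$. This drops the first-order cross term $2\,\mathrm{Tr}[\bar D\Delta M]$. The paper then bounds only the quadratic piece by $C\|\Delta\|_{\mathrm{op}}^2$ and invokes \Cref{lemma:approximation_Lambda_D} for $\|\Delta\|_{\mathrm{op}}\to0$. Your expansion keeps the cross term. Your first route (operator-norm bound plus $\|\bar D\|_{\mathrm{op}}=O(1)$) is the paper's argument with this slip repaired.

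\textbf{The averaged route.} Your preferred route is genuinely different. Because $\Delta$ and $\bar D$ are diagonal, the difference equals $\frac1d\sum_k\Delta_{kk}(\Lambda_{kk}+\bar D_{kk})M_{kk}$. So you only need $\frac1d\sum_k|\Delta_{kk}|\to0$, not a bound uniform in $k$. This follows by Markov from $\max_k\mathbb{E}|Q_{kk}-\bar Q_{kk}|\to0$, combined with the crude bounds $Q_{kk}\ge(\|\hat\Sigma\|_{\mathrm{op}}+\lambda)^{-1}$ and $\bar Q_{kk}\ge(\|\Sigma\|_{\mathrm{op}}+\lambda)^{-1}$. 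The expectation bound comes from an Efron--Stein variance bound of order $O(1/n)$ and a leave-one-out bias bound, both available with bounded fourth moments.

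\textbf{What each buys.} The paper's route is a one-line reduction. However, it inherits the weakness of \Cref{lemma:approximation_Lambda_D}, which is proved only for Gaussian data via a union bound over $d$ coordinates with no stated tail rates. Your averaged argument works under the moment assumption actually stated in the lemma and needs neither a local law nor truncation. Operator-norm control of $\Delta$ remains the natural tool when one cares about individual eigenvalues, such as outliers. For this normalized trace functional, the averaged bound is both sufficient and more robust.
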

\begin{proof}
    Recall that $\hat{A} = I_{d} - Q(\lambda )\Lambda $, and as we just saw: 
    \begin{equation}
        L(\hat{A}) = \mathrm{Tr} [D Q(\lambda) \Sigma Q(\lambda) D] = \mathrm{Tr} [D^2 Q(\lambda) \Sigma Q(\lambda)].
    \end{equation}
    Then: 
    \begin{align}
        L(\hat{A}) = \mathrm{Tr} [\bar{D}^2 Q(\lambda) \Sigma Q(\lambda)] + \mathrm{Tr} [\Delta^2 Q(\lambda) \Sigma Q(\lambda)],
    \end{align}
    where $\Delta = \Lambda - \bar{D}$. 
    Therefore, we need to show that with high probability, $\mathrm{Tr} [\Delta^2 Q(\lambda) \Sigma Q(\lambda)] \to 0$, as $d \to \infty$. We have: 
    \begin{align}
        \dfrac{1}{d}\mathrm{Tr} [\Delta^2 Q(\lambda) \Sigma Q(\lambda)] & \leq \frac{1}{d} d \| \Delta\|_\mathrm{op}^2 \|Q(\lambda) \Sigma Q (\lambda) \|_\mathrm{op}  \\
        & \leq  C \| \Delta \|_\mathrm{op}^2,
    \end{align}
    where we used that $\| \Sigma \|_\mathrm{op} $ is bounded by assumption, $\|Q(\lambda) \|_\mathrm{op} \leq \frac{1}{\lambda}$, and $\| \Delta \|_\mathrm{op} \to 0$, as $d \to \infty$ by \Cref{lemma:approximation_Lambda_D}. 
\end{proof}

Before proving the asymptotic limit, we need one more Lemma. 
\begin{lemma}[Proposition 1 in \cite{B24}] 
Let $X = Z \Sigma \in \RR^{n \times d}$, with $\| \Sigma \|_\mathrm{op}$ bounded, and $Z$ having independent entries, with mean $0$, variance $1$. Assume that $\Sigma$ has a limiting spectral density and that $A, B$ have bounded operator norm. Then: 
\begin{align*}
    \mathrm{Tr}\left ( A (\hat{\Sigma} + \lambda I_{d})^{-1} \right ) &\sim  \dfrac{\kappa(\lambda)}{\lambda} \mathrm{Tr} \left ( AQ(\kappa(\lambda)) \right )
\end{align*}
and 
\begin{align*}
\mathrm{Tr}\left (  A (\hat{\Sigma} + \lambda I_{d})^{-1} B (\hat{\Sigma} + \lambda I_{d})^{-1}\right ) & \sim  \frac{\kappa(\lambda )^2}{\lambda^2  } \left (  \mathrm{Tr} \left ( A (\Sigma + \kappa(\lambda ) I_d)^{-1} B(\Sigma + \kappa(\lambda ) I_{d})^{-1} \right )\right . \\
& \left .+\frac{1}{n - \mathrm{Tr} \left ( \Sigma^2 (\Sigma + \kappa(\lambda ) I_{d})^{-2}\right )} \mathrm{Tr} \left ( A(\Sigma + \kappa(\lambda )I_{d})^{-2}\Sigma \right ) \mathrm{Tr} \left ( B(\Sigma + \kappa(\lambda )I_{d})^{-2} \Sigma \right )\right ),
\end{align*}
where $\frac{1}{\kappa(\lambda)}$ solves the self-consistent equation: 
\begin{equation*}
     \kappa(\lambda) = \lambda + \dfrac{\kappa(\lambda)}{n} \mathrm{Tr} \left ( \Sigma ( \Sigma + \kappa(\lambda)I_{d})^{-1}\right )
\end{equation*}
\label[lemma]{lemma:det_equivalent_quadratic}
\end{lemma}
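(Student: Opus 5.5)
We prove the first-order equivalence with the leave-one-out argument already used in Appendix~\ref{section:appendix_spectrum}. We then obtain the second-order (two-resolvent) equivalence by differentiating a generalized first-order equivalence with respect to a perturbation of the regularizer. Throughout, traces are understood at the scale where both sides are $\Theta(d)$, and ``$\sim$'' means the normalized difference vanishes in probability.

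\textbf{Step 1 (generalized first-order equivalent).} For a deterministic matrix $M$ with $M\succeq \lambda I_d$, or more generally $M=\lambda I_d+tB$ with $t$ in a small complex neighborhood of $0$, we show that $\mathrm{Tr}(A(\hat\Sigma+M)^{-1})\sim \mathrm{Tr}(A R_M)$. Here
\[
R_M=\Big(\tfrac{1}{1+\delta}\Sigma+M\Big)^{-1},\qquad \delta=\tfrac1n\mathrm{Tr}(\Sigma R_M).
\]
The steps are the ones already used in the proof of \Cref{thm:det_equivalent_A_hat}:
\begin{itemize}
\item write $(\hat\Sigma+M)^{-1}(\hat\Sigma+M)=I_d$;
\item expand $\hat\Sigma=\frac1n\sum_i x_ix_i^\top$;
\item apply Sherman--Morrison to each rank-one term;
\item concentrate the quadratic forms $\frac1n x_i^\top (\hat\Sigma_{\setminus i}+M)^{-1}x_i$ around $\delta$ (Lemma~2.1 of \cite{CL22});
\item replace leave-one-out resolvents by full ones, at a cost of $O(1/n)$ in operator norm.
\end{itemize}
The $4+\varepsilon$ moment assumption suffices for the quadratic-form concentration. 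For $M=\lambda I_d$ we set $\kappa=\lambda(1+\delta)$. Then $R_M=\frac{\kappa}{\lambda}(\Sigma+\kappa I_d)^{-1}$, and the equation for $\delta$ becomes exactly $\kappa=\lambda+\frac{\kappa}{n}\mathrm{df}_1^\Sigma(\kappa)$. This gives the first claim.

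\textbf{Step 2 (differentiation).} Set $Q_t=(\hat\Sigma+\lambda I_d+tB)^{-1}$. Then
\[
\frac{d}{dt}\mathrm{Tr}(AQ_t)\big|_{t=0}=-\mathrm{Tr}(AQBQ).
\]
Differentiating the deterministic side $\mathrm{Tr}(AR_t)$, with $R_t=(\Sigma/(1+\delta_t)+\lambda I_d+tB)^{-1}$, gives
\[
-\mathrm{Tr}(ARBR)+\frac{\delta'}{(1+\delta)^2}\mathrm{Tr}(AR\Sigma R).
\]
Differentiating the fixed-point equation for $\delta_t$ gives
\[
\delta'\Big(1-\tfrac{1}{n(1+\delta)^2}\mathrm{Tr}(\Sigma R\Sigma R)\Big)=-\tfrac1n\mathrm{Tr}(\Sigma RBR).
\]
At $t=0$ we substitute $R=\frac{\kappa}{\lambda}(\Sigma+\kappa I_d)^{-1}$ and $1+\delta=\kappa/\lambda$. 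Then $\frac{1}{(1+\delta)^2}\mathrm{Tr}(\Sigma R\Sigma R)=\mathrm{df}_2^\Sigma(\kappa)$, so
\[
\frac{\delta'}{(1+\delta)^2}=-\frac{\mathrm{Tr}(B\Sigma(\Sigma+\kappa I_d)^{-2})}{n-\mathrm{df}_2^\Sigma(\kappa)}.
\]
Combining the two derivatives yields exactly
\[
\frac{\kappa^2}{\lambda^2}\Big(\mathrm{Tr}(A\bar Q B\bar Q)+\frac{\mathrm{Tr}(A\bar Q^2\Sigma)\,\mathrm{Tr}(B\bar Q^2\Sigma)}{n-\mathrm{df}_2^\Sigma}\Big),\qquad \bar Q=(\Sigma+\kappa I_d)^{-1}.
\]

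\textbf{Step 3 (interchanging limit and derivative) — the main obstacle.} Convergence of functions does not by itself give convergence of derivatives. We handle this by analyticity. For $|t|\le t_0$ with $t_0\|B\|_{\rm op}<\lambda/2$, both $\frac1d\mathrm{Tr}(AQ_t)$ and $\frac1d\mathrm{Tr}(AR_t)$ are analytic in $t$ and uniformly bounded, since $\|Q_t\|_{\rm op}\le 2/\lambda$. If the Step~1 equivalence holds pointwise (on a countable dense set, hence, by equicontinuity, uniformly on the disc), then Cauchy's integral formula transfers convergence to the derivative at $t=0$.

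Two points need care:
\begin{itemize}
\item Step~1 must hold for non-Hermitian, complex $M=\lambda I_d+tB$. The leave-one-out argument only uses bounded resolvents and a lower bound on $|1+\frac1n x_i^\top(\cdot)^{-1}x_i|$; both hold for small $|t|$ by perturbation from $t=0$.
\item The fixed point $\delta_t$ must be unique and analytic. This follows from the implicit function theorem at $t=0$, because $1-\mathrm{df}_2^\Sigma/n>0$. That inequality holds since $\mathrm{df}_2^\Sigma<\mathrm{df}_1^\Sigma$ and $\mathrm{df}_1^\Sigma/n=1-\lambda/\kappa<1$.
\end{itemize}
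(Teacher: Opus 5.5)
The paper gives no argument of its own for this lemma. It imports it from Proposition~1 of \cite{B24} and uses it as a black box in the proof of Theorem~\ref{theorem:det_equivalent_gen_error}, so your proposal is a genuinely different, self-contained route. Its computations are correct. With $\kappa=\lambda(1+\delta)$, the generalized equivalent $(\hat\Sigma+M)^{-1}\approx(\Sigma/(1+\delta)+M)^{-1}$ reduces at $M=\lambda I_d$ to the first claim. Differentiating in $t$, with $R=\frac{\kappa}{\lambda}(\Sigma+\kappa I_d)^{-1}$ and $\delta'/(1+\delta)^2=-\mathrm{Tr}(B\Sigma\bar Q^2)/(n-\mathrm{df}_2^\Sigma)$, reproduces the two-resolvent formula exactly. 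Compared with the citation, your route buys three things:
\begin{itemize}
\item self-containment, since it reuses the leave-one-out machinery of Appendix~\ref{section:appendix_spectrum} instead of an external black box;
\item a transparent origin for the $1/(n-\mathrm{df}_2^\Sigma)$ correction, as the sensitivity of the fixed point $\delta_t$ to the perturbation $tB$;
\item a rigorous justification, via analyticity and Cauchy estimates, for exchanging the large-$d$ limit with the $t$-derivative, which informal derivations of such second-order equivalents skip.
\end{itemize}
It also never uses the ``limiting spectral density'' hypothesis. The citation buys only brevity.

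A few points to tighten, none of which is a real gap.
\begin{itemize}
\item You invoke $4+\varepsilon$ moments, which the lemma as stated does not assume. This is harmless where the lemma is applied, since Theorem~\ref{theorem:det_equivalent_gen_error} assumes it, but the literal statement would need a truncation step.
\item The leave-one-out argument controls $\EE Q_t$. You still need concentration of $\frac{1}{d}\mathrm{Tr}(AQ_t)$, which follows from Efron--Stein using $Q_t-Q_{-i,t}=-\frac{1}{n}Q_tx_ix_i^\top Q_{-i,t}$. Relatedly, the ``$O(1/n)$ in operator norm'' replacement holds for $\EE Q_{-i,t}$ versus $\EE Q_t$, not for the random matrices themselves, whose difference is rank one with norm $O(1)$.
\item No perturbative lower bound on the Sherman--Morrison denominator is needed. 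Indeed $(1+\frac{1}{n}x_i^\top Q_{-i,t}x_i)^{-1}=1-\frac{1}{n}x_i^\top Q_tx_i$, whose modulus is at most $1+2\|x_i\|^2/(\lambda n)$.
\item For complex $t$ you must identify the leave-one-out quantity $\frac{1}{n}\mathrm{Tr}(\Sigma\,\EE Q_{-i,t})$ with the analytic branch $\delta_t$. The analyticity radius must also be uniform in $d$ for the Cauchy estimate on a fixed circle. Both follow from the uniform Lipschitz dependence of that quantity on $t$, together with the stability bound $1-\mathrm{df}_2^\Sigma/n\ge\lambda/\kappa\ge\lambda/(\lambda+\frac{d}{n}\|\Sigma\|_{\mathrm{op}})$, which is uniform in $d$.
\end{itemize}
Finally, you tacitly and correctly take the row covariance to be $\Sigma$, i.e.\ $X=Z\Sigma^{1/2}$ as in Theorem~\ref{theorem:det_equivalent_gen_error}, rather than the literal $X=Z\Sigma$ in the statement.
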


We can now proceed with the proof of \Cref{theorem:det_equivalent_gen_error}. First, by \Cref{lemma:concentration_gen_error_1}, we have: 
\begin{equation}
    L(\hat{A} ) \sim  \dfrac{1}{d} \mathrm{Tr} \left (\bar{D}^2 Q(\lambda) \Sigma Q(\lambda) \right ). 
\end{equation}
Then, by applying \Cref{lemma:det_equivalent_quadratic} we get: 

\begin{align}
    L(\hat{A}) & \sim  \frac{\kappa(\lambda)^2}{\lambda^2 } \left ( \mathrm{Tr} \left ( \bar{D}^2 (\Sigma + \kappa(\lambda) I_{d})^{-1} \Sigma (\Sigma + \kappa(\lambda) I_{d})^{-1}\right )\right . \\
    & \left .+ \dfrac{1}{n - \mathrm{Tr} \left ( \Sigma^2 (\Sigma + \kappa(\lambda) I_{d} )^{-2}\right)} \mathrm{Tr} \left (\bar{D}^2(\Sigma + \kappa(\lambda))^{-2} \Sigma \right ) \mathrm{Tr} \left ( \Sigma (\Sigma + \kappa(\lambda) I_{d})^{-2} \Sigma \right) \right ). \label{eq:det_equivalent_indermediate_step_1}
\end{align}
Let $L_1:=\mathrm{Tr} \left ( \bar{D}^2 (\Sigma + \kappa(\lambda) I_{d})^{-1} \Sigma (\Sigma + \kappa(\lambda) I_{d})^{-1}\right )$. Since $\Sigma$ and $(\Sigma + \kappa(\lambda) I_{d})^{-1}$ commute, we can re-write \Cref{eq:det_equivalent_indermediate_step_1} as: 
\begin{align}
    L(\hat{A}) & \sim  \frac{\kappa(\lambda)^2}{\lambda^2 } L_1   \left ( 1 + \dfrac{1}{n - \mathrm{Tr} \left ( \Sigma^2 (\Sigma + \kappa(\lambda) I_{d} )^{-2}\right)} L_{1}\mathrm{Tr} \left ( \Sigma (\Sigma + \kappa(\lambda) I_{d})^{-2} \Sigma \right) \right )
\end{align}
By re-writing $\bar{D} = \dfrac{\kappa(\lambda)}{\lambda} [\mathrm{diag}(Q(\kappa(\lambda)))]^{-1} = \dfrac{\kappa(\lambda)}{\lambda} \bar{D}_2$, we get: 
\begin{align}
    L(\hat{A}) & \sim L_1 \left ( 1 + \dfrac{\mathrm{df}_2^{\Sigma}(\kappa(\lambda))}{n - \mathrm{df}_2^{\Sigma}(\kappa(\lambda))} \right ).
\end{align}

\section{Comparison with PCA in the population limit}
\label{appendix_pca}

In this section, we consider the classical statistical limit $n \to \infty$, while $d = \Theta_{n}(1)$. Note that in this case, $\hat{\Sigma} \to \Sigma$, and therefore all the quantities in the population limit, and in particular the resolvent, will contain the population covariance $\Sigma$. To avoid confusion, we will denote
\[
Q_{\mathrm{App}}(\lambda) := (\Sigma + \lambda I_{d})^{-1}.
\] 

\subsection{Population Loss for Ridge Regression for $\lambda >0$}

In this section, we assume $\alpha = \frac{n}{d} = \infty$ and we train our predictor by Ridge Regression with $\lambda >0$. In this case, by the same proof we did in \ref{appendix_preliminaries}:
\begin{equation}
    \hat{A}_\mathrm{App} =  I_{d} - Q_\mathrm{App}(\lambda) [\mathrm{diag}( Q_\mathrm{App}(\lambda))]^{-1}. 
\end{equation}
Then, the Approximation error $L(\mathrm{SSR})$ is given by: 
\begin{align}
    L(\mathrm{SSR}) & = \dfrac{1}{d}\mathrm{Tr} \left ([\mathrm{diag}( Q_\mathrm{App}(\lambda))]^{-1} Q_{\mathrm{App}}(\lambda) \Sigma  Q_{\mathrm{App}}(\lambda) [\mathrm{diag}( Q_\mathrm{App}(\lambda))]^{-1} \right ) \\
    & = \dfrac{1}{d} \mathrm{Tr} \left ([\mathrm{diag}( Q_\mathrm{App}(\lambda))]^{-2} Q_{\mathrm{App}}(\lambda) \Sigma  Q_{\mathrm{App}}(\lambda) \right ). 
\end{align}
Noting that $Q_{\mathrm{App}}(\lambda)$ and $\Sigma$ commute, we conclude: 
\begin{equation}
    L(\mathrm{SSR})  = \dfrac{1}{d}\mathrm{Tr} \left ([\mathrm{diag}( Q_\mathrm{App}(\lambda))]^{-2} \Sigma  Q_{\mathrm{App}}(\lambda)^2 \right )
\end{equation}
By using the fact that $[(\mathrm{diag}( Q_\mathrm{App}(\lambda)))]^{-2}$ is diagonal, we can summarize this as: 
\begin{equation}
    L(\mathrm{SSR}) = \dfrac{1}{d}\sum_{\ell = 1}^{d} \dfrac{(\Sigma Q_\mathrm{App}(\lambda)^2)_{\ell, \ell}}{[Q_\mathrm{App}(\lambda)_{\ell, \ell}]^2}
    \label{eq:app_Ridge_1}
\end{equation}
We can also write this as: 
\begin{equation}
    L(\mathrm{SSR})  = \dfrac{1}{d}\sum_{\ell = 1}^{d} \dfrac{Q_\mathrm{App}(\lambda)_{\ell, \ell}- \lambda Q_\mathrm{App}(\lambda)_{\ell, \ell}^2}{[Q_\mathrm{App}(\lambda)_{\ell, \ell}]^2}. 
    \label{eq:app_Ridge_2}
\end{equation}
In the following, we will use both \Cref{eq:app_Ridge_1} and \Cref{eq:app_Ridge_2}. 

\subsection{Population Loss for Principal Component Analysis}

Taking $n \to \infty$, the PCA estimator for $p$ component becomes: 
\begin{equation}
    P_{p} = \dfrac{1}{d}\sum_{\ell = 1}^{p} u_{\ell}u_{\ell}^T, 
\end{equation}
where $u_{\ell} u_{\ell}^T$ are the $p$ eigenvectors of $\Sigma$ with bigger eigenvalues. Then, the Approximation error of PCA is: 

\begin{equation}
    L(\mathrm{PCA}) = \dfrac{1}{d}\mathrm{Tr} (P_{\geq p +1}  \Sigma),
    \label{eq:app_PCA}
\end{equation}
where $P_{\geq p +1}$ denotes the projection into the $d - p$ lowest eigenvectors of $\Sigma$. 

Using \Cref{eq:app_Ridge_1} and \Cref{eq:app_PCA}, we can now compare the Performance of PCA and Ridge in the population limit for different structures. 

\subsection{Comparison for a Spiked Covariance - Proof of \Cref{lemma:pca_comparison_spiked_cov}}

In this section we study the case where the covariance of the data is given by $\Sigma = I_{d} + \theta vv^T$, for $\theta \geq 0$, $\| v\|_2 = 1$. It will be useful to denote $\tau = 1 + \lambda$. This way: 
\begin{equation}
    Q_\mathrm{App}(\lambda) = (I_{d} + \theta vv^T + \lambda I_{d})^{-1} = \left ( \tau I_{d} +\theta vv^T \right )^{-1} = \dfrac{1}{\tau} I_{d} - \dfrac{\theta}{\tau (\tau + \theta)} vv^T,
\end{equation}
and for every $\ell \in [d]$, the diagonal is given by: 
\begin{equation}
    Q_\mathrm{App}(\lambda)_{\ell, \ell} = \dfrac{1}{\tau} - \dfrac{\theta}{\tau (\tau + \theta)} v_{\ell}^2 = \dfrac{1}{\tau} \left ( 1 - \dfrac{\theta}{\tau + \theta} v_{\ell}^2\right ). 
\end{equation}
In order to compute the numerator in \Cref{eq:app_Ridge_1}, we note that $\Sigma$ and $Q(\lambda)_\mathrm{App}^2$ commute. In particular, $v$ will be an eigenvector with eigenvalue $\dfrac{1+\theta}{(\tau + \theta)^2} $, and all other eigenvector will have eigenvalues $\frac{1}{\tau^2}$. Then: 
\begin{equation}
    \Sigma Q(\lambda)_\mathrm{App}^2 =  \dfrac{1}{\tau^2} I_{d} + \left (\dfrac{1+\theta}{(\tau + \theta)^2} - \frac{1}{\tau^2} \right )vv^T. 
\end{equation}
Then, we get: 
\begin{equation}
    (\Sigma Q(\lambda)_\mathrm{App}^2)_{\ell, \ell} = \dfrac{1}{\tau^2} + \left (\dfrac{1+\theta}{(\tau + \theta)^2} - \frac{1}{\tau^2} \right )v_{\ell}^2. 
\end{equation}
This way, for a spiked covariance $\Sigma = I_d + \theta vv^T$, \Cref{eq:app_Ridge_1} is given by: 
\begin{equation}
    L(\mathrm{SSR}) = \dfrac{1}{d}\sum_{\ell = 1}^{d}\dfrac{\dfrac{1}{\tau^2} + \left (\dfrac{1+\theta}{(\tau + \theta)^2} - \dfrac{1}{\tau^2} \right )v_{\ell}^2}{\dfrac{1}{\tau^2} \left ( 1 - \dfrac{\theta}{\tau + \theta} v_{\ell}^2\right )^2} = \dfrac{1}{d}\sum_{\ell = 1}^{d}\dfrac{1+ \tau^2\left (\dfrac{1+\theta}{(\tau + \theta)^2} - \dfrac{1}{\tau^2} \right )v_{\ell}^2}{\left ( 1 - \dfrac{\theta}{\tau + \theta} v_{\ell}^2\right )^2}
\end{equation}
At last, to simplify, note that: 
\begin{equation}
    \tau^2\left (\dfrac{1+\theta}{(\tau + \theta)^2} - \dfrac{1}{\tau^2} \right ) = \dfrac{\tau^2 (1 + \theta) -\tau^2 -2\tau \theta - \theta^2}{(\tau + \theta)^2} = \dfrac{\theta(\tau^2 -2\tau - \theta )}{(\tau + \theta)^2},
\end{equation}
so 
\begin{equation}
    L(\mathrm{SSR}) = \sum_{\ell = 1}^{d}\dfrac{1+ \dfrac{\theta(\tau^2 -2\tau - \theta )}{(\tau + \theta)^2}v_{\ell}^2}{\left ( 1 - \dfrac{\theta}{\tau + \theta} v_{\ell}^2\right )^2} = \sum_{\ell = 1}^{d}\dfrac{1+ av_{\ell}^2}{\left ( 1 - b v_{\ell}^2\right )^2},
\end{equation}
where we defined $a = \dfrac{\theta(\tau^2 -2\tau - \theta )}{(\tau + \theta)^2}$ and $b = \dfrac{\theta}{\tau + \theta}$. \\

On the other hand, for PCA with any number of directions $p \in [d]$, 
\begin{equation}
    L(PCA) = \frac{1}{d} (d - p) = ( 1- \frac{p}{d} ) 
\end{equation}
Let $\gamma: = \frac{p}{d} $. Then, we have: 
\begin{align}
    L(\mathrm{SSR}) - L(\mathrm{PCA}) &= \left ( \dfrac{1}{d}\sum_{\ell = 1}^{d}\dfrac{1+ a v_{\ell}^2}{\left ( 1 - b v_{\ell}^2\right )^2} \right ) -  (1- \frac{p}{d} )\\
    & = \dfrac{p}{d} + \dfrac{1}{d}\sum_{\ell = 1}^{d} \left ( \dfrac{1+ av_{\ell}^2}{\left ( 1 - b v_{\ell}^2\right )^2} - 1\right ). 
\end{align}
For each $\ell \in [d]$, denote
\begin{equation}
    T_{\ell} =  \dfrac{1+ av_{\ell}^2}{\left ( 1 - b v_{\ell}^2\right )^2} - 1. 
\end{equation}
Then: 
\begin{align}
    T_{\ell} & = \dfrac{1+ a v_{\ell}^2}{\left ( 1 - b v_{\ell}^2\right )^2} - 1 \\
    & = \dfrac{1+ a v_{\ell}^2 - (1 - bv_\ell^2)^2}{( 1 - b v_{\ell}^2 )^2} \\
    & = \dfrac{1 + a v_\ell^2 - 1 + 2bv_\ell^2 -b^2 v_\ell^4}{ (1 - b v_{\ell}^2 )^2} \\
    & = \dfrac{v_\ell^2 ( a +2b -b^2 v_\ell^2)}{ (1 - b v_{\ell}^2 )^2}.
    \label{eq:intermediate_step_T}
\end{align}
Now, by the definition of $a$ and $b$: 
\begin{align}
    a +2b -b^2 v_\ell^2 & =  \dfrac{\theta(\tau^2 -2\tau - \theta )}{(\tau + \theta)^2} + 2 \dfrac{\theta}{\tau + \theta} - \dfrac{\theta^2}{(\tau + \theta)^2} v_{\ell}^2 \\
    & = \dfrac{\theta}{(\tau + \theta)^2} \left ((\tau^2 + \theta (1 - v_{\ell}^2 )\right ). 
\end{align}
Since $\| v\|_2 = 1$, we have that $1 - v_{\ell}^2  \geq 0 $, and hence: 
\begin{equation}
    a +2b -b^2 v_\ell^2 \geq 0. 
\end{equation}
Then, also: 
\begin{equation}
    T_{\ell} =  \dfrac{v_\ell^2 ( a +2b -b^2 v_\ell^2)}{ (1 - b v_{\ell}^2 )^2}\geq 0. 
\end{equation}
In particular, since $\| v\|_2 = 1$, there exists an $\ell \in [d]$ such that $a +2b -b^2 v_\ell^2 > 0 $. Therefore: 
\begin{equation}
    L(\mathrm{SSR}) - L(\mathrm{PCA}) \geq \dfrac{p}{d} >0. 
\end{equation}
and in particular:
\begin{equation}
    L(\mathrm{SSR}) - L(\mathrm{PCA}) > 0, 
\end{equation}
and therefore PCA achieves a better generalization error than Self-Supervised Ridge Regression in the population limit. 

\subsection{Comparison for a Toeplitz Matrix - Proof of \Cref{prop:pca_comparison_toeplitz}}
\label{appendix_pca_comparison_toeplitz}

In this section we will sketch the proof of \Cref{prop:pca_comparison_toeplitz}. 
Let $\rho \in (0,1)$, and define the Toeplitz matrix $\Sigma \in \RR^{d \times d}$  by 
\begin{equation}
    \Sigma_{i,j} = \rho^{|i-j|}. 
\end{equation}
We want to compute the approximation error for this model which we know by  \Cref{eq:app_Ridge_2} is given by: 
\begin{equation}
    L(\mathrm{SSR}) = \dfrac{1}{d}\sum_{\ell = 1}^{d} \dfrac{Q_\mathrm{App}(\lambda)_{\ell, \ell} - \lambda Q_\mathrm{App}(\lambda)^2_{\ell, \ell}}{[Q_\mathrm{App}(\lambda)_{\ell, \ell}]^2}. 
\end{equation}
Note that: 
\begin{equation}
    Q_\mathrm{App}(\lambda)^2_{\ell ,\ell} = -\left ( \dfrac{d}{d\lambda} Q_\mathrm{App}(\lambda ) \right )_{\ell, \ell } = - Q_\mathrm{App}'(\lambda)_{\ell, \ell}. 
\end{equation}
Then: 
\begin{equation}
    L(\mathrm{SSR}) = \dfrac{1}{d}\sum_{\ell = 1}^{d} \dfrac{Q_\mathrm{App}(\lambda)_{\ell, \ell} + \lambda Q'_\mathrm{App}(\lambda)_{\ell, \ell}}{ [Q_\mathrm{App}(\lambda)_{\ell, \ell}]^2}.
\end{equation}
So, what we need to compute $L(\mathrm{SSR})$ is to compute $Q(\lambda)_{\ell,\ell}$ for each $\ell \in [d]$. At this point, we still haven't used the fact that $\Sigma$ is a Toeplitz matrix. In particular, in high dimensions, this matrix can be approximated by the following circulant matrix (\cite{PB20}, Appendix A.3) 
\begin{equation}
    \tilde{\Sigma}_{i,j} = \rho^{\min \{|i-j|, |i-j + d|, |i-j - d| \}}. 
\end{equation}
Note that this only changes the element in the border of $\Sigma$. From now on, we will work with denote  $\Sigma = \tilde{\Sigma}$, knowing that asymptotically both matrices share the same spectrum (see \cite{PB20}, Appendix 3). \\

This modified version of $\Sigma$ can be diagonalized by the Fourier transform, and the eigenvector are given by: 
\begin{equation}
    [v_k]_{\ell} = \exp \frac{2\pi i k \ell}{d},
\end{equation}
where for each $k$, note that both the real and complex parts of $v_k$ are eigenvectors.  
As $d$ grows, we can approximate the eigenvalues of $\Sigma$ by: 
\begin{equation}
    E(x) = \dfrac{1 - \rho^2}{1 + \rho^2 - 2\rho \cos(\pi x)}, 0\leq x \leq 1. 
    \label{eq:expression_eigenvalues_toeplitz}
\end{equation}
More precisely, for $\ell \in [d]$, $\lambda_{\ell} \sim E(\frac{\ell}{d} )$. Now, recall we want to know the value of $Q(\lambda)_{\ell, \ell}$, for $\ell \in [d]$. However, since $\Sigma$ is now a circulant matrix, we have that $ Q(\lambda) = (\Sigma + \lambda I )^{-1}$ is a circulant matrix. Hence, all of it's diagonal entries are the same. This way, we can write: 
\begin{equation}
Q(\lambda)_{\ell, \ell}  = \dfrac{1}{d} \sum_{k=1}^{d} Q(\lambda)_{k,k} = \dfrac{\mathrm{Tr}(Q(\lambda)) }{d}. 
\end{equation}
Then, for large $d$: 
\begin{equation}
    Q(\lambda)_{\ell, \ell} \sim \int_0^1 \dfrac{1}{\lambda + E(x)} dx = m_{\Sigma}(-\lambda),
\end{equation}
where $m(\cdot)$ is the Stletjes transform. Using the fact that
\begin{equation}
    t_\Sigma(z) = -zm_{\Sigma}(z) - 1 \implies m_\Sigma(z) = -\dfrac{t_\Sigma(z)}{z} - \dfrac{1}{z}, 
\end{equation}
for $t_\Sigma(z)$ being the T-transform (\cite{PB20}, Chapter 11) , we can use (\cite{PB20}, Equation A.33) to get: 
\begin{align}
    Q(\lambda)_{\ell, \ell} &\sim \dfrac{1}{\lambda} -\dfrac{1}{\lambda} \dfrac{1}{\sqrt{(\lambda + E^{-})(\lambda + E^{+})}} \\
\end{align}
where $E_+ = \frac{1 + \rho}{1-\rho}$, $E_- = \frac{1 - \rho}{1+\rho}$. Then: 
\begin{align}
    Q(\lambda)^2_{\ell, \ell} & = - \dfrac{d}{d\lambda} Q(\lambda)_{\ell, \ell} \\
    & = - \dfrac{d}{d\lambda} \left ( \dfrac{1}{\lambda} -\dfrac{1}{\lambda} \dfrac{1}{\sqrt{(\lambda + E^{-})(\lambda + E^{+})}}  \right ) \\
    & = \dfrac{1}{\lambda^2} -\dfrac{\sqrt{(\lambda + E^{-})(\lambda + E^{+})}+\dfrac{\lambda(2\lambda +E^{+} + E^{-})}{2\sqrt{(\lambda + E^{-})(\lambda + E^{+})}}}{\lambda^2 (\lambda + E^{-})(\lambda + E^{+}) } \\
    & =  \dfrac{1}{\lambda^2} - \dfrac{1}{\lambda^2\sqrt{(\lambda + E^{-})(\lambda + E^{+})}} - \dfrac{(2\lambda + E^{+} + E^{-})}{2\lambda (\lambda + E^{-})^{\frac{3}{2}}(\lambda + E^{+})^{\frac{3}{2}}}. 
\end{align}
Then: 
\begin{align}
    Q(\lambda)_{\ell, \ell} - \lambda Q(\lambda)^2_{\ell, \ell} & = \dfrac{1}{\lambda} - \dfrac{1}{\lambda \sqrt{(\lambda + E^{-})(\lambda + E^{+})}} - \left ( \dfrac{1}{\lambda} - \dfrac{1}{\lambda\sqrt{(\lambda + E^{-})(\lambda + E^{+})}} -\dfrac{(2\lambda + E^{+} + E^{-})}{2(\lambda + E^{-})^{\frac{3}{2}}(\lambda + E^{+})^{\frac{3}{2}}}\right ) \\ 
    & = \dfrac{(2\lambda + E^{+} + E^{-})}{2(\lambda + E^{-})^{\frac{3}{2}}(\lambda + E^{+})^{\frac{3}{2}}}
\end{align}
and: 
\begin{align}
    \dfrac{Q(\lambda)_{\ell, \ell} - \lambda Q(\lambda)^2_{\ell, \ell}}{[Q(\lambda)_{\ell, \ell}]^2} & =  \dfrac{\dfrac{(2\lambda + E^{+} + E^{-})}{2(\lambda + E^{-})^{\frac{3}{2}}(\lambda + E^{+})^{\frac{3}{2}}}}{ \dfrac{1}{\lambda^2}\left ( 1 -\dfrac{1}{\sqrt{(\lambda + E^{-})(\lambda + E^{+})}}\right )^2} \\ 
    & = \dfrac{\dfrac{(2\lambda + E^{+} + E^{-})}{2(\lambda + E^{-})^{\frac{3}{2}}(\lambda + E^{+})^{\frac{3}{2}}}}{ \dfrac{1}{\lambda^2}\left ( \dfrac{\sqrt{(\lambda + E^{-})(\lambda + E^{+})} -1}{\sqrt{(\lambda + E^{-})(\lambda + E^{+})}}\right )^2} \\
    & = \dfrac{\lambda^2 (2\lambda + E^{+} + E^{-})}{2(\lambda + E^{-})^{\frac{1}{2}}(\lambda + E^{+})^{\frac{1}{2}}\left ( \sqrt{(\lambda + E^{-})(\lambda + E^{+})} -1\right )^2}.
\end{align}
With this, we conclude
\begin{equation}
    L(\mathrm{SSR}) \sim \dfrac{\lambda^2 (2\lambda + E^{+} + E^{-})}{2(\lambda + E^{-})^{\frac{1}{2}}(\lambda + E^{+})^{\frac{1}{2}}\left ( \sqrt{(\lambda + E^{-})(\lambda + E^{+})} -1\right )^2},
\end{equation}
where we recall $E^{+} = \frac{1 + \rho}{1-\rho}$ and $E^{-} = \frac{1-\rho}{1 + \rho}$. Let 
\begin{equation}
    f(\lambda, \rho ) : =  \dfrac{\lambda^2 (2\lambda + E^{+} + E^{-})}{2(\lambda + E^{-})^{\frac{1}{2}}(\lambda + E^{+})^{\frac{1}{2}}\left ( \sqrt{(\lambda + E^{-})(\lambda + E^{+})} -1\right )^2}
    \label{eq:f_lambda_rho_toeplitz}
\end{equation}
so that
\begin{equation}
    L(\mathrm{SSR}) \sim f(\lambda, \rho ). 
    \label{eq:ridge_app_toeplitz}
\end{equation}
As a sanity check, note that if we take $\rho \to 0$, we get 
$f(\lambda, \rho ) \to 1$, which we know is the case for an isotropic covariance.

On the other hand, we can compute the approximation error for PCA for a number of direction $p$ as follows. Assume that $\frac{p}{d} : = \gamma$, as $d $ grows to infinity. Then, asymptotically, the approximation error of PCA will be: 
\begin{equation}
    L(\mathrm{PCA}) = 1 - \dfrac{1}{d}\sum_{j = 1}^{\lfloor \gamma d\rfloor} \lambda_j \sim  \int_\gamma^1 E(x)  dx.
\end{equation}
By computing this integral, we get: 
\begin{align}
    \int_\gamma^1 E(x)  dx
    & = \int_\gamma^1 \dfrac{1 - \rho^2}{1 + \rho^2 - 2\rho \cos(\pi x)} dx \\
    & = (1 - \rho^2) \int_\gamma^1 \dfrac{1}{1 + \rho^2 - 2\rho \cos(\pi x)}dx \\
    & = \dfrac{(1 - \rho^2)}{(1-\rho^2)} \left ( 1 - \dfrac{2}{\pi} \arctan \left ( \dfrac{1 + \rho}{1-\rho} \tan \dfrac{\pi \gamma}{2}\right ) \right ) \\
    & =  1 - \dfrac{2}{\pi} \arctan \left ( E^{+} \tan \dfrac{\pi \gamma}{2}\right ),
\end{align}
so 
\begin{equation}
    L(\mathrm{PCA}) \sim \left ( 1 - \dfrac{2}{\pi} \arctan \left ( E^{+} \tan \dfrac{\pi \gamma}{2}\right ) \right ). 
\end{equation}
With this, we conclude that, asymptotically in $d$,
\begin{equation}
    L(\mathrm{PCA}) < L (\mathrm{SSR}) \iff \left ( 1 - \dfrac{2}{\pi} \arctan \left ( \dfrac{1 + \rho}{1-\rho} \tan \dfrac{\pi \gamma}{2}\right ) \right ) \leq f(\lambda, \rho ).
\end{equation}
Now, we take the limit as $\lambda \to 0$ for $f(\lambda, \rho)$. Let $F(\lambda) = \sqrt{(\lambda + E^{-})(\lambda + E^{+})} = \sqrt{\lambda^2 + (E^{+} + E^{-})\lambda +1}$. Then: 
\begin{align}
    f(\lambda, \rho) & = \dfrac{\lambda^2 (2\lambda + E^{+} + E^{-})}{2(\lambda + E^{-})^{\frac{1}{2}}(\lambda + E^{+})^{\frac{1}{2}}\left ( \sqrt{(\lambda + E^{-})(\lambda + E^{+})} -1\right )^2} \\
    & = \dfrac{\lambda^2 (2\lambda + E^{+} + E^{-})}{2F(\lambda)\left ( F(\lambda) -1\right )^2}.
\end{align}
Now: 
\begin{equation}
    F(\lambda) -1 = \dfrac{F(\lambda)^2 - 1}{F(\lambda) + 1},
\end{equation}
so: 
\begin{align}
    f(\lambda, \rho) & = \dfrac{\lambda^2 (2\lambda + E^{+} + E^{-})(F(\lambda) + 1)^2}{2F(\lambda)\left ( F(\lambda)^2 -1\right )^2} = \dfrac{\lambda^2 (2\lambda + E^{+} + E^{-})(F(\lambda) + 1)^2}{2F(\lambda)\lambda^2\left ( \lambda + (E^{+} + E^{-})\right )^2} \\
    & =  \dfrac{(2\lambda + E^{+} + E^{-})(F(\lambda) + 1)^2}{2F(\lambda)\left ( \lambda + (E^{+} + E^{-})\right )^2}.
\end{align}
By noting that $F(\lambda) \to 1$ as $\lambda \to 0$, we get: 
\begin{equation}
    f(\lambda, \rho) \to \dfrac{2}{(E^{+} + E^{-} )}. 
\end{equation}
Then, for any fixed $\rho \in (0,1)$, as $\lambda \to 0$ we get: 
\begin{equation}
    L(\mathrm{PCA}) < L (\mathrm{SSR}) \iff \left ( 1 - \dfrac{2}{\pi} \arctan \left ( \dfrac{1 + \rho}{1-\rho} \tan \dfrac{\pi \gamma}{2}\right ) \right ) \leq \dfrac{2}{(E^{+} + E^{-} )}
\end{equation}

Recall that $E^{-} = \sfrac{1}{E^{+}}$, so: 
\begin{equation}
    L(\mathrm{PCA}) < L (\mathrm{SSR}) \iff \left ( 1 - \dfrac{2}{\pi} \arctan \left ( E^{+} \tan \dfrac{\pi \gamma}{2}\right ) \right ) \leq \dfrac{2 E^{+} }{(E^{+} )^2 + 1} 
\end{equation}
Now, the LHS of this inequality is decreasing in $\gamma$. This means that, for a fixed value of $E^{+}(\rho)$, there exists a minimum value $\gamma_{\mathrm{min}}(\rho)$ such that $L(\mathrm{PCA}) < L(\mathrm{SSR})$ for all $\gamma \geq \gamma_{\mathrm{min}}(\rho)$. To find this value, we have: 
\begin{align}
     & \left ( 1 - \dfrac{2}{\pi} \arctan \left ( E^{+} \tan \dfrac{\pi \gamma}{2}\right ) \right ) \leq \dfrac{2 E^{+} }{(E^{+} )^2 + 1} \\
     \iff & \frac{\pi}{2} \left (1- \frac{2E^+}{(E^+)^2 + 1} \right ) \leq \arctan \left ( E^{+} \tan \dfrac{\pi \gamma}{2}\right ) \\
     \iff & \dfrac{2}{\pi}\arctan \left ( \frac{1}{E^+}\tan \left (  \frac{\pi}{2} \left (1- \frac{2E^+}{(E^+)^2 + 1} \right )\right ) \right ) \leq \gamma \\
     \iff & \dfrac{2}{\pi}\arctan \left ( \frac{1}{E^+}\tan \left (  \frac{\pi}{2} \left (\frac{(E^+ - 1)^2}{(E^+)^2 + 1} \right )\right ) \right ) \leq \gamma \\
     \iff  &  \dfrac{2}{\pi}\arctan \left ( \frac{1-\rho}{1+\rho}\tan \left (  \frac{\pi}{2} \left (\dfrac{(2\rho)^2}{(1+\rho)^2 + (1-\rho)^2} \right )\right ) \right ) \leq \gamma 
     \label{eq:phase_transition_curve}
\end{align}

Note that $\gamma$ increases as $\rho$ moves closer to $1$. This means that, as the covariance matrix becomes more structured (or, in terms of the AR(1) model,  the dependence on the past is stronger), PCA needs a growing number of directions to achieve the same generalization error as self-supervised Ridge Regression.

\section{Generalization Error for Toeplitz Matrices for finite $\alpha = \frac{n}{d}$}
\label{appendix_toeplitz_finite_alpha}

Recall that, by \Cref{theorem:det_equivalent_gen_error}, we have the following asymptotic limit for $L(\hat{A})$: 
\begin{align}
    L(\hat{A}) & \to L_1 \left (1 +  \dfrac{\mathrm{df}^{\Sigma}_2(\kappa_{\star}(\lambda)) }{n - \mathrm{df}^{\Sigma}_2(\kappa_{\star}(\lambda))} \right ) 
    \label{eq:gen_error_toeplitz_1}
\end{align}
where 
\begin{align}
        L_1  = \frac{1}{d}\mathrm{Tr} \left [ \bar{D}^2 (\Sigma + \kappa_{\star}(\lambda) I_{d})^{-1} \Sigma (\Sigma + \kappa_{\star}(\lambda) I_{d})^{-1}\right ],\\
        \bar{D} = \bar{D}(\Sigma, \kappa_{\star}(\lambda) ) :=[\mathrm{diag}((\Sigma + \kappa_{\star}(\lambda) I_d)^{-1}))]^{-1},
        \label{eq:gen_error_toeplitz_2}
\end{align}
and $\kappa_{\star}(\lambda)$ satisfies the self-consistent equation: 
\begin{equation}
        \kappa = \lambda + \frac{\kappa}{n} \mathrm{Tr} \left\{\Sigma (  \Sigma + \kappa I_{d})^{-1}\right\}.
\end{equation}

We will first compute the degrees of freedom for any $\kappa >0$: 
\begin{equation}
    \mathrm{df}_2(\kappa) = \mathrm{Tr} \left ( \Sigma^2 (\Sigma + \kappa I_{d})^{-2}\right ) = \sum_{\ell=1}^{d} \dfrac{\lambda_{\ell}^2}{(\lambda_{\ell} + \lambda )^2},
\end{equation}
for $\lambda_{1}, \dots, \lambda_{d}$ eigenvalues of $\Sigma$, 
and for any $\kappa >0$. Recall that, as we saw in the last section, as $d$ grows we can approximate the eigenvalues of $\Sigma$ by: 
\begin{equation}
    E(x) = \dfrac{1 - \rho^2}{1 + \rho^2 - 2\rho \cos(\pi x)}, 0\leq x \leq 1. 
    \label{eq:expression_eigenvalues_toeplitz}
\end{equation}
More precisely, for $\ell \in [d]$, $\lambda_{\ell} \sim E(\frac{\ell}{d} )$. Then, we have: 
\begin{align}
    \frac{1}{d}\mathrm{df}_{2}^{\Sigma}(\kappa) & = \sum_{\ell = 1}^{d}  \dfrac{\lambda_{\ell}^2}{(\lambda_{\ell} + \lambda )^2} \\
    & \sim \int_{0}^{1} \dfrac{E(x)^2}{(E(x) + \kappa)^2} dx. 
\end{align}
Developing this integral we get:
\begin{align}
     \int_{0}^{1} \dfrac{E(x)^2}{(E(x) + \kappa)^2} dx & = \int_0^1 \dfrac{\left ( \dfrac{1 - \rho^2}{1 + \rho^2 - 2\rho \cos(\pi x)}\right )^2}{\left (\dfrac{1 - \rho^2}{1 + \rho^2 - 2\rho \cos(\pi x)} + \kappa  \right )^2}dx \\
     & = \int_0^1 \dfrac{\left ( 1 - \rho^2\right )^2}{\left (1 - \rho^2 + \kappa(1 + \rho^2 - 2\rho \cos(\pi x))  \right )^2} dx \\ 
     & = (1 - \rho^2)^{2} \int_0^1 \dfrac{1}{\left (1-\rho^{2}+ \kappa(1 + \rho^2 ) - 2\kappa \rho \cos(\pi x))  \right )^2} dx.
     \label{eq:intermediate_step_finite_rho}
\end{align}
To compute this last integral, we recall that, by Equation A.31 in \cite{PB20}: 
\begin{equation}
    \int_{0}^{1} \dfrac{dx}{c - d\cos(\pi x) } = \dfrac{1}{\sqrt{c-d} \sqrt{c+d}}. 
\end{equation}
Then: 
\begin{align}
    \int_{0}^{1} \dfrac{dx}{(c - d\cos(\pi x) )^{2}} & = - \int_0^1 \partial_{c} \dfrac{dx}{c - d \cos (\pi x)} \\
    & = - \partial_c\dfrac{1}{\sqrt{c-d} \sqrt{c+d}} \\
    & = \dfrac{\sqrt{\dfrac{c+d}{c-d}} + \sqrt{\dfrac{c-d}{c+d}} }{(c-d)(c+d)} \\
    & = \dfrac{c}{\left ( (c-d)(c+d) \right )^{\frac{3}{2}}}.
\end{align}
Then, if we go back to \Cref{eq:intermediate_step_finite_rho}:
\begin{align}
    \int_{0}^{1} \dfrac{E(x)^2}{(E(x) + \kappa)^2} dx & \sim (1 - \rho^2)^{2} \int_0^1 \dfrac{1}{\left (1-\rho^{2}+ \kappa(1 + \rho^2 ) - 2\kappa \rho \cos(\pi x))  \right )^2} dx \\
    & = (1 - \rho^2)^{2} \dfrac{1-\rho^{2}+ \kappa(1 + \rho^2 )}{ \left ((1-\rho^{2}+ \kappa(1 + \rho^2 ) +2\kappa \rho)(1-\rho^{2}+ \kappa(1 + \rho^2 ) - 2\kappa \rho)\right )^{\frac{3}{2}} } \\
    & = (1 - \rho^2)^2 \dfrac{1 - \rho^2 + \kappa(1 + \rho^2)}{\left ( (1 - \rho^2 + \kappa(1 + \rho)^2) (1 - \rho^2 + \kappa(1 - \rho)^2)\right )^{\frac{3}{2}}}.
\end{align}
Then: 
\begin{equation}
    \mathrm{df}_{2}^{\Sigma}(\kappa) \sim d (1 - \rho^2)^2 \dfrac{1 - \rho^2 + \kappa(1 + \rho^2)}{\left ( (1 - \rho^2 + \kappa(1 + \rho)^2) (1 - \rho^2 + \kappa(1 - \rho)^2)\right )^{\frac{3}{2}}}. 
\end{equation}
Note that, taking $\rho \to 0$, we get $\mathrm{df}_2(\kappa) \to \dfrac{d}{(1 + \kappa)^{2}}$, which corresponds to the isotropic case. Taking $\rho \to 1$, for the numerator the leading order is 
\begin{equation}
     (1 - \rho^2)^2 (1 - \rho^2 + \kappa(1 + \rho^2)) \sim  2\kappa  (1 - \rho^2)^{2},
     \label{eq:leading_order_num}
\end{equation}
while for the denominator is: 
\begin{equation}
\left ( (1 - \rho^2 + \kappa(1 + \rho)^2) (1 - \rho^2 + \kappa(1 - \rho)^2)\right )^{\frac{3}{2}} \sim (4\kappa (1-\rho^{2})  )^{\frac{3}{2}}.
\label{eq:leading_order_den}
\end{equation}
Then, when $\rho \to 1$, by \Cref{eq:leading_order_num} and \Cref{eq:leading_order_den}: 
\begin{equation}
    \mathrm{df}_{2}^{\Sigma}(\kappa) \sim d \dfrac{2\kappa  (1 - \rho^2)^{2}}{(4\kappa (1-\rho^{2})  )^{\frac{3}{2}}} \sim d\dfrac{\sqrt{1- \rho^{2}}}{4\sqrt{\kappa }}.
    \label{eq:dof_2_toeplitz_app}
\end{equation}
Now, as $\lambda \to 0$, \Cref{eq:gen_error_toeplitz_1} and \Cref{eq:gen_error_toeplitz_2} become:
\begin{align}
    L(\hat{A}) & \to L_1 \left (1 +  \dfrac{\mathrm{df}^{\Sigma}_2(\kappa_{\star}(0)) }{n - \mathrm{df}^{\Sigma}_2(\kappa_{\star}(0))} \right ) 
    \label{eq:gen_error_toeplitz_1_1}
\end{align}
where 
\begin{align}
        L_1  = \frac{1}{d}\mathrm{Tr} \left [ \bar{D}^2 (\Sigma + \kappa_{\star}(0) I_{d})^{-1} \Sigma (\Sigma + \kappa_{\star}(0) I_{d})^{-1}\right ],\\
        \bar{D} = \bar{D}(\Sigma, \kappa_{\star}(0) ) :=[\mathrm{diag}((\Sigma + \kappa_{\star}(0) I_d)^{-1}))]^{-1}.
        \label{eq:gen_error_toeplitz_2_2}
\end{align}
For $n < d$, we have $\alpha < 1$ and $\kappa(0) > 0$, but is very close to $0$ when $\alpha  \to 1$. Therefore in this case, $L_1$ is almost constant when $\alpha$ is close to $1$. Then, as $\alpha \to 1$
\begin{equation}
    L(\hat{A}) \sim C ( 1 + \dfrac{1}{\frac{n}{\mathrm{df}_2^{\Sigma}(\kappa(0))}-1}),
\end{equation}
and from \Cref{eq:dof_2_toeplitz_app}:
\begin{equation}
      L(\hat{A}) \sim C ( 1 + \dfrac{1}{\dfrac{4 \alpha \sqrt{\kappa(0)}}{\sqrt{1- \rho^{2}}}-1}). 
\end{equation}
Then, as $\rho \to 1$, the rate at which the generalization error goes to infinity when $\lambda \to 0$ changes, becoming slower.

\section{BBP Transition - Proof of \Cref{prop:BBP_transition}}
\label{appendix_bbp}

In what follows, we study the case where the population covariance takes the form of
\begin{equation}\label{eq:Sigma_spike}
    \Sigma = I_{d} + \theta v v^\top ,
\end{equation}
where $\Sigma_0$ is the ``bulk'' (diagonal) matrix and $v \in \RR^d$ is a unit-norm vector. 

By the same argument we did in Appendix \ref{section_diagonal_ridgeless_limit} for the case where there is no spake, we have the following deterministic equivalent for the resolvent: 
\begin{equation}
    \bar{Q}(\lambda) = (\dfrac{1}{\nu + 1} I_{d} + \lambda I_{d})^{-1},
\end{equation}
and $\nu$ satisfies: 
\begin{equation}
    \nu = \dfrac{1}{n} \mathrm{Tr} \left (\left (\dfrac{1}{\nu +1}  + \lambda I_{d} \right )^{-1} \right ).
    \label{eq:self_consistent_nu}
\end{equation}
Taking $\lambda \to 0$, when $\alpha >1$ we get that $\nu_{\mathrm{isotropic}} = \frac{1}{\alpha - 1}$, and the diagonal matrix $\bar{D}$ becomes: 
\begin{equation}
   \bar{D}_{\mathrm{isotropic}} = \dfrac{1}{\nu + 1} I_{d}.
\end{equation}
Now, if we add a rank-one update $\theta vv^T$ to a diagonal matrix by \Cref{eq:self_consistent_nu} we have that 
\begin{equation}
    \mathrm{\nu}_{\mathrm{spike}} = \mathrm{\nu}_{\mathrm{isotropic}} = \frac{1}{\alpha - 1}. 
\end{equation}
Then, as $\lambda \to 0$, the deterministic equivalent for the resolvent is: 
\begin{align}
     \bar{Q}(\lambda) & = (\dfrac{1}{\nu + 1} \Sigma + \lambda I_{d})^{-1} \\
     & = \dfrac{\alpha}{\alpha-1} (I_{d} + \theta vv^T)^{-1} \\
     & = \dfrac{\alpha}{\alpha-1}(I_{d} - \dfrac{\theta}{1+ \theta} vv^T ).
\end{align}
Then, in this case, for each $\ell \in [d]$:
\begin{align}
    (\bar{D}_{\mathrm{spike}} )_{\ell, \ell} & = \left [ \mathrm{diag} (\bar{Q}(\lambda))\right ]^{-1}_{\ell, \ell} \\
    & = \dfrac{\alpha-1}{\alpha} \left [ \mathrm{diag}(I_{d} - \dfrac{\theta}{1+ \theta} vv^T )^{-1} \right ]^{-1}_{\ell, \ell} \\
    & = \dfrac{\alpha-1}{\alpha} \left ( 1 - \dfrac{\theta}{1+ \theta} v_{\ell} v_{\ell}\right )^{-1}. 
    \label{eq:spiked_det_matrix}
\end{align}

\begin{remark}
    Note that if the vector $v$ is delocalized (with respect to $I_{d}$), then the second term is asymptotically negligible. On the other hand, if it's localized (such as $v = e_{k}$, for some $k \in [d]$, then this term is not negligible. 
\end{remark}

We can now examine the potential spike in our problem. Recall the matrix $\hat{A}$ can be written as: 
\begin{equation}
    \hat A = I - Q [\diag(Q)]^{-1}
\end{equation}
As we saw before, we have that
\begin{equation}
    \| \hat{A} - (I_{d} - Q(\lambda) \bar{D}) \|_\mathrm{op} \to 0,
\end{equation}
as $d$ grows. Therefore, for the purpose of this section we can study $I_{d} - Q(\lambda) \bar{D}$ instead of $\hat{A}$. The eigenvalues of this matrix are in a one-to-one correspondence with those of 
\begin{equation}
  \bar{D}^{\frac{1}{2}}Q\bar{D}^{\frac{1}{2}}. 
\end{equation}
Specifically, if $v$ is an eigenvector of $\bar{D}^{\frac{1}{2}}Q\bar{D}^{\frac{1}{2}}$ with eigenvalue $s$, then $\bar{D}^{-\frac{1}{2}}v$ is an eigenvector of $I_{d} - Q(\lambda) \bar{D}$ with eigenvalue $1-s$. In what follows, we study $\bar{D}^{\frac{1}{2}}Q\bar{D}^{\frac{1}{2}}$. 

We will focus on the case where the spike $v \in \RR^{d}$ is generic, i.e $\| v \|_{\infty} = o_{d}(1) $. Then, in this case, the diagonal matrix $\bar{D}$ is the same as the diagonal case:
\begin{equation}
     (\bar{D}_{\mathrm{spike}}^{\frac{1}{2}} )_{\ell, \ell} =  \sqrt{\dfrac{\alpha-1}{\alpha}} I_{d},
     \label{eq:D_spike}
\end{equation}
and the spike does not modify it.

In the isotropic case, as stated in \Cref{corollary:universality}, the bulk of the spectrum of $\bar{D}^{\frac{1}{2}} Q(\lambda) \bar{D}^{\frac{1}{2}}$ is bounded in the interval: 
\begin{equation}
    I :=\left[\frac{\sqrt \alpha-1}{\sqrt \alpha + 1}, \frac{\sqrt \alpha + 1}{\sqrt \alpha - 1}\right].
\end{equation}
The spike will appear in the left of $I$. By \cref{thm:det_equivalent_A_hat}, a deterministic equivalent for the resolvent 
\begin{equation}
    G(z) := (\bar{D}^{\frac{1}{2}} Q(\lambda) \bar{D}^{\frac{1}{2}} - zI_{d})^{-1}, z \in \C^{+}, 
    \label{eq:resolvent_D_Q_D_spike}
\end{equation}
is given by: 
\begin{equation}
    \mathcal{G}(z) = \left(\bar{D}^{\frac{1}{2}}\left(\frac{1}{1-\chi} \Sigma + \lambda I_d\right)^{-1} \bar{D}^{\frac{1}{2}} - z I_d\right)^{-1},
    \label{eq:G_equiv_spike}
\end{equation}
where $\chi$ solves: 
\begin{align}
     \chi & = \frac{1}{n} \mathrm{Tr}\left(\Sigma( - \frac{1}{1-\chi} \Sigma + \bar{D}/z)^{-1}\right) 
     \label{eq:self_consistent_spike}
\end{align}
Replacing \cref{eq:D_spike} in \cref{eq:G_equiv_spike} we get: 
\begin{align}
    \mathcal{G}(z)
    & = \left (\dfrac{\alpha-1}{\alpha } \left ( \frac{1}{1-\chi} ( I_{d}  + \theta vv^T) \right )^{-1} - zI_{d} \right)^{-1} \\
    & = \dfrac{\alpha}{(\alpha-1)(1-\chi)} \left ( (I_{d} + \theta vv^T )^{-1} - \dfrac{z\alpha}{(\alpha - 1)(1-\chi)} I_{d}\right )^{-1} \\
    & = \dfrac{\alpha}{(\alpha-1)(1-\chi)} \left ( \left ( 1 - \dfrac{z\alpha}{(\alpha - 1)(1-\chi)}\right )I_{d} - \dfrac{\theta}{\theta + 1} vv^T \right )^{-1} 
\end{align}
The spike appears at $z$ such that this matrix is singular. This happens when: 
\begin{align}
    & \left ( 1 - \dfrac{z\alpha}{(\alpha - 1)(1-\chi)}\right ) - \dfrac{\theta}{\theta + 1} = 0  \\
    \iff & \dfrac{z\alpha}{(\alpha - 1)(1-\chi)} = 1 - \dfrac{\theta}{\theta + 1} \\
    \iff & \dfrac{z\alpha}{(\alpha - 1)(1-\chi)} = \dfrac{1}{\theta + 1}. 
    \label{eq:top_eigenvalue_equation}
\end{align}
Note that for $\theta = 0$, the left-limit of the bulk is $\dfrac{\sqrt{\alpha} - 1}{\sqrt{\alpha} +1}$, and in this case, $\chi_{0}$ solves the quadratic equation: 
\begin{equation}
    \chi^{2}_0 +(z-1)\chi + \dfrac{z}{\alpha - 1} =0. 
    \label{eq:self_consistent_isotropic}
\end{equation}
Note that, as $n$ and $d$ grow to infinity, the solution of \Cref{eq:self_consistent_spike}  and \Cref{eq:self_consistent_isotropic} are asymptotically the same, as the spike will not change the normalized trace in the limit. Then, at $z = \dfrac{\sqrt{\alpha} - 1}{\sqrt{\alpha} +1}$ we have: 
\begin{align}
    &\chi_{0}^2 + \left ( \dfrac{\sqrt{\alpha} - 1}{\sqrt{\alpha} +1} -1 \right ) \chi_0 +  \dfrac{ \dfrac{\sqrt{\alpha} - 1}{\sqrt{\alpha} +1}}{\alpha-1} = 0 \\
    \iff & \chi_{0}^2 - \dfrac{2}{\sqrt{\alpha} +1}  \chi_0 +  \dfrac{1}{(\sqrt{\alpha}+1)^2} = 0 \\
    & \implies \chi_0 = \dfrac{1}{\sqrt{\alpha} + 1}. 
\end{align}
Then, \Cref{eq:top_eigenvalue_equation}, for the critical $\theta_{c}$ becomes: 
\begin{align}
    & \dfrac{\dfrac{\sqrt{\alpha} - 1}{\sqrt{\alpha} +1}\alpha}{(\alpha - 1)(1-\dfrac{1}{\sqrt{\alpha} + 1})} = \dfrac{1}{\theta_{c} + 1} \\
    & \implies \dfrac{\sqrt{\alpha}}{(\sqrt{\alpha} + 1)} = \dfrac{1}{\theta_{c} + 1} \\
    & \implies 1 + \theta_c = 1 +\dfrac{1}{\sqrt{\alpha}},
\end{align}
so we conclude $\theta_c  =\dfrac{1}{\sqrt{\alpha}} $. Then, for all $\alpha \geq \frac{1}{\sqrt{\alpha}}$, there will be an outlier in the spectrum.

\end{document}